\def\eqref#1{equation~\ref{#1}}
\def\1{\bm{1}}
\def\eps{{\epsilon}}
\def\ve{{\bm{e}}}
\def\vf{{\bm{f}}}
\def\vg{{\bm{g}}}
\def\vh{{\bm{h}}}
\def\vk{{\bm{k}}}
\def\vm{{\bm{m}}}
\def\vn{{\bm{n}}}
\def\vo{{\bm{o}}}
\def\vp{{\bm{p}}}
\def\vq{{\bm{q}}}
\def\vs{{\bm{s}}}
\def\vu{{\bm{u}}}
\def\vv{{\bm{v}}}
\def\vx{{\bm{x}}}
\def\vy{{\bm{y}}}
\def\vz{{\bm{z}}}
\def\mK{{\bm{K}}}
\def\mQ{{\bm{Q}}}
\def\mV{{\bm{V}}}
\def\mW{{\bm{W}}}
\DeclareMathAlphabet{\mathsfit}{\encodingdefault}{\sfdefault}{m}{sl}
\SetMathAlphabet{\mathsfit}{bold}{\encodingdefault}{\sfdefault}{bx}{n}
\def\gI{{\mathcal{I}}}
\def\gS{{\mathcal{S}}}
\def\gX{{\mathcal{X}}}
\def\sF{{\mathbb{F}}}
\def\sI{{\mathbb{I}}}
\def\sR{{\mathbb{R}}}
\def\sZ{{\mathbb{Z}}}
\newcommand{\att}{\mathrm{Attn}}
\newcommand{\attl}{\mathrm{Attn}_{\text{linear}}}
\theoremstyle{plain}
\newtheorem{theorem}{Theorem}[section]
\newtheorem{proposition}[theorem]{Proposition}
\newtheorem{lemma}[theorem]{Lemma}
\theoremstyle{definition}
\newtheorem{definition}[theorem]{Definition}
\newtheorem{assumption}[theorem]{Assumption}
\theoremstyle{remark}
\icmltitlerunning{Do Efficient Transformers Really Save Computation?}
\begin{document}

\twocolumn[
\icmltitle{Do Efficient Transformers Really Save Computation?}



\icmlsetsymbol{equal}{*}

\begin{icmlauthorlist}
\icmlauthor{Kai Yang}{pkucs}
\icmlauthor{Jan Ackermann}{eth}
\icmlauthor{Zhenyu He}{pkuai}
\icmlauthor{Guhao Feng}{pkucs}
\icmlauthor{Bohang Zhang}{pkuai}
\icmlauthor{Yunzhen Feng}{nyu}
\icmlauthor{Qiwei Ye}{zhiyuan}\\
\icmlauthor{Di He}{pkuai}
\icmlauthor{Liwei Wang}{pkuai,pkuml}

\end{icmlauthorlist}

\icmlaffiliation{pkuai}{National Key Laboratory of General Artificial Intelligence, School of Intelligence Science and Technology, Peking University}
\icmlaffiliation{pkucs}{School of EECS, Peking University}
\icmlaffiliation{pkuml}{Center for Machine Learning Research, Peking University}

\icmlaffiliation{eth}{ETH Zürich}
\icmlaffiliation{nyu}{New York University}
\icmlaffiliation{zhiyuan}{Beijing Academy of Artificial Intelligence}

\icmlcorrespondingauthor{Di He}{dihe@pku.edu.cn}
\icmlcorrespondingauthor{Liwei Wang}{wanglw@pku.edu.cn}
\icmlcorrespondingauthor{Bohang Zhang}{zhangbohang@pku.edu.cn}

\icmlkeywords{Machine Learning, Efficient Transformer, Chain-Of-Thought, ICML}

\vskip 0.3in
]



\printAffiliationsAndNotice{}  

\begin{abstract}
As transformer-based language models are trained on increasingly large datasets and with vast numbers of parameters, finding more efficient alternatives to the standard Transformer has become very valuable. While many efficient Transformers and Transformer alternatives have been proposed, none provide theoretical guarantees that they are a suitable replacement for the standard Transformer. This makes it challenging to identify when to use a specific model and what directions to prioritize for further investigation.
In this paper, we aim to understand the capabilities and limitations of efficient Transformers, specifically the Sparse Transformer and the Linear Transformer. We focus on their reasoning capability as exhibited by Chain-of-Thought (CoT) prompts and follow previous works to model them as Dynamic Programming (DP) problems.
Our results show that while these models are expressive enough to solve general DP tasks, contrary to expectations, they require a model size that scales with the problem size. 
Nonetheless, we identify a class of DP problems for which these models can be more efficient than the standard Transformer.
We confirm our theoretical results through experiments on representative DP tasks, adding to the understanding of efficient Transformers' practical strengths and weaknesses.
\end{abstract}
\vspace{-10pt}

\section{Introduction}
\begin{table*}[t]
    \vspace{-10pt}
    \small
    \centering
    \setlength{\tabcolsep}{4pt}
    \caption{Complexity of the Transformer variants on different tasks.}
    \begin{tabular}{c|ccc}
    \hline
    Architecture & General Reasoning & Arithmetic & Reasoning (locality assumption) \\ \hline
    Standard Transformer  & $\tilde\Theta(L^2)$ & $\tilde\Theta(L^2)$  & $\tilde\Theta(L^2)$ \\
    Sparse Transformer    & $\tilde\Theta(L^2)$ & $\tilde\Theta(L\sqrt L)$ & $\tilde\Theta(L\sqrt L)$ if $m=\mathrm{O}(\sqrt L)$ \\
    Linear Transformer    & $\tilde\Theta(L^2)$ & $\tilde\Omega(L\sqrt L)$ & $\tilde\Omega(mL)$ \\ \hline
    \end{tabular}
    \vspace{-0.4cm}
    \label{tab:complexity}
\end{table*}

The Transformer architecture, as introduced in the seminal work of \citet{vaswani2017attention}, has demonstrated a remarkable performance in numerous applications ranging from natural language processing to computer vision and speech. A significant advancement has recently been made, by scaling up Transformers to build Large Language Models (LLMs) \citep{brown2020language,openai2023gpt4,touvron2023llama2}. These LLMs, exemplified by models like GPT and LLaMa, typically have billions of parameters and are trained on datasets containing trillions of tokens. Given the substantial computational demands, enhancing LLMs' efficiency has become a pivotal research focus in academic and industrial contexts.

The primary computational bottleneck in Transformers arises from the self-attention module, whose complexity scales quadratically with the sequence length. The cost becomes particularly noticeable in tasks that require \emph{long sequence generation}, such as coherent story generation or reasoning with Chain-of-Thought prompts~\citep{wei2022chain,kojima2022large,nye2022show,zhou2023leasttomost}. Given the practical needs, a large body of work seeks to develop \emph{efficient Transformers} that can reduce the quadratic complexity of self-attention \citep{tay2022efficient}, typically by imposing sparsity into architectural design~\citep{child2019generating,beltagy2020Longformer,qiu2020blockwise,kitaev2020reformer,vyas2020fast,roy2021efficient} or by employing low-rank or kernel-based approximations to accelerate the computation~\citep{katharopoulos2020transformers,choromanski2021rethinking,peng2021random,wang2020linformer,luo2021stable}. However, there is generally a lack of understanding about the capabilities of efficient Transformer.

In this work, we take a step towards theoretically understanding the capability of efficient Transformers. In particular, we focus on the models' reasoning ability, a fundamental aspect of human intelligence that plays a vital role in problem-solving, decision-making, and planning. Inspired by a recent study in \citet{feng2023towards}, we model reasoning as a \emph{dynamic programming} (DP) process as it closely resembles the way Chain-of-Thought prompts are executed. The output sequence consists of answers to a series of intermediate steps, each corresponding to solving a subproblem represented by a DP state. \citet{feng2023towards} proved that all reasoning problems fitting within this framework can be solved by a standard autoregressive Transformer of a \emph{constant} size (irrelevant to the problem scale), thus achieving a computational complexity of $\Theta(L^2)$ where $L$ is the length of the output sequence.

In our work, we focus on two representative~\citep{tay2022efficient} and successful~\citep{tay2020long,brown2020language} variants of efficient Transformers: the Sparse Transformer \citep{child2019generating} and the Linear Transformer \citep{katharopoulos2020transformers}. (In the following we will refer to these two as efficient Transformers.)
Our analysis shows that both architectures possess the necessary expressiveness for all problems within this DP framework despite only scaling with $\Theta(L\sqrt L)$ and $\Theta(L)$.
Although this positive result might lead one to believe that we can supplant the standard Transformer with others of lower complexity,
the situation is more complicated: our main result highlights that both Sparse Transformer and Linear Transformer require a \emph{growing} model size with respect to the problem scale $L$, in contrast to the \emph{constant} size of standard Transformers. Specifically, under mild assumptions, we prove that neither architecture can generate the DP solution unless the hidden dimension of the network layers scales as $\tilde\Omega(\sqrt L)$. This scaling results in a total computational complexity of $\tilde\Omega(L^2)$, matching the vanilla Transformer's complexity. But this result introduces a paradox: when tackling general DP problems, the touted efficiency of these ``efficient'' Transformers appears to dissolve, rendering them comparably efficient to standard Transformers.

The above findings about general DP problems raise the question: For which problems are efficient Transformers efficient? To answer this question, we start by studying a fundamental task for reasoning: evaluating arithmetic expressions~\citep{feng2023towards}. Notably, we find that the complexity lower bound can be improved to $\tilde\Omega(L\sqrt L)$ for both architectures, and the lower bound can be attained for the Sparse Transformer with a constant hidden dimension. Motivated by this finding, we then identify a general condition that unlocks the efficiency of efficient Transformers, called the \emph{locality} assumption. Intuitively, this assumption states that each step in the reasoning process only depends on the outcome of \emph{recent} $m$ reasoning steps where $m$ is far smaller than $L$, i.e. $m=o(L)$. Under this assumption, we show that the complexity lower bound can be improved for sparse- and Linear Transformer. We summarize our theoretical results in \cref{tab:complexity}.

We complement our theoretical findings with an extensive set of experiments. Following~\citet{feng2023towards}, we focus on the Arithmetic task and two additional DP problems: the Longest Increasing Subsequence (LIS) and the Edit Distance (ED). Notably, the ED task satisfies the locality assumption, whereas the LIS task does not. For each task, we systematically investigate how variations in the problem size (i.e., the sequence length $L$) and the hidden dimension of the Transformer models impact the models' performance. Empirical evidence confirms that, for both efficient Transformers, the required hidden dimension increases as the problem size grows in most scenarios, while this is not the case for the standard Transformer. Moreover, the dependency between hidden dimension and problem scale is more pronounced in LIS than in ED. These results validate our theory and offer practical insights into the strengths and weaknesses of efficient Transformers.

\textbf{Notations}. We adopt the big-O notation throughout this paper. Specifically, given two functions $f, g : \gX \to [0, \infty)$ where $\gX$ can be any set, we write $f = \mathrm{O}(g)$ if there exists a constant $c > 0$ such that $f(x) \le c g(x)$ for all $x \in \gX$. We also write $f = \Omega(g)$ if $g = \mathrm{O}(f)$, and write $f = \Theta(g)$ if both $f = \mathrm{O}(g)$ and $f = \Omega(g)$ hold. Moreover, given two functions $f, g : \mathbb N_+^d \to [0, \infty)$, we write $f = \tilde{\mathrm{O}}(g)$ if there exist constants $c,k > 0$ such that $f(x) \le c g(x)\log^{k} (x_1\cdots x_d)$ for all $x \in \mathbb N_+^d$. The notations $\tilde\Omega(\cdot)$ and $\tilde\Theta(\cdot)$ can be similarly defined.

\section{Related Work}
Transformers and Large Language Models have received significant attention due to their unprecedented success across various domains. A considerable body of literature has emerged to establish a deeper theoretical understanding of their strengths and constraints.

\textbf{Universal Approximation.} Initially, the theoretical focus was on the capacity of Transformers to approximate diverse functions. \citet{yun2019transformers}  postulated that adequately sized Transformers can universally approximate any continuous sequence-to-sequence functions within certain bounds. A parallel line of work first showed that Transformers with infinite precision are turing-complete~\citep{perez2019turing, perez2021attention} and later \citet{wei2022statistically} established that Transformers with finite precision are approximately turing-complete. Recently, \citet{alberti2023sumformer} proved that Linear Transformers are also universal approximators. Whereas these results approach expressiveness by proving computational capacity, we complement our expressiveness results with complexity lower bounds for practical settings.

\textbf{Formal Language Learning.}
Additionally, the Transformer's expressivity has been studied in the context of formal language learning. \citet{bhattamishra2020ability} constructed a Transformer that detects counter languages, and \citet{yao2021self} show how to detect Dyck languages. \citet{liu2022transformers} show that shallow Transformers can learn finite state automata and simulate them for a number of steps that scale with the model size. Conversely, \citet{hahn2020theoretical} shows that transformers can not learn distributions over languages. 
Other works use classical techniques from circuit complexity~\citep{furst1984parity} to prove that Transformers can simulate classes of circuits~\citep{hao2022formal, merrill2022saturated, merrill2023parallelism}.

\textbf{Measuring Complexity.} \citet{weiss2021thinking} introduce a programming language that maps to learnable Transformer encoders and facilitates the analysis of the complexity of problems with respect to layers and attention heads. \citet{sanford2023representational} introduce a sparse averaging task that requires recurrent and feed-forward networks to be of linear complexity, whereas the Transformer only needs to scale logarithmically. These works are similar to ours in that we establish concrete relationships between model complexity and solvability of the posed problems. But our work deals with autoregressive efficient Transformers equipped with Chain-of-Thought.

\textbf{In-context learning.} A recent approach shows its in-context learning ability~\citep{garg2022what, brown2020language}. Following this, there are also theoretical results that~\citep{dai2023can,von2023transformers,akyurek2022learning} prove it can perform gradient descent. Another line of work shows in-context-learning via induction heads~\citep{elhage2021mathematical, olsson2022context}. Similarly, \citet{feng2023towards} show that auto regressive transformers can learn to perform dynamic programming when equipped with Chain-of-Thought. While in the same setting as Feng et al., we investigate efficient Transformers and present a problem class that encourages efficiency.

\textbf{Efficient Transformer.} Due to the high complexity of the attention layer, many more efficient methods have been proposed. A first series of ideas exploit fixed attention patterns~\citep{child2019generating, beltagy2020Longformer, qiu2020blockwise}. Another line of work approximates the attention as a low rank matrix or with kernels~\citep{katharopoulos2020transformers, wang2020linformer, choromanski2021rethinking} and further works deal with learned patterns~\citep{kitaev2020reformer, tay2020sparse, roy2021efficient}. A last set of works even completely move away from transformers~\citep{sun2307retentive, gu2023mamba}.
Two recent works study when standard attention can be efficient~\citep{alman2023fast} and how to approximate standard attention in linear time~\citep{keles2023computational}.
In contrast to their work, we give theoretical analyses for existing and popular efficient Transformers.

\section{Efficient Transformers}
\label{sec:preliminary}

The autoregressive Transformer, also called the decoder-only Transformer \citep{radford2019language,dai2019transformer}, is a sequence-to-sequence neural network defined as follows. Given an input sequence $\vs$ of length $n$, it first transforms each input token $s_i$ ($i\in[n]$) into a $D$-dimensional vector $\vx^{(0)} = \text{Embed}(s_i)+\vp_i\in \mathbb{R}^D$, where $\text{Embed}(\cdot)$ is the token embedding layer and $\vp_i$ a learnable positional embedding. Then, $M$ Transformer blocks follow, the $l$-th of which has the following form: 
\begin{align}
\label{eq:transformer_1}
    \vh^{(l)}_i&=\vx^{(l-1)}_i+\att^{(l)}(\vx^{(l-1)}_i;\{\vx^{(l-1)}_j:j\in [i]\}),\\
\label{eq:transformer_2}
    \vx^{(l)}_i&=\vh^{(l)}_i+\mathrm{FFN}^{(l)}(\vh^{(l)}_i)
\end{align}
Here, $\att^{(l)}$ and $\mathrm{FFN}^{(l)}$ denote the multi-head self-attention layer and the feed-forward network of the $l$-th Transformer block, respectively:
\begin{align}
\label{eq:attention}
   &\att^{(l)}(\vx, \gS)=\sum_{h=1}^H\left(\mW_\text{O}^{(l,h)}\right)^\top \cdot \mathrm{H}^{(l, h)}(\vx, \gS),\\
   &\mathrm{H}^{(l, h)}(\vx, \gS)\! =\!\frac{\sum_{\vz\in\gS}\exp{\left((\mW_\text{K}^{(l,h)}\vz)^\top(\mW_\text{Q}^{(l,h)}\vx)\right)} \mW_\text{V}^{(l,h)}\vz}{\sum_{\vz\in \gS}\exp{\left((\mW_\text{K}^{(l,h)}\vz)^\top(\mW_\text{Q}^{(l,h)}\vx)\right)}},\\
\label{eq:ffn}
    &\mathrm{FFN}^{(l)}(\vx)=\mW_2^{(l)}\sigma(\mW_1^{(l)}\vx),
\end{align}
where $\mW_\text{Q}^{(l,h)},\mW_\text{K}^{(l,h)},\mW_\text{V}^{(l,h)},\mW_\text{O}^{(l,h)}\in \mathbb R^{\lceil \frac D H\rceil\times D}$ are the query, key, value, output matrices of the $h$-th head in the $l$-th layer, respectively, and $\mW_1^{(l)},\mW_2^{(l)}\in \mathbb R^{D\times D}$ are weight matrices in the FFN. The activation $\sigma$ is chosen as GeLU \citep{hendrycks2016gaussian}, following \cite{radford2019language,devlin2019bert}.
The computed embedding $\vx_n^{(M)}$ will be used to predict the next token $s_{n+1}$, which is then concatenated to the input to continue the sequence generation process. The process stops when an \texttt{End-of-Sentence} token is generated.

Based on \cref{eq:transformer_1,eq:transformer_2,eq:attention,eq:ffn}, it is easy to see that the computational complexity of an autoregressive Transformer is $\Theta(M(L^2 D + L D^2))$, where $L$ is the sequence length. This quadratic dependency on $L$ limits the application of Transformers to long text, in particular for complex reasoning tasks. To battle this, researchers have proposed various efficient Transformers to reduce the complexity. In our work, we investigate the Sparse Transformer and the Linear Transformer. Below, we describe the two architectures which are studied in this paper.

\textbf{Sparse Transformer}. Unlike the standard Transformer where each token $\vx^{(l)}$ can attend to all previous positions $\{\vx^{(l)}_j:j\in [i]\}$ (see \cref{eq:transformer_1}), in a Sparse Transformer it only attends to a \emph{subset} of previous tokens $\{\vx^{(l)}_j:j\in \gI_i\}$. In this paper, we study a standard design paradigm proposed in \citet{child2019generating}, which employs a block-wise pattern as shown in the following:
\begin{align}
\label{eq:sparseattention}
    \gI_i=\{j:i-kB< j\le i\}\cup\{j:j-1\bmod B\ge B-c\}
\end{align}
where $B$ is called the block size and $k,c$ are constant integers. When $B=\Theta(\sqrt L)$, the Sparse Transformer achieves a minimal complexity of $\Theta(M(L\sqrt L D + L D^2))$. We note that GPT-3 adopted the above design paradigm \citep{brown2020language}.

\textbf{Linear Transformer}. Another line of work proposed to accelerate the attention computation (\cref{eq:attention}) using kernel-based approximations. A representative approach is the Linear Transformer \citep{katharopoulos2020transformers}, which approximates  $\mathrm{Attn}^{(l)}$ with the following formula:
\begin{align}
\label{eq:linearattention}
    &\attl^{(l)}(\vx, \gS)=\sum_{h=1}^H\left(\mW_\text{O}^{(l,h)}\right)^\top \cdot  H_{\text{linear}}^{(l, h)}(\vx, \gS),\\
     &H_{\text{linear}}^{(l, h)}(\vx, \gS) = \frac{\sum_{\vz\in \gS}\phi(\mW_\text{K}^{(l,h)}\vz)^\top\phi(\mW_\text{Q}^{(l,h)}\vx)(\mW_\text{V}^{(l,h)}\vz) }{\sum_{\vz\in \gS}\phi(\mW_\text{K}^{(l,h)}\vz)^\top\phi(\mW_\text{Q}^{(l,h)}\vx)}
\end{align}
where they choose $\phi(\vx)=\mathrm{elu}(\vx)+\mathbf 1$. The above computation can be accelerated by rearranging the order of computation so that the intermediate results $\sum_{\vz\in \gS}(\mW_\text{V}^{(l,h)}\vz)\phi(\mW_\text{K}^{(l,h)}\vz)^\top$ and $\sum_{\vz\in \gS}\phi(\mW_\text{K}^{(l,h)}\vz)^\top$ associated with different $\gS$ can be jointly computed using prefix sum, finally yielding a complexity of $\Theta(ML D^2)$ which is linear in $L$.

\section{Expressiveness of Efficient Transformers in Reasoning Tasks}
\label{sec:DP}
Reasoning constitutes a fundamental aspect of human intelligence and plays a vital role in problem-solving, decision-making, and planning. Recently, Transformer-based LLMs have demonstrated remarkable reasoning abilities \citep{openai2023gpt4,touvron2023llama2}. This has sparked a series of studies aimed at theoretically understanding how powerful these models are. In particular, \citet{feng2023towards} recently revealed that autoregressive Transformers are capable of solving a general class of reasoning problems formalized as Dynamic Programming (DP).  In this section, we extend this finding by investigating how things change when moving to various types of efficient Transformers.

\subsection{Problem formulation}

Dynamic programming decomposes a complex reasoning problem into a sequence of reasoning steps, each of which corresponds to a subproblem and is called a DP state. Different subproblems depend on each other because they can be efficiently solved based on the answers of previously solved subproblems. Formally, denoting by $\mathsf{dp}(i)$ the answer of subproblem $i$, then the relation between subproblems can be characterized using a transition function:
\begin{equation}
\label{eq:dp_transition}
    \mathsf{dp}(i)=f\left(i, \mathsf{dp}(h_1(i)),\cdots,\mathsf{dp}(h_K(i)), s_{g_1(i)},\cdots,s_{g_J(i)}\right)
\end{equation}
where $\vs$ is the input sequence, and $f$, $g_1,\cdots,g_J$, $h_1,\cdots,h_K$ are functions that depends on the problem. In other words, the answer of each subproblem is fully determined by the answers of a finite number of previous subproblems plus a finite number of input tokens. Based on \cref{eq:dp_transition}, we can sequentially solve all subproblems one by one. After solving all subproblems, the final answer can be computed by $u(\mathsf{dp}(i_N))$, where $i_N$ is the last DP state and $u$ is a problem-dependent function. By defining our problem so generally, we also cover CoT problems. 
We assume that the $f$, $\vg$, $\vh$ and $u$ above can be approximated by an MLP with GeLU activation of constant size. We also assume that during the CoT generation process, the next state can be obtained by an MLP where the input is the current state. One can refer to Appendix \ref{appendix:proof_secDP} for a formal description. We argue that these assumptions are mild and that they have been used in previous work~\cite{feng2023towards}.

In our subsequent analysis, without loss of generality, we assume that each input element $s_j$ is an integer, and each state $i$, DP value $\mathsf{dp}(i)$, and the final answer can all be represented by vectors of integer elements. The domain of these integers can grow polynomially with respect to the length $L$.

\textbf{Output format}. Following \citet{feng2023towards}, given a DP task and an input seuqence $\vs$, an autoregressive Transformer generates the answer with all intermediate steps in the following form:
\begin{align}
\label{eq:dp_format}
&(s_1,\mathbf 0,\mathbf 0,\mathbf 0)\!\quad\! \ldots\quad (s_n,\mathbf 0,\mathbf 0,\mathbf 0)\!\quad\! | \nonumber\\
&(0,i_1, \mathsf{dp}(i_1),\mathbf 0) \!\quad\! \ldots \!\quad\! (0,i_N, \mathsf{dp}(i_N),\mathbf 0)\nonumber\\  
&(0,\mathbf 0,\mathbf 0, u(\mathsf{dp}(i_N)))
\end{align}
Here, the subsequence ending at the special token ``$|$'' is the input to the Transformer, and the remainder will be autoregressively generated. The output at each position is split into four parts that store the input, state, DP value, and final answer, respectively. We denote by $i_1,\cdots,i_N$ the sequence of DP states representing all subproblems in order. We consider the regression setting where the output at each position is simply obtained from the embedding of the last Transformer layer by projecting each dimension to the nearest integer. Similarly, each generated output directly serves as the input of the next position (without using a token embedding layer).

\textbf{Log-precision Transformers}. We adopt a realistic and widely-used setting where all internal neurons in the Transformer can only store floating-point numbers within a finite $O(\log L)$ bit precision \citep{merrill2023parallelism,liu2023transformers,feng2023towards}, and all basic floating-point computations are truncated, as implemented on a computer. Log-precision implies that each neuron has a limited capacity for computation and information storage. Nevertheless, they remain powerful as they can represent a large range of values (i.e., polynomial in the sequence length $L$), recovering important quantities like positional embedding.

Under the above assumptions, \citet{feng2023towards} proved the following main result for the standard transformer:
\begin{theorem}[informal]
\label{thm:baseline}
    Consider any DP problem defined above that satisfies the assumptions from \ref{appendix:proof_secDP}. For any integer $n>0$, there exists a log-precision autoregressive Transformer with a constant depth $M$, a constant hidden dimension $D$, and a constant number of attention heads $H$ (independent of $n$) that can generate the correct output for all inputs $\vs$ of length $n$.
\end{theorem}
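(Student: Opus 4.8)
The plan is to construct, for each input length $n$, an explicit log-precision Transformer of constant depth, width, and head count that simulates the DP recursion step by step while emitting the Chain-of-Thought layout of \cref{eq:dp_format}. The guiding principle is that one autoregressive generation step should implement exactly one application of the transition in \cref{eq:dp_transition}: given the most recently produced state $i_k$, the network should (i) fetch the answers $\mathsf{dp}(h_1(i_k)),\dots,\mathsf{dp}(h_K(i_k))$ of the previously solved subproblems together with the input entries $s_{g_1(i_k)},\dots,s_{g_J(i_k)}$, (ii) evaluate $f$ on them to obtain $\mathsf{dp}(i_k)$, and (iii) compute the next state $i_{k+1}$ (or, at the end, emit $u(\mathsf{dp}(i_N))$). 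Since each output coordinate is rounded to the nearest integer before being fed back, every generation step begins from exact integer inputs, so errors cannot accumulate across the $L$ steps; it therefore suffices to keep the error within a single step below $1/2$.

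Two constant-size primitives do all the work. The first is function evaluation by the FFN: by the standing assumption that $f$, the $g_j$, the $h_k$, and $u$ are each realizable by a constant-size GeLU MLP, the FFN sublayers of \cref{eq:ffn} can compute the next-state index and the transition value once the required arguments sit in a single token's hidden vector. The second, and the real crux, is retrieval by attention: I need a head that, given a query encoding a target position $p=h_k(i)$ (or $g_j(i)$), returns the quantity stored at that position. I would encode positions in the embedding so that a single head forms the logit $-(p-j)^2$ (up to affine terms) against each candidate position $j$; this logit is uniquely maximized at $j=p$. Scaling the query/key maps by a temperature $\tau=\Theta(\log L)$ sharpens the softmax into an approximate hard selection, and the value matrix copies out the stored value. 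Because the causal mask of \cref{eq:transformer_1} restricts attention to $j\in[i]$ and the format of \cref{eq:dp_format} guarantees that $h_k(i)$ and $g_j(i)$ precede $i$, causal attention is precisely what is required.

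The main obstacle is making this retrieval correct under the log-precision constraint. With up to $L$ candidate positions, the softmax leaks mass onto the $L-1$ non-matching keys, and I must show the aggregate contamination stays below the rounding threshold. The gap between the top logit and the runner-up is $\Omega(1)$ after scaling by $\tau$ (distinct integer positions differ by at least one), so the total off-target weight is $O(L e^{-\tau})$, which is driven below any fixed constant by taking $\tau=c\log L$ with $c$ a large enough constant; since the stored integers are polynomially bounded, the weighted error is $o(1)<1/2$ and the subsequent rounding recovers the exact value. Crucially, $\tau=\Theta(\log L)$ and polynomially large positions and values are all representable in $O(\log L)$ bits, so the construction respects the log-precision setting.

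Finally, I would wire these blocks together: dedicate a constant number of heads to retrieving the $K$ predecessors and the $J$ input entries in parallel, use the FFN to apply $f$ and to produce the next state, and add a small controller (a few extra coordinates plus an FFN) that detects the terminal state $i_N$ and switches the emitted token from $(0,i_k,\mathsf{dp}(i_k),\mathbf 0)$ to the final $(0,\mathbf 0,\mathbf 0,u(\mathsf{dp}(i_N)))$, producing the \texttt{End-of-Sentence} token thereafter. Since $K$, $J$, the depth, the width, and the number of heads are all fixed by the problem and independent of $n$, this yields the claimed constant-size Transformer, and the per-step error bound together with the no-accumulation argument gives correctness for every input $\vs$ of length $n$.
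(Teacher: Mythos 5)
Your proposal is correct and takes essentially the same route as the construction this paper relies on for \cref{thm:baseline} (namely that of \citet{feng2023towards}, whose supporting lemmas the paper restates in \cref{sec:lemmas} and mirrors in its proof of \cref{thm:general_upper_bound}): constant-size GeLU MLPs realizing $f$, $\vg$, $\vh$, $u$ and the state transition, attention heads doing position-based retrieval (the COPY operation) with a softmax sharpened so that off-target leakage stays below the rounding threshold, per-step rounding to integers to prevent error accumulation across the $L$ steps, and a terminal-state flag that switches the emitted token to $u(\mathsf{dp}(i_N))$. No substantive differences to report.
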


\subsection{Main results}

Now, we investigate whether the efficient Transformers defined in \cref{sec:preliminary} are as powerful as the standard Transformer in solving DP problems. In particular, we establish a similar result to \cref{thm:baseline}, which we present in the theorem below:

\begin{theorem}
\label{thm:general_upper_bound}
    Consider any DP problem satisfying the same condition as in \cref{thm:baseline}. Given any integer $n>0$, let $L$ be the length of the output sequence when the input sequence length is $n$. Then, for both (log-precision) Sparse Transformer with block size $B=\Theta(\sqrt L)$ and Linear Transformer, there is a model with a constant depth $M$, a constant number of attention heads $H$, and a hidden dimension $D=\mathrm{O}(\sqrt L)$ that can generate the correct output for all inputs $\vs$ of length $n$.
\end{theorem}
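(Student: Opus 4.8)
The plan is to reduce the statement to \cref{thm:baseline}: we reuse the constant-size standard-Transformer construction of \citet{feng2023towards} verbatim for everything that is \emph{local}, and only reimplement the one operation that genuinely requires dense attention, namely retrieving the constant number of previously computed values $\mathsf{dp}(h_1(i)),\dots,\mathsf{dp}(h_K(i))$ and input symbols $s_{g_1(i)},\dots,s_{g_J(i)}$ that feed the transition in \cref{eq:dp_transition}. Concretely, the feed-forward blocks that evaluate $f,\vg,\vh,u$, the copying of the current state into the output slots of \cref{eq:dp_format}, and the generation of the next state are all pointwise (or depend only on the current position) and therefore transfer unchanged to both efficient architectures. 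Hence the whole theorem rests on a single lemma: \emph{using the efficient attention mechanism one can, at every query position $i$, read off the value stored at an arbitrary earlier position $p=h_k(i)$}, and the only price we are allowed to pay is a hidden dimension $D=\mathrm{O}(\sqrt L)$.

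For the Sparse Transformer I would instantiate the pattern in \cref{eq:sparseattention} with $B=\Theta(\sqrt L)$, so that the sequence splits into $\Theta(\sqrt L)$ blocks of width $\Theta(\sqrt L)$, the first term of $\gI_i$ is a local window spanning the last $k$ blocks, and the second term makes the last $c$ tokens of \emph{every} block globally visible. Retrieval is then a two-hop routing. First, a within-block aggregation step writes all $\Theta(\sqrt L)$ DP values of a block into that block's end-of-block representative token (reachable through the local window); because a block contains $\Theta(\sqrt L)$ values, this representative must hold a $\Theta(\sqrt L)$-dimensional summary, which is exactly where the bound $D=\mathrm{O}(\sqrt L)$ enters. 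I would store the summary as $\sum_{p} \mathsf{dp}(j_p)\,\ve_p$ for an orthonormal family $\{\ve_p\}$ of offset encodings of dimension $\Theta(\sqrt L)$. Second, the query at position $i$ uses content-based (softmax) attention over the globally visible representatives to select the block containing $p=h_k(i)$ — the block index of $p$ being computable by the FFN from $i$ — retrieves that summary, and an FFN reads off coordinate $\ve_{\mathrm{offset}(p)}$ to recover $\mathsf{dp}(p)$. Each stage is a single sparse-attention layer plus a constant-size FFN, uses a constant number of heads, and keeps all quantities in polynomial range, hence within $\mathrm{O}(\log L)$ bits.

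The Linear Transformer is where I expect the real difficulty. Here attention reads out from the additively accumulated state $\mS_i=\sum_{j\le i}(\mW_\mathrm{V}\vz_j)\,\phi(\mW_\mathrm{K}\vz_j)^\top$ against the query feature $\phi(\mW_\mathrm{Q}\vx_i)$, as in \cref{eq:linearattention}; unlike softmax it cannot form a sharp gate, and exact single-position retrieval out of $L$ positions would force $\{\phi(\mW_\mathrm{K}\vz_j)\}_j$ to be linearly independent and hence a feature dimension $\Omega(L)$. To stay at $\mathrm{O}(\sqrt L)$ I would again use the $(\mathrm{block},\mathrm{offset})$ factorization of positions, but resolve the two coordinates in separate stages so that any single inner product only has to discriminate a $\Theta(\sqrt L)$-size alphabet: one stage isolates the target block (realizing a per-block state by combining heads whose accumulations are keyed on the block index, effectively differencing the prefix sum at the block boundary), and a second stage reads the offset with a one-hot feature map of dimension $\Theta(\sqrt L)$.

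The two obstacles I would discharge carefully are (i) emulating this block-localized accumulation with the strictly additive, never-resetting prefix-sum state of linear attention, which I plan to do by encoding block boundaries as markers and taking differences of states across heads and layers; and (ii) controlling the denominator of \cref{eq:linearattention}: since $\phi=\mathrm{elu}(\cdot)+\mathbf{1}$ is strictly positive, all $\Theta(L)$ non-target positions contribute mass to both sums, so I must choose query and key features whose matching inner product exceeds the aggregated mismatch mass by a polynomial factor, and then round the readout to the nearest integer to eliminate the residual error. Granting these, both constructions use constant depth $M$, a constant number of heads $H$, and $D=\mathrm{O}(\sqrt L)$, and reproduce the output format of \cref{eq:dp_format} exactly, which establishes the theorem.
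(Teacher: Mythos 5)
Your overall decomposition --- treat $f,\vg,\vh,u$ and next-state generation as pointwise MLP blocks that transfer unchanged, and isolate the retrieval of $\mathsf{dp}(h_k(i))$ and $s_{g_j(i)}$ as the one operation needing attention --- is exactly the structure of the paper's proof (its Blocks 1, 2 and 4 are attention-free, and all the work is in Block 3). Your Sparse Transformer construction also matches the paper's essentially verbatim: the paper's sparse COPY lemma (\cref{lemma:sparse_copy}) is precisely your two-hop scheme, namely an MLP that spreads each value into offset-indexed slots, uniform local attention that sums these into the block's end-of-block (globally visible) token, content-based attention on the block index to fetch the right representative, and an MLP that selects the offset coordinate; the $\Theta(\sqrt L)$-dimensional summary $\sum_p \mathsf{dp}(j_p)\ve_p$ you describe is where $D=\mathrm{O}(\sqrt L)$ enters in both your argument and theirs.

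The gap is in the Linear Transformer. Your stage-1 mechanism --- ``realizing a per-block state \ldots effectively differencing the prefix sum at the block boundary'' --- is not implementable as stated: in causal linear attention a query at position $i$ can only contract against the \emph{current} accumulated state $\sum_{j\le i}\phi(\mW_\text{K}\vx_j)(\mW_\text{V}\vx_j)^\top$; no combination of heads or layers lets position $i$ read the state as it stood at an earlier boundary position $bW$, so the difference $\mS_{bW}-\mS_{(b-1)W}$ is not available. The paper needs no differencing and never resets anything. Instead (its AGG lemma, \cref{lemma:linear transformer aggregation}) it keys every position with a near-one-hot encoding of its \emph{own} block index, scaled by $\mu=\Theta(\log(Mm/\epsilon))$ so that after $\phi=\mathrm{elu}+\mathbf 1$ mismatching inner products are exponentially suppressed; the readout at a query carrying the one-hot target-block index is then automatically an $\epsilon$-accurate \emph{mean over the target block}. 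The second essential ingredient, which your linear sketch omits, is that the \emph{values} must already be spread by an MLP into offset-indexed slots, $\vv_j\approx\mathsf{dp}(j)\,\ve_{\mathrm{offset}(j)}\in\mathbb R^{\Theta(\sqrt L)}$, \emph{before} this attention: otherwise the block mean collapses all DP values in the block to a single number and no later stage can undo that. Your phrase ``reads the offset with a one-hot feature map'' is ambiguous between key features (which cannot be combined with block keying at dimension $\mathrm{O}(\sqrt L)$ --- the joint selector is the tensor product, which is $\Omega(L)$-dimensional, as you yourself observe) and value spreading (which is what works). Your point (ii) on the strictly positive denominator is handled in the paper exactly as you propose (polynomial separation via logarithmic-size parameters, then rounding), and the paper additionally patches the partial-block case --- when the target block is not yet complete the mean is over fewer than $B$ terms, so the rescaling factor must be chosen by a conditional selection (\cref{lemma:MLP_select}). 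Once your stage 1 is replaced by block-keyed soft concentration over offset-spread values, your outline coincides with the paper's proof.
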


The proof of \cref{thm:general_upper_bound} is non-trivial and is deferred to \cref{appendix:upper_dp}. In the proof, we give explicit constructions of parameters for sparse/linear attention and FFN layers, showing that these layers can implement a set of basic operations presented in \cref{sec:lemmas}. We then use these operations as building blocks to form a complete model that solves the DP task.

\cref{thm:general_upper_bound} suggests that replacing the standard self-attention with these efficient variants does not restrict the model's expressiveness in reasoning.
However, when we compare the total complexity $\mathrm{O}(L^2)$ of the derived models, we can see that it is the same as for the standard attention, which only needed $d=\mathrm{O}(1)$.

\textbf{Is the increase in model size necessary?} \cref{thm:general_upper_bound} only gives a complexity \emph{upper bound} for Sparse/Linear Transformers. It remains to show whether the bound is \emph{tight} and what the \emph{lower bound} of the required hidden dimension is. To answer this question, we will focus on a restricted class of DP problems which we call \emph{regular} DP problems:

\begin{definition}
\label{def:regular}
    A DP problem is called \emph{regular} if for any two different input sequences $\vs^{(1)}$ and $\vs^{(2)}$ (of the same length) and a fixed but arbitrary model that solves the DP problem, there is a state $i$ such that $\mathsf{dp}(i)$ is different between input $\vs^{(1)}$ and $\vs^{(2)}$.
\end{definition}

We remark that regularity is a weak assumption, which only states that the reasoning process (not the final answer) should not be exactly the same when the input changes. For example, it excludes the case where the whole DP process does not depend on a specific input element $s_j$. Equipped with the regularity assumption, we present a central impossibility result:

\begin{theorem}
\label{thm:general_lower_bound}
    Consider any regular DP problem satisfying the same condition as in \cref{thm:baseline}. Assume that the output sequence length $L$ is proportional to the input sequence length $n$, i.e., $L=\Theta(n)$. Then, given a sufficiently large $n$, for both (log-precision) Sparse Transformer with block size $B=\Theta(\sqrt L)$ and Linear Transformer, a model with a constant depth $M$ and a constant number of attention heads $H$ can generate the correct output for all inputs $\vs$ of length $n$ only if the hidden dimension $D=\tilde\Omega(\sqrt L)$.
\end{theorem}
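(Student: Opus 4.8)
The plan is to prove the contrapositive through a one-way communication (equivalently, counting) argument across a single cut of the generated sequence. I place the cut at the boundary between the input block and the reasoning block, i.e.\ immediately after the separator token ``$|$'' at position $n$, calling the positions $\le n$ \emph{Alice's side} and the positions $>n$ \emph{Bob's side}. Because attention is causal (effectively $\gI_i\cap[i]$), every position $j\le n$ attends only to positions in $\{1,\dots,n\}$, so Alice's side is \emph{self-contained}: given $\vs$, Alice can compute the hidden states of all her positions at every layer $l=1,\dots,M$ and every head, with no information from Bob. Bob must output the entire trace $\mathsf{dp}(i_1),\dots,\mathsf{dp}(i_N)$ and the answer $u(\mathsf{dp}(i_N))$, and the only information he receives about $\vs$ is what crosses the cut through the attention mechanism. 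I then (i) lower bound the number of bits Bob provably needs and (ii) upper bound the number of bits each architecture can push across the cut, and match the two.

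\textbf{Lower bound via regularity.} Here \cref{def:regular} is exactly what is needed: regularity says the map $\vs\mapsto(\mathsf{dp}(i))_i$ is \emph{injective}, since two distinct same-length inputs always yield traces differing in some state. Hence the at least $2^n$ distinct inputs of length $n$ give at least $2^n$ distinct correct traces that Bob must reproduce. As Bob's output is determined by the cross-cut message alone (his side carries no input), two inputs inducing the same message would force identical Bob-outputs, contradicting injectivity for a correct model. Therefore the message is injective in $\vs$, takes at least $2^n$ values, and so carries $\Omega(n)=\Omega(L)=\tilde\Omega(L)$ bits, using $L=\Theta(n)$.

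\textbf{Upper bound on the interface.} For the Sparse Transformer, a position $i>n$ reaches a before-cut position only through (a) the sliding window $\{j:i-kB<j\le i\}$, which over all $i$ touches only the $O(kB)=O(\sqrt L)$ before-cut positions within $kB$ of the cut, or (b) the global summary tokens $\{j:(j-1)\bmod B\ge B-c\}$, of which $O(n/B)=O(\sqrt L)$ lie before the cut. Thus at most $O(\sqrt L)$ before-cut positions are ever read; revealing their states at all $M$ layers (each a $D$-dimensional, $O(\log L)$-bit vector) suffices for Bob to simulate his side, so the message is $\tilde O(\sqrt L\,D)$ bits. For the Linear Transformer, every $i>n$ uses the prefix sums $\sum_{j\le i}\phi(\mW_\text{K}\vx_j)(\mW_\text{V}\vx_j)^\top$ and $\sum_{j\le i}\phi(\mW_\text{K}\vx_j)$; splitting each at the cut, the before-cut part is one fixed $\lceil D/H\rceil\times\lceil D/H\rceil$ matrix plus a $\lceil D/H\rceil$ vector per head and layer, independent of $i$, costing only $\tilde O(D^2)$ bits over the constant number of heads and layers. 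Combining the two bounds yields $\tilde O(\sqrt L\,D)\ge\tilde\Omega(L)$ for the Sparse Transformer and $\tilde O(D^2)\ge\tilde\Omega(L)$ for the Linear Transformer, and each inequality forces $D=\tilde\Omega(\sqrt L)$.

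\textbf{Main obstacle.} The delicate part is making the interface bound rigorous under \emph{multi-layer} and \emph{autoregressive} computation. One must check that revealing the interface positions' states at \emph{every} layer (not just the first) lets Bob reconstruct his side layer by layer, which holds precisely because Alice's side is causally self-contained and can compute all those states itself. For the Linear Transformer one must confirm that the before-cut contribution to each prefix sum is fully captured by the fixed, bounded-size state and adds nothing further as $i$ grows. Care is also needed for the boundary positions whose sliding windows straddle the cut, and in tracking the $O(\log L)$ precision so that the hidden $\tilde O(\cdot)$ factors stay honest; this is where the constants $M$, $H$, $k$, $c$ enter and must remain constant.
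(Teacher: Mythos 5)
Your proposal is correct and is essentially the paper's own argument: the cross-cut ``message'' you define is exactly the paper's information \emph{bottleneck} (for the Sparse Transformer, the $\mathrm{O}(\sqrt L)$ input positions reachable via the sliding window and global tokens, with states revealed at every layer; for the Linear Transformer, the before-cut prefix-sum states of size $\tilde{\mathrm{O}}(D^2)$), and your injectivity-of-the-message step is the paper's pigeonhole argument combined with \cref{def:regular}. The Alice/Bob communication framing is only a repackaging, so no substantive difference in approach or in the resulting bounds.
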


As presented in \cref{appendix:lower_dp}, the proof of \cref{thm:general_lower_bound} is based on the following finding: there are inherent information \emph{bottlenecks} in both types of efficient Transformers. Here, the bottleneck is a set of neurons whose values completely determine all ensuing outputs from a specific position. Due to the log-precision assumption, these neurons only store a limited amount of information. Hence it is only possible to recover all subsequent outputs when the hidden dimension is $\tilde\Omega(\sqrt L)$ --- otherwise, the Pigeonhole principle will imply that there are two different input sequences that share the same set of neuron values, yielding the contradiction by \cref{def:regular}.

\section{When Can Efficient Transformers Really Save Computation?}
\label{sec:arith}
In the previous section, we showed the surprising result that these efficient Transformers may not lead to reduced complexity compared to the standard Transformers in general reasoning tasks. However, one should not hastily jump to the conclusion that these efficient Transformers are always inefficient. In this section, we will discuss in which situations efficient Transformers are efficient.

\subsection{A motivating example: evaluating arithmetic expressions}

We begin by investigating a less complex task proposed by \citet{feng2023towards}, called the arithmetic evaluation. The task is to evaluate an arithmetic expression like ``$2\times (1+5)\div 4$'' and the complete output sequence looks like ``$2\times (1+5)\div 4=2\times 6\div 4=12\div 4=3$''. \citet{feng2023towards} proved that a standard Transformer of a constant size can solve this task. Surprisingly, we find that a similar result also holds for a constant-size Sparse Transformer, as shown in the proposition below:

\begin{proposition}[informal]
\label{thm:sparse_arithmetic}
    For any integer $n$, there exists a log-precision Sparse Transformer with block size $B=\Theta(\sqrt{L})$, 5 layers, 5 attention heads per layer, a constant hidden dimension that can generate the correct output for the arithmetic evaluation task for all expressions of length no more than $n$.
\end{proposition}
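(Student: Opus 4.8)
The plan is to exploit a structural feature of the arithmetic task that is absent in the general DP setting of \cref{thm:general_lower_bound}: every line of the chain-of-thought is short relative to the whole output. For an input expression of length $n$ the number of operations is $\mathrm{O}(n)$, so the CoT consists of $\mathrm{O}(n)$ lines, each obtained from the previous one by collapsing a single subexpression and hence itself of length $\mathrm{O}(n)$. Since the total output length is $L$, this gives $L = \mathrm{O}(n^2)$, so that $n = \Theta(\sqrt L)$ and, crucially, \emph{every individual line has length} $\mathrm{O}(\sqrt L)$. Note this is exactly why the lower bound does not apply: the regularity argument there relies on $L = \Theta(n)$, whereas here $L = \Theta(n^2)$. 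First I would make the per-line length bound precise and uniform over all lines, including the input line.

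Given this, I would choose the block size $B = \Theta(\sqrt L)$ together with a constant $k$ in \cref{eq:sparseattention} so that the local window $\{j : i - kB < j \le i\}$ is guaranteed to span the entire current partial line together with the whole preceding line, i.e. roughly twice the maximum line length. This is possible precisely because both the line length and $B$ are $\Theta(\sqrt L)$, so a constant $k$ suffices. The payoff is that every attention lookup performed by the standard-Transformer construction of \citet{feng2023towards} for this task is in fact \emph{local}: locating the operator to be evaluated, computing parenthesis depth and operator precedence, and reading the two operands all happen within the current line, while copying the unaffected tokens reads from the immediately preceding line. All of these lie within distance $\mathrm{O}(\sqrt L) \le kB$ of the querying token, so they remain reachable through the local part of the sparse mask alone; the global (strided) tokens are not even needed, which is the essential difference from the general case where long-range copying forces the $\tilde\Omega(\sqrt L)$ bottleneck.

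With locality established, the construction follows the layer-by-layer template for the standard Transformer, restricting each of the (constant number of) heads to the local window. I would distribute the subtasks across the five layers and five heads: (i) computing within-line parenthesis depth and precedence by a masked prefix-style aggregation over the current line; (ii) identifying the position $p$ of the leftmost operation eligible for evaluation under precedence and parenthesization, and reading its operator and two operands; (iii) performing the arithmetic via the FFN; and (iv) for each position of the line being generated, deciding whether it lies before, at, or after $p$ and copying the appropriate source token from the previous line. Each step needs only constant hidden dimension, since the only quantities stored per token are a bounded number of integers (positions, depth, operands, result) that fit in $\mathrm{O}(\log L)$ bits, and the position-matching copy is realized by the standard positional-attention gadget from \cref{sec:lemmas}, now confined to the window.

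The step I expect to be the main obstacle is the alignment of consecutive lines under the local mask. Because evaluating one subexpression shortens the line by a variable (though bounded) amount, the offset between a token in the new line and its source in the previous line is not constant: it depends on whether the token sits before or after the collapse point $p$. I would handle this by having each token first recover $p$ and the induced length change from the previous line (both within the window), then compute its target source index by log-precision integer arithmetic in the FFN, and finally retrieve that token with a positional-matching attention head. The technical care lies in verifying that this target index is always within $kB$ of the query and that the ``$|$'' separator and line-boundary markers used to delimit the current line from the previous one also fall inside the window; both verifications reduce once more to the $\mathrm{O}(\sqrt L)$ per-line length bound established in the first step.
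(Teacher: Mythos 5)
Your proposal is correct and takes essentially the same route as the paper: the paper also treats arithmetic evaluation as a local problem (each ``=''-separated line has length $\mathrm{O}(n)=\mathrm{O}(\sqrt{L})$ and depends only on the preceding line), so that with block size $B=\Theta(\sqrt{L})$ the sparse mask removes nothing the constant-size standard-Transformer construction of \citet{feng2023towards} ever attends to, and that construction carries over unchanged --- exactly the reduction the paper formalizes in its locality result (\cref{thm:sparse_locality}). Your extra bookkeeping (the $L=\Theta(n^2)$ accounting, and computing the variable cross-line copy offset before a positional-matching lookup) just spells out details the paper delegates to the cited construction.
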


Owing to the constant dimension, the complexity of the arithmetic evaluation task can be reduced to $\mathrm{O}(L\sqrt L)$ by using Sparse Transformers. We next turn to Linear Transformers. While we do not give explicit constructions of model parameters that can solve the arithmetic task, one can still derive a lower bound by using a similar analysis as in \cref{thm:general_lower_bound}:

\begin{proposition}[informal]
\label{thm:linear_arithmetic}
    For any integer $n$, a log-precision Linear Transformer with a constant depth $M$ and a constant number of heads $H$ can generate the correct output for the arithmetic evaluation task for all expressions of length no more than $n$ only if the hidden dimension $D=\tilde\Omega(\sqrt[4] L)$.
\end{proposition}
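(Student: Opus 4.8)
The plan is to reuse the information-bottleneck strategy behind \cref{thm:general_lower_bound}, but to exploit the special structure of arithmetic evaluation to pin down how much information must be funneled through the bottleneck. First I would make the bottleneck for the Linear Transformer explicit. By \cref{eq:linearattention}, the only quantities the attention at position $i$ reads from the past are the running sums $\sum_{\vz\in\gS}(\mW_\text{V}^{(l,h)}\vz)\phi(\mW_\text{K}^{(l,h)}\vz)^\top$ and $\sum_{\vz\in\gS}\phi(\mW_\text{K}^{(l,h)}\vz)$. Fixing a position $t$ at the end of the input, I would argue that the collection of these prefix sums over all $M$ layers and $H$ heads, together with the token emitted at $t$, forms a state that \emph{completely determines} every subsequent output: given this state one computes $\vx^{(l)}_{t+1}$ layer by layer, updates each prefix sum by the single new contribution, emits the next token, and iterates. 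Counting entries, since each prefix sum is a matrix of size $\lceil D/H\rceil\times\lceil D/H\rceil$, the whole state consists of $O(M H \lceil D/H\rceil^2)=O(D^2)$ numbers (with $M,H$ constant), each stored in $O(\log L)$ bits, so it takes at most $2^{O(D^2\log L)}$ distinct values. I emphasize that it is precisely the \emph{matrix}-valued (hence $O(D^2)$, not $O(D)$) memory of the Linear Transformer that weakens the exponent from $\sqrt L$ to $\sqrt[4]L$.

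The second, arithmetic-specific, ingredient is the quadratic relation between input and output length. Because the output reduces the expression one operation at a time, an input of length $\ell$ produces $\Theta(\ell)$ intermediate expressions, each of length $\Theta(\ell)$, so the total output length satisfies $L=\Theta(\ell^2)$, i.e. $\ell=\Theta(\sqrt L)$. I would then construct a family of $2^{\Theta(\sqrt L)}$ distinct length-$\ell$ expressions — for instance sums $d_1+d_2+\cdots+d_\ell$ with each $d_i$ ranging over a constant-size set of digits — all of the same length and hence sharing the same bottleneck position $t$. The key observation is that the reduction re-prints the not-yet-evaluated part of the expression verbatim, so two distinct members of the family yield two distinct output sequences; consequently, distinct inputs force distinct bottleneck states.

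Combining the two ingredients gives the bound by a Pigeonhole argument exactly as in \cref{thm:general_lower_bound}: a correct model must realize at least $2^{\Theta(\sqrt L)}$ distinct bottleneck states, yet the bottleneck can take only $2^{O(D^2\log L)}$ values, forcing $D^2\log L=\Omega(\sqrt L)$ and hence $D=\tilde\Omega(\sqrt[4]L)$. The hard part will be the first step: rigorously verifying that the prefix-sum state is genuinely a \emph{sufficient statistic} for all future generation under the log-precision and truncation conventions — in particular that no additional unbounded information leaks forward through the residual stream or the per-position recomputation — and making the $O(D^2)$ entry count precise across all layers and heads. The length-versus-information bookkeeping (ensuring the family truly has $2^{\Theta(\sqrt L)}$ members sharing a common bottleneck position, stays within length $n$, and has output length $\Theta(L)$) is routine by comparison, but must be stated carefully.
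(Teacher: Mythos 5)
Your proposal is correct and takes essentially the same route as the paper's proof: the paper likewise treats the prefix-sum hidden state of Lemma~\ref{lemma: linear transformer as RNN} as the bottleneck that determines all subsequent generation, counts $\Omega(\exp(n))$ distinct outputs forced by distinct expressions with $n=\Theta(\sqrt L)$, and derives $D=\tilde\Omega(\sqrt[4]{L})$ from the same $O(D^2\log L)$-bit capacity pigeonhole. The only nit is that in your digit-sum family two members with identical tails and $d_1+d_2=d_1'+d_2'$ produce the same output sequence, so you should fix the leading digits and vary only the tail, which still gives $2^{\Theta(\sqrt L)}$ members and distinct outputs.
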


Based on this result, the complexity lower bound of Linear Transformers scales like $\tilde\Omega(L\sqrt L)$, which, interestingly, matches that of Sparse Transformers and is also strictly less than $\tilde\Omega(L^2)$. This finding naturally raises the following question: Why do these efficient Transformers no longer require a large hidden dimension on the arithmetic task?
We will answer this question in the next subsection.

\subsection{Locality encourages efficiency}

A key difference of the arithmetic task compared to general DP is that its reasoning process exhibits inherent \emph{structures}. To be specific, the output sequence of arithmetic computation can be partitioned into blocks (separated by the symbol ``=''), where the content of each block depends solely on the preceding one and is irrelevant to other historical blocks. This paradigm is often named as data locality in computer science literature and is also common in general reasoning processes that follow the so-called Chain of Thought format \citep{wei2022chain}. In light of this, we consider a special class of DP problems dubbed the $m$-locality DP, which is formally defined below:
\begin{definition}[$m$-locality DP]
    Consider a DP problem with output sequence $\vo_1,\cdots,\vo_L$ of the form (\ref{eq:dp_format}) where $\vo_1,\cdots,\vo_n$ is the input sequence. The DP problem is said to satisfy the $m$-locality condition for some $m\ge n$, if there exist functions $f,h_1,\cdots,h_K$ such that for all $i\in[L]$, $\vo_i=f(\vo_{h_1(i)},\cdots,\vo_{h_K(i)})$, where $i-m\le h_k(i)<i$ for $k\in[K]$.
\end{definition}

In other words, the $m$-locality condition simply says that each DP state only depends on recent $m$ DP states. Note that the assumption $m\ge n$ is necessary to ensure that all inputs contribute to the answer of the DP problem. Below, we will discuss how the required hidden dimension of efficient Transformers can be reduced when $m$ is far smaller than $L$.

We first consider the Sparse Transformer, where we have the following result:
\begin{proposition}
\label{thm:sparse_locality}
    Consider any $m$-locality DP problem satisfying the same condition as in \cref{thm:baseline}. Given any integer $n>0$, let $L$ be the length of the output sequence when the input sequence length is $n$. Then, there exists a (log-precision) Sparse Transformer with block size $B=\Theta(m)$, a constant depth $M$, a constant number of attention heads $H$, and a constant hidden dimension $D$ that can generate the correct output for all inputs $\vs$ of length $n$.
\end{proposition}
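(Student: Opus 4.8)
The plan is to show that, under the $m$-locality condition, the \emph{local} window of the sparse attention pattern in \cref{eq:sparseattention} already grants access to every position that any reasoning step depends on. Consequently, one can reuse the constant-size construction underlying \cref{thm:baseline} almost verbatim, never invoking the global ``column'' positions and never inflating the hidden dimension. This is exactly what distinguishes the present situation from the general case of \cref{thm:general_upper_bound}, where reaching distant dependencies forced information to be routed through block representatives at the cost of a width scaling like $\sqrt{L}$.

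First I would fix the constants in the attention pattern. The local window at position $i$ is $\{j : i - kB < j \le i\} = \{i - kB + 1, \ldots, i\}$. Since each output satisfies $\vo_i = f(\vo_{h_1(i)}, \ldots, \vo_{h_K(i)})$ with $i - m \le h_k(i) < i$, I choose $B = \Theta(m)$ large enough (relative to the fixed constant $k$) that $kB \ge m + 1$; then the window contains the full dependency range $[i - m, i - 1]$ for every $i$. In particular, at the first generated position $i = n+1$ the standing condition $m \ge n$ forces the window to also cover the entire input $\vo_1, \ldots, \vo_n$, while for large $i$ the input need not be directly visible because its influence has already propagated through the chain of states.

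Next I would transcribe the generation procedure. At step $i$ the model (i) recovers the index $i$ from its positional embedding and computes the target indices $h_1(i), \ldots, h_K(i)$ via an MLP, which is permitted by the approximability assumptions; (ii) uses one attention layer with $K$ heads (a constant, since $K$ is fixed) as an exact-match lookup, each head forming a query that encodes $h_k(i)$ and matching it against the positional keys, so that the softmax --- now normalized only over the accessible set $\gI_i$ --- concentrates on position $h_k(i)$ and copies $\vo_{h_k(i)}$; and (iii) applies $f$ through a constant-size FFN to emit $\vo_i$ (the terminal step producing $u(\mathsf{dp}(i_N))$ being just another instance of $f$). All quantities --- indices up to $L$ and integer DP values polynomial in $L$ --- fit within $\mathrm{O}(\log L)$ precision, and the depth, head count, and width all remain constant, giving $B = \Theta(m)$ with $M, H, D = \mathrm{O}(1)$.

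The only place where locality is genuinely used, and hence the main point to verify, is step (ii): that exact positional matching still singles out $h_k(i)$ when the softmax ranges over the reduced set $\gI_i$ instead of all of $[i]$. Here I would argue that since each global index is distinct and the query matches exactly one key, the unique maximizer of the query--key inner product remains $h_k(i)$ \emph{provided} $h_k(i) \in \gI_i$, which the block-size choice guarantees; the additional accessible positions (both the remaining local positions and the column positions) contribute only exponentially suppressed terms to the normalization and therefore do not interfere with the retrieved value. Everything else is a direct port of the standard-Transformer argument, so I expect no new difficulty beyond this bookkeeping.
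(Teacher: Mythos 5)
Your proposal is correct and follows essentially the same route as the paper: the paper likewise observes that under $m$-locality the constant-size standard-Transformer construction of \cref{thm:baseline} only ever places (non-negligible) attention weight on positions within distance $m$, so masking to the sparse pattern with $B=\Theta(m)$ leaves the computation unchanged and the constant depth, heads, and hidden dimension carry over. Your version merely spells out the bookkeeping the paper leaves implicit (choosing $B$ so that $kB \ge m+1$, and checking that renormalizing the softmax over $\gI_i$ only perturbs the copied values by exponentially small terms), which is a fair elaboration rather than a different argument.
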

As a result, the complexity of Sparse Transformer scales like $\tilde{\mathrm{O}}(mL)$, which is strictly less than $\tilde\Theta(L^2)$ when $m$ is far smaller than $L$. We next turn to the Linear Transformer, where we have the following lower bound:
\begin{proposition}
\label{thm:linear_locality}
    Consider any $m$-locality regular DP problem satisfying the same condition as in \cref{thm:baseline} and assume that $m=\Theta(n)$ where $n$ is the input sequence length. Then, a log-precision Linear Transformer with a constant depth $M$ and a constant number of heads $H$ can generate the correct output for all inputs $\vs$ of length $n$ only if the hidden dimension $D=\tilde\Omega(\sqrt m)$.
\end{proposition}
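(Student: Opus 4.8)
The plan is to adapt the information-bottleneck argument behind \cref{thm:general_lower_bound} to the locality setting, using $m=\Theta(n)$ to phrase the final bound as $\tilde\Omega(\sqrt m)$ rather than $\tilde\Omega(\sqrt L)$ (note that, unlike in \cref{thm:general_lower_bound}, here we do \emph{not} assume $L=\Theta(n)$, so we must count distinguishable inputs directly rather than invoke the earlier theorem). The central object is a bottleneck placed exactly at the separator token ``$|$'', i.e. at position $n$. The first step is to show that for a Linear Transformer every output generated at a position $>n$ is a deterministic function of a bounded collection of prefix-sum statistics accumulated at position $n$. Rewriting the numerator and denominator of \cref{eq:linearattention}, the only quantities the attention at layer $l$, head $h$ needs from positions $\le n$ are the matrix $A_n^{(l,h)}=\sum_{j\le n}(\mW_\text{V}^{(l,h)}\vx_j^{(l-1)})\phi(\mW_\text{K}^{(l,h)}\vx_j^{(l-1)})^\top$ and the vector $b_n^{(l,h)}=\sum_{j\le n}\phi(\mW_\text{K}^{(l,h)}\vx_j^{(l-1)})$. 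Since every token produced after position $n$ is itself a deterministic function of earlier hidden states, a double induction (outer over generated positions $j>n$, inner over layers) shows that the entire output suffix is determined by the set $\mathcal{B}=\{A_n^{(l,h)},b_n^{(l,h)}:l\in[M],h\in[H]\}$ together with the input-independent embedding of ``$|$''.

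The second step bounds the information content of $\mathcal{B}$. Each $A_n^{(l,h)}$ has $O((D/H)^2)$ entries and each $b_n^{(l,h)}$ has $O(D/H)$ entries; summing over the constant number of layers and heads gives $O(D^2)$ real numbers. Under the log-precision assumption each entry stores $O(\log L)$ bits, so $\mathcal{B}$ takes at most $2^{O(D^2\log L)}$ distinct values. Because the number of DP states, and hence $L$, is polynomial in $n$ (and thus in $m$), we have $\log L=O(\log m)$.

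The third step counts distinguishable inputs. There are at least $2^{\Omega(n)}=2^{\Omega(m)}$ distinct input sequences of length $n$, and by \cref{def:regular} any two of them differ in some $\mathsf{dp}(i)$, which by the output format \cref{eq:dp_format} is emitted at a position $>n$ and is therefore encoded in the output suffix. Combined with the first step, distinct inputs must induce distinct values of $\mathcal{B}$, so a pigeonhole comparison forces $2^{O(D^2\log L)}\ge 2^{\Omega(m)}$, i.e. $D^2\log L=\Omega(m)$, which yields $D=\tilde\Omega(\sqrt m)$.

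The main obstacle is the first step: rigorously certifying that $\mathcal{B}$ is a genuine bottleneck. The delicate point is the interleaving of the autoregressive unrolling with the layer-wise recursion, where one must verify that the contribution of each generated position $j>n$ to the layer-$l$ prefix sums is itself computable from $\mathcal{B}$, so that no additional information about $\vs$ leaks in through the generated tokens. This is precisely the structural property of \cref{eq:linearattention} exploited in \cref{thm:general_lower_bound}, so I would reuse that machinery; the only genuinely new ingredient is substituting $m=\Theta(n)$ for $L=\Theta(n)$ in the counting. It is worth emphasizing the contrast with \cref{thm:sparse_locality}: locality offers the Linear Transformer no relief, because its prefix-sum summarization funnels all past information through the single bottleneck $\mathcal{B}$, whose size is insensitive to how local the transition function is.
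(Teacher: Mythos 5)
Your proposal is correct and is essentially the paper's own argument: your bottleneck $\mathcal{B}$ is precisely the per-layer, per-head prefix-sum hidden state $(\vs_n,\vz_n)$ of Lemma~\ref{lemma: linear transformer as RNN}, and the paper likewise observes that this state together with the fixed separator token determines the entire CoT suffix, then applies regularity and a pigeonhole count with $m=\Theta(n)$ in place of $L=\Theta(n)$ to conclude $D=\Omega(\sqrt{m/\log m})=\tilde\Omega(\sqrt m)$. The only cosmetic difference is that you spell out the details (the double induction certifying the bottleneck, and the comparison of $O(D^2)$ log-precision entries against $2^{\Omega(m)}$ distinguishable inputs) that the paper compresses into a two-line reference to its general lower-bound proof.
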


The above result implies that the complexity lower bound of Linear Transformer, which is imposed by the bottleneck, scales like $\tilde\Omega(mL)$, which is strictly less than $\tilde\Theta(L^2)$ when $m$ is far smaller than $L$. However, we remark that it remains a challenging open question of whether such a complexity lower bound can be matched.

\section{Experiments}

\begin{figure*}[!h]
    \centering
    \includegraphics[width=1.0\linewidth]{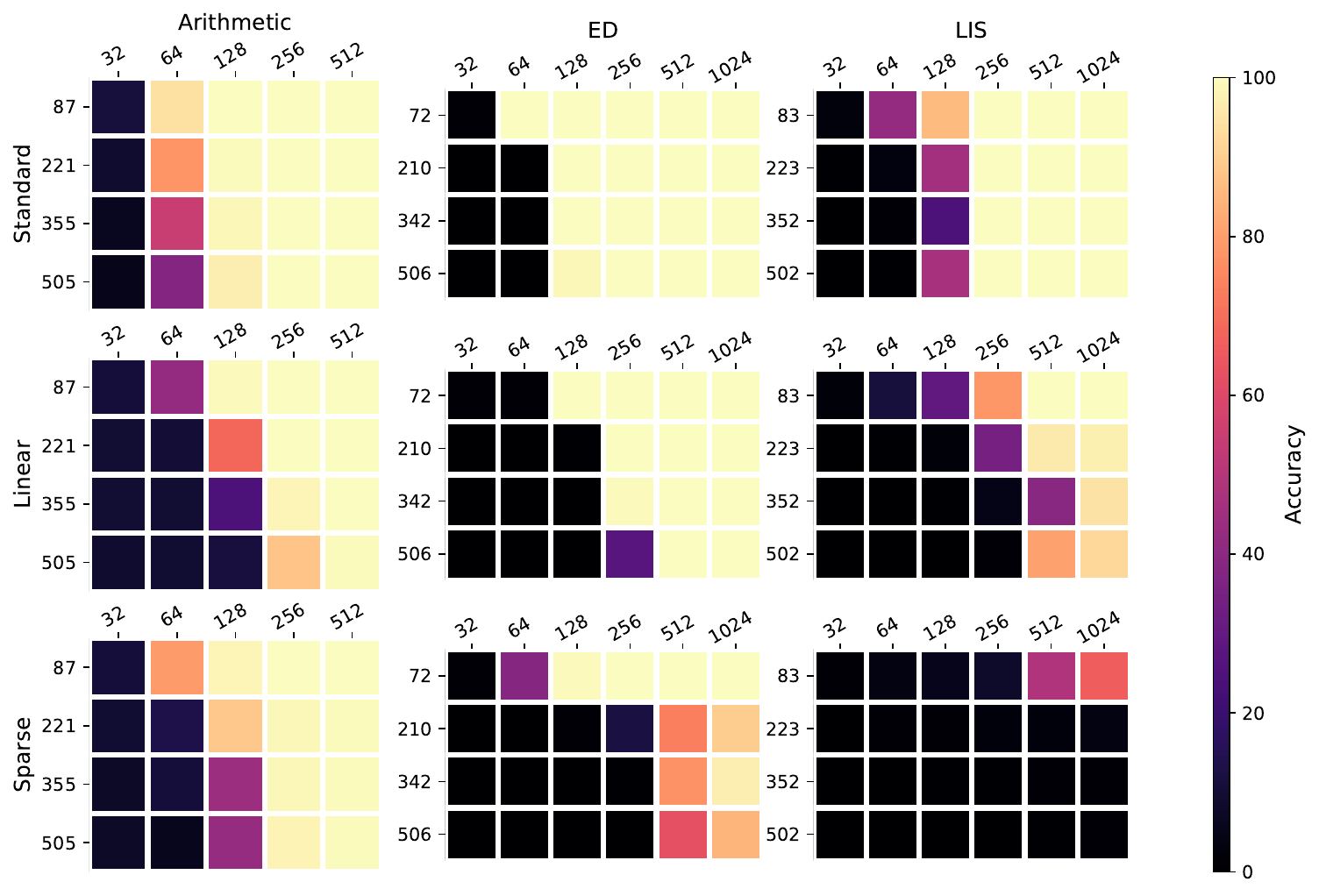}
    \vspace{-1cm}
    \caption{A comparison of accuracies across different tasks and model types. Each column corresponds to a task (Arithmetic, ED, LIS), and each row to a model (Standard Transformer, Linear Transformer, Sparse Transformer). Within each subplot, the x-axis represents the embedding dimension, and the y-axis denotes the problem size. The color intensities indicate the accuracy level achieved by the respective models. The figure demonstrates that efficient Transformers need larger hidden dimensions and that this requirement increases with problem size. It also highlights how standard Transformers can handle tasks across all difficulty levels with fixed embedding dimensions.}
    \label{fig:results}
    \vspace{-0.4cm}
\end{figure*}

In the preceding sections, we conducted a theoretical analysis to assess the capabilities of efficient Transformers for general DP problems and problems with locality. This section serves to validate those findings through comprehensive empirical experimentation. Inspired by the reasoning evaluation in \citep{feng2023towards}, we adopt a similar experimental design using common DP problems with Chain-of-Thought demonstrations. We focus on understanding how two key factors, problem size and the embedding dimension of the Transformer model, affect performance across tasks.

\subsection{Experimental Design}
\textbf{Tasks and datasets.} We chose three well-known tasks – LIS, ED, and Arithmetic – to represent a variety of problems in the DP domain. For the LIS task, the goal is to find the length of the longest increasing subsequence of a given integer sequence. The ED task's goal is to calculate the minimum cost required to convert one sequence to another using three basic edit operations: insert, delete, and replace. For the Arithmetic task, the goal is to calculate the correct result of an arithmetic expression consisting of numbers, addition, subtraction, multiplication, division, and brackets. The LIS task is the most general DP problem without locality property, while ED and Arithmetic exhibit higher locality.

Following previous work~\citep{feng2023towards}, we curate five datasets for each task, with different problem sizes and increasing difficulty. For the LIS task, the datasets encompass sequences of lengths $\{40,110,175,250\}$, equating to CoT lengths of $\{83,223,353,503\}$. For the ED task, the datasets span varying sequences of lengths – specifically, of averaged lengths 6, 12, 16, and 20, equating to maximum CoT lengths of $\{72,210,342,506\}$. For the Arithmetic task, the dataset consists of sequences with operator numbers in $\{6,10,13,16\}$, equating to maximum CoT lengths of $\{87,221,355,505\}$. Each training dataset has 1M samples, and each corresponding testing dataset has 0.1M.

\textbf{Model configurations.} For the standard Transformer model, we use the same configurations as used by~\citep{feng2023towards} {with 3 layers and 4 attention heads}, albeit with varying embedding dimensions. We employ sinusoidal positional embeddings and apply Xavier initialization to all parameters. For the activation function, we chose the standard GeLU, and the embedding dimensions for the ED and LIS tasks span the range ${32,64,128,256,512,1024}$ uniformly. As for the Arithmetic task, we exclude the $1024$ embedding dimensions as all the models have already performed well in $512$. The FFN layer's hidden dimension is four times the embedding dimension. We use the same configurations for both the Linear and Sparse Transformers. Within the Sparse Transformer, the block size $B$ is $2^{\lfloor \log_2(\sqrt{L}) \rfloor}$, with $L$ representing the upper limit of CoT length. Every experiment has the global token count $c$ fixed at 1 for every block. {Furthermore, we conduct the experiments on Transformer models with 5 layers while keeping other parameters the same. The additional results are shown in Appendix \ref{app:experiment}.}

\textbf{Model training and inference.}
In all experiments, we employ the AdamW optimizer~\citep{loshchilov2017fixing} with the following hyperparameters: $\beta _1 = 0.9, \beta _2 = 0.999, lr = 10^{-4}$, and $\text{weight decay} = 0.01$.
To enhance model generalization, we maintain a consistent dropout rate of 0.1. Our optimization minimizes the negative log-likelihood loss for all tokens in both the CoT steps and the answers. Each model does 100 training epochs with a batch size of 512. During the inference stage, the model generates the entire CoT process token by token, using greedy search until reaching the \texttt{End-of-Sentence} token. We evaluate the models' performance using the accuracy of the final answer, which is the last output in the sequence. We run all experiments on four V100 GPUs.

\subsection{Experimental Results}

\begin{table}[t]
    \caption{Minimum GFLOPs of different model types to achieve an accuracy above $90\%$ on Arithmetic Task.}
    \centering
    \resizebox{0.95\columnwidth}{!}{
    \begin{tabular}{ccccc}
    \toprule
    \textbf{Method} & \textbf{Length 87} & \textbf{Length 221} & \textbf{Length 355} & \textbf{Length 505} \\
    \midrule
    Standard & 0.027 & 0.266 & 0.426 & 0.605\\
    Linear & 0.422 & 4.220 & 6.756 & 9.584\\
    Sparse & 0.106 & 1.055 & 1.690 & 2.399 \\
    \bottomrule
    \end{tabular}
    }
    \vspace{-1em}
    \label{tab:2_GFLOPs}
\end{table}

Figure~\ref{fig:results} shows our main results. Each column of the figure corresponds to a particular task, and each row to a different model. Within each subplot, the x-axis shows the embedding dimension, and the y-axis indicates the problem size. The color intensities indicate the corresponding accuracies.

For almost all tasks and varying problem sizes, models with sufficiently large embedding dimensions can achieve nearly 100\% accuracy, except for the LIS task with the Sparse Transformer. Nevertheless, for this task, the accuracy still increases as the embedding dimension increases. This finding shows that efficient Transformers can handle these DP tasks with adequate expressiveness.

When we compare the subplots column-wise, it is evident that efficient Transformers generally need larger hidden dimensions than standard Transformers. Moreover, within each subplot of efficient Transformers, the required embedding dimension increases as the problem grows. In contrast, standard Transformers, with fixed embedding dimensions of 128 or 256, can handle tasks across all difficulty levels. These observations confirm our theoretical findings, suggesting that efficient Transformers are less efficient than previously perceived.
In \cref{tab:2_GFLOPs}, we further compare the minimal number of FLOPs required to achieve 90\% accuracy on Arithmetic for all models. The standard Transformer requires the lowest number of flops across all lengths.

Comparing the subplots row-wise, the growth in required embedding dimension with the model size becomes more pronounced as the locality decreases. This suggests that efficient Transformers are more efficient for DP tasks with strong locality, which aligns with our previous results.

\begin{figure}
    \centering
    \small
    \includegraphics[width=0.42\textwidth]{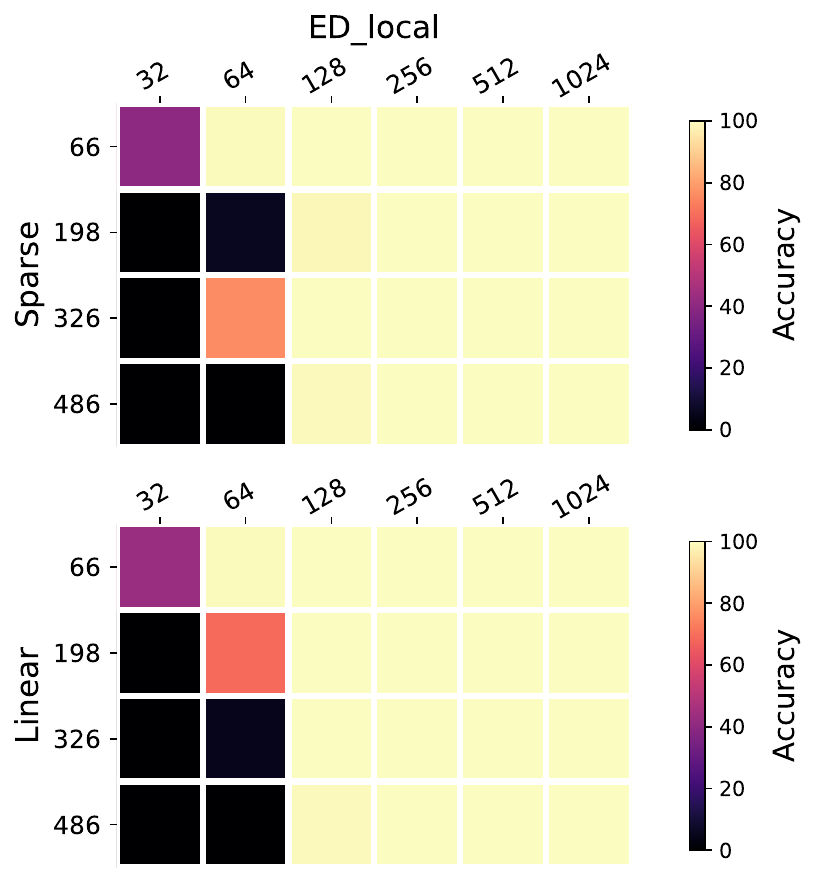}\\
    \vspace{-0.4cm}
    \caption{Accuracies of the Sparse Transformer and Linear Transformer on the ED\_Local task with varying problem size on the y- and embedding dimension on the x-axis. We can observe that both models benefit from locality.}
    \label{fig:locality}
    \vspace{-0.6cm}
\end{figure}

\textbf{Locality Study.} Although the previous results already indicate that problems with higher locality require higher embedding dimensions, we conducted another more explicit experiment in which we modified the ED problem to have a higher locality. We call this problem ED\_Local and show the results in \cref{fig:locality}. We can clearly see that the increased locality led to a reduction in necessary embedding size, which backs up our theoretical findings.

\section{Limitations \& Conclusion}
\textbf{Limitations.}
Although we show our results for representative efficient Transformers, it does not mean that our findings directly transfer to all models with similar designs. Further, despite our experiments indicating that Linear Transformers also benefit from locality,  it remains to prove whether the bound for the Linear Transformer can be tightened.

\textbf{Conclusion.} While the Sparse Transformer and Linear Transformer are expressive enough to solve general DP tasks, our findings indicate that they can not sustain their efficiency in the general case. This contradicts the anticipated efficiency gains, bringing their performance closer to that of standard Transformers, which maintain a constant model size.
The paradoxical nature of these efficient Transformers prompts a crucial question: under what conditions do these architectures become efficient? By delving into arithmetic expression evaluation and introducing the locality assumption, we identify scenarios where efficient Transformers can be efficient.
Our theoretical results find empirical support through extensive experiments on tasks like Arithmetic, Longest Increasing Subsequence (LIS), and Edit Distance (ED). The observed dependency between hidden dimension and problem scale for efficient Transformers, in contrast to the stability in standard Transformers, validates our theory.


\section*{Impact Statement}
This paper presents work whose goal is to advance the field of Machine Learning. There are many potential societal consequences of our work, none which we feel must be specifically highlighted here.

\section*{Acknowledgement}
{Liwei Wang is supported by National Science and Technology Major Project (2022ZD0114902) and National Science Foundation of China (NSFC62276005). Di He is supported by National Science Foundation of China (NSFC62376007). }

\bibliography{efficient_transformer}
\bibliographystyle{icml2024}

\newpage
\appendix
\onecolumn
\section{Technical Lemmas}
\label{sec:lemmas}

In this section, we propose some lemmas about the expressive power of the MLP, the linear transformer, and the sparse transformer. Due to the similarity between the efficient and the standard Transformer, we base some ideas and constructions on previous work \citep{feng2023towards}.

\subsection{Lemmas for MLP}

In the previous work \citep{feng2023towards}, the authors investigated the expressive power of the MLP with GeLU activation function. They showed that a two-layer MLP with GeLU activation and a fixed number of log-precision neurons can perform various tasks such as multiplication, linear transformation, and selection. Building on their work, our proof is more concise and clear. We will restate the relevant lemmas below, and refer to the appendices of \citet{feng2023towards} for the proofs.

\begin{lemma}[From \citet{feng2023towards}]
\label{lemma:MLP_multip}
    Let $f:\mathbb R^2\to\mathbb R$ be a two-layer MLP with GeLU activation, and the hidden dimension is 4. Then, for any $\epsilon > 0$ and $M > 0$, there exist MLP parameters with $\ell_{\infty}$ norm upper bounded by $O(\mathrm{poly}(M,1/\eps))$ such that $|f(a, b) - ab| \leq \epsilon$ holds for all $a, b \in [-M, M]$.
\end{lemma}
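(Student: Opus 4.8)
The plan is to reduce the bilinear map $(a,b)\mapsto ab$ to squaring via the polarization identity $ab=\tfrac14\big((a+b)^2-(a-b)^2\big)$, and then to realize an approximate squaring gadget using a pair of GeLU units arranged as a second-order symmetric difference. Writing $\sigma(x)=x\Phi(x)$ for the GeLU activation (with $\Phi$ the standard normal CDF), the key analytic fact I would establish first is that $\sigma$ is smooth with $\sigma(0)=0$, $\sigma'(0)=\tfrac12$, and $\sigma''(0)=\sqrt{2/\pi}\neq 0$; moreover all derivatives of $\sigma$ are globally bounded, since each is a polynomial multiplied by the Gaussian density. The nonvanishing of $\sigma''(0)$ is precisely what allows a symmetric difference of GeLU to recover a quadratic.

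Concretely, for a scale $\lambda>0$ I would define the gadget $S_\lambda(u)=\frac{\sigma(\lambda u)+\sigma(-\lambda u)}{\sigma''(0)\,\lambda^2}$. Since the even part $\sigma_e(t)=\tfrac12(\sigma(t)+\sigma(-t))$ satisfies $\sigma_e(0)=\sigma_e'(0)=\sigma_e'''(0)=0$ and $\sigma_e''(0)=\sigma''(0)$, Taylor's theorem with Lagrange remainder gives $\sigma_e(t)=\tfrac12\sigma''(0)t^2+\tfrac1{24}\sigma_e^{(4)}(\xi)t^4$ for some $\xi$ between $0$ and $t$, hence $S_\lambda(u)=u^2+E(u)$ with $|E(u)|\le \kappa\lambda^2u^4$, where $\kappa$ depends only on the absolute bound $C=\sup|\sigma^{(4)}|$ and on $\sigma''(0)$. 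The MLP is then assembled with four hidden GeLU units computing $\sigma(\pm\lambda(a+b))$ and $\sigma(\pm\lambda(a-b))$ (input weights $\pm\lambda$, no bias), and a linear output layer combining them as $f(a,b)=\tfrac14\big(S_\lambda(a+b)-S_\lambda(a-b)\big)$, which uses exactly the permitted hidden dimension $4$.

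For the error, since $a,b\in[-M,M]$ we have $|a\pm b|\le 2M$, so $|f(a,b)-ab|\le\tfrac14(|E(a+b)|+|E(a-b)|)\le 8\kappa\lambda^2M^4$. Choosing $\lambda^2=\eps/(8\kappa M^4)$ forces the error below $\eps$, while the input weights have magnitude $\lambda=O(\sqrt\eps/M^2)$ and the output weights have magnitude $\frac{1}{4\sigma''(0)\lambda^2}=O(M^4/\eps)$; both are $O(\mathrm{poly}(M,1/\eps))$ as required (and we may assume $\eps\le 1$ without loss of generality so that $\lambda$ stays bounded). The step I expect to be the crux is the remainder control: one must verify that the odd third-order Taylor term genuinely drops out, so that the leading error is $O(\lambda^2)$ rather than $O(\lambda)$, and that $\sigma^{(4)}$ admits a uniform bound valid for \emph{all} arguments $\xi$, including the regime where $\lambda u$ is not small. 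This uniformity is what guarantees a single constant $\kappa$ works across the whole box $[-M,M]^2$ and keeps the tradeoff between shrinking $\lambda$ and inflating the output weights polynomial.
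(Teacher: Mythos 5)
Your proof is correct, and it takes essentially the same route as the proof this paper relies on: the paper does not reprove the lemma but imports it from \citet{feng2023towards}, whose argument is exactly your construction --- polarization $ab=\tfrac14\bigl((a+b)^2-(a-b)^2\bigr)$ combined with the even part of GeLU as an approximate squaring gadget at a small input scale $\lambda$, with the error controlled by a Taylor remainder using the global boundedness of the higher derivatives of GeLU. The four hidden units, the choice $\lambda^2=\Theta(\epsilon/M^4)$, and the resulting $O(M^4/\epsilon)$ output weights all match the cited proof, so there is nothing to fix.
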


\begin{lemma}[From \citet{feng2023towards}]
\label{lemma:MLP_relu}
    Let $\vg:\mathbb R^{d_1}\to \mathbb R^{d_2}$ be a two-layer MLP with $\mathrm{ReLU}$ activation, and all parameter values are upper bounded by $M$. Then, for any $\epsilon>0$, there exists a two-layer MLP $\vf$ of the same size with $\mathrm{GeLU}$ activation and parameters upper bounded by $O(\mathrm{poly}(M, 1/\epsilon))$ in the $\ell_{\infty}$ norm, such that for all $\vx\in\mathbb R^{d_1}$, we have $\|\vf(\vx)-\vg(\vx)\|_\infty\leq \epsilon$.
\end{lemma}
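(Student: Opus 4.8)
The plan is to exploit the fact that the GeLU activation, $\mathrm{GeLU}(x)=x\Phi(x)$ with $\Phi$ the standard Gaussian CDF, is a smoothed version of $\mathrm{ReLU}$ that can be driven arbitrarily close to it by rescaling its argument. Write the target network as $\vg(\vx)=\mW_2\,\mathrm{ReLU}(\mW_1\vx)$, with any bias terms absorbed into the same rescaling described below. First I would introduce a scale $\lambda>0$ and define the candidate GeLU network by $\vf(\vx)=\tfrac{1}{\lambda}\mW_2\,\mathrm{GeLU}(\lambda\mW_1\vx)$; that is, I multiply the first-layer weights by $\lambda$ and the second-layer weights by $1/\lambda$, leaving the architecture, in particular the hidden width $d$, unchanged, so that $\vf$ indeed has the same size as $\vg$.

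The core of the argument is a single-coordinate estimate. For any $z\in\mathbb R$ one has $\tfrac{1}{\lambda}\mathrm{GeLU}(\lambda z)=z\Phi(\lambda z)$, so the pointwise deviation from $\mathrm{ReLU}(z)$ equals $|z|\,\Phi(-\lambda|z|)$ in both the $z>0$ and $z<0$ cases. Invoking the Gaussian tail (Mills ratio) bound $\Phi(-t)\le \tfrac{1}{t\sqrt{2\pi}}e^{-t^2/2}$ for $t>0$, the growing factor $|z|$ cancels exactly: $|z|\,\Phi(-\lambda|z|)=\tfrac{1}{\lambda}(\lambda|z|)\Phi(-\lambda|z|)\le \tfrac{1}{\lambda\sqrt{2\pi}}e^{-\lambda^2 z^2/2}\le \tfrac{1}{\lambda\sqrt{2\pi}}$. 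The crucial feature is that this bound is \emph{uniform over all} $z\in\mathbb R$, so no compactness assumption on the input domain is needed, which is exactly what the lemma demands.

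Propagating this bound through the second layer is then routine. Applying the coordinate estimate to each preactivation $z_j=(\mW_1\vx)_j$, the error vector $\vu$ with $u_j=\tfrac{1}{\lambda}\mathrm{GeLU}(\lambda z_j)-\mathrm{ReLU}(z_j)$ satisfies $\|\vu\|_\infty\le \tfrac{1}{\lambda\sqrt{2\pi}}$, hence $\|\vu\|_1\le \tfrac{d}{\lambda\sqrt{2\pi}}$. Since every entry of $\mW_2$ is bounded by $M$, each coordinate of $\vf(\vx)-\vg(\vx)=\mW_2\vu$ is at most $M\|\vu\|_1\le \tfrac{Md}{\lambda\sqrt{2\pi}}$. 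Choosing $\lambda=\tfrac{Md}{\epsilon\sqrt{2\pi}}$ forces this below $\epsilon$ for every $\vx$, giving the uniform approximation. Finally I would verify the parameter magnitudes: the new first-layer weights $\lambda\mW_1$ have entries at most $\lambda M=O(\mathrm{poly}(M,1/\epsilon))$, while the new second-layer weights $\tfrac{1}{\lambda}\mW_2$ have entries at most $M$, so both are polynomially bounded as claimed.

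The main obstacle is securing uniformity of the approximation over the unbounded input space rather than over a compact set; this is precisely where the Gaussian-tail cancellation of the growing $|z|$ factor does the work, allowing a single global scale $\lambda$ to suffice everywhere. If one instead adopts a sigmoid-based surrogate $x\,\sigma(1.702x)$ for GeLU, the same scheme goes through verbatim with the exponential tail of the logistic function replacing the Gaussian tail, at the cost of slightly different constants.
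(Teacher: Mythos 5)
Your proof is correct: the paper itself does not prove this lemma (it restates it and defers to the appendices of \citet{feng2023towards}), and your rescaling argument --- taking $\vf(\vx)=\tfrac{1}{\lambda}\mW_2\,\mathrm{GeLU}(\lambda\mW_1\vx)$ and using the uniform single-coordinate bound $|z|\,\Phi(-\lambda|z|)\le \tfrac{1}{\lambda\sqrt{2\pi}}$ from the Gaussian tail --- is exactly the standard argument used in that reference. The only cosmetic point is that your scale $\lambda=\tfrac{Md}{\epsilon\sqrt{2\pi}}$ carries a factor of the hidden width $d$, so the stated bound $O(\mathrm{poly}(M,1/\epsilon))$ is accurate because the lemma is only ever invoked for constant-size MLPs.
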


\begin{lemma}[From \citet{feng2023towards}]
\label{lemma:MLP_lin}
    Let $\vf:\mathbb R^{d_1}\to \mathbb R^{d_2}$ be a two-layer MLP with $\mathrm{GeLU}$ activation, and the hidden dimension is $2d_2$. Let $\mW\in\mathbb R^{d_2\times d_1}$ be any matrix and denote $M=\max_{ij}|W_{ij}|$. Then, for any $\epsilon>0$, there exist MLP parameters with $\ell_{\infty}$ norm bounded by $O(\mathrm{poly}(M,1/\epsilon))$, such that for any $\vx\in \mathbb{R}^{d_1}$, we have $\|\vf(\vx)-\mW \vx\|_\infty\leq \epsilon$.
\end{lemma}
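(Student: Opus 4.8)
The plan is to reduce the problem to the ReLU case and then invoke Lemma~\ref{lemma:MLP_relu}. The key observation is the elementary identity $t = \mathrm{ReLU}(t) - \mathrm{ReLU}(-t)$, which lets a ReLU network reproduce any linear map \emph{exactly} using only two hidden units per output coordinate --- precisely the budget of $2d_2$ hidden neurons allotted in the statement.

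Concretely, first I would construct a two-layer ReLU MLP $\vg:\mathbb{R}^{d_1}\to\mathbb{R}^{d_2}$ with hidden dimension $2d_2$ that computes $\vx\mapsto\mW\vx$ exactly. Let the first-layer weight matrix $\mW_1^{g}\in\mathbb{R}^{2d_2\times d_1}$ have its $(2i-1)$-th row equal to the $i$-th row of $\mW$ and its $(2i)$-th row equal to the negation of the $i$-th row of $\mW$, for each $i\in[d_2]$ (no bias is needed, matching the bias-free FFN of \cref{eq:ffn}). Applying ReLU produces the hidden activations $\mathrm{ReLU}((\mW\vx)_i)$ and $\mathrm{ReLU}(-(\mW\vx)_i)$. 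Let the second-layer matrix $\mW_2^{g}\in\mathbb{R}^{d_2\times 2d_2}$ have, in its $i$-th row, a $+1$ in column $2i-1$, a $-1$ in column $2i$, and zeros elsewhere. Then the $i$-th output is $\mathrm{ReLU}((\mW\vx)_i)-\mathrm{ReLU}(-(\mW\vx)_i)=(\mW\vx)_i$, so $\vg(\vx)=\mW\vx$ for all $\vx$. Every entry of $\mW_1^{g}$ and $\mW_2^{g}$ lies in $\{0,\pm 1\}\cup\{\pm W_{ij}\}$ and is therefore bounded in $\ell_\infty$ by $\max(1,M)$.

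Next I would apply Lemma~\ref{lemma:MLP_relu} to $\vg$. Since $\vg$ is a two-layer ReLU MLP with hidden dimension $2d_2$ and parameters bounded by $\max(1,M)$, the lemma yields a two-layer GeLU MLP $\vf$ \emph{of the same size} with parameters bounded by $O(\mathrm{poly}(M,1/\epsilon))$ such that $\|\vf(\vx)-\vg(\vx)\|_\infty\le\epsilon$ for all $\vx\in\mathbb{R}^{d_1}$. Composing this with the exact identity $\vg(\vx)=\mW\vx$ gives $\|\vf(\vx)-\mW\vx\|_\infty\le\epsilon$ uniformly, which is exactly the claim, with the stated parameter bound and hidden width inherited directly from the construction.

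The only genuinely delicate ingredient --- the \emph{uniform} ReLU-to-GeLU conversion over all of $\mathbb{R}^{d_1}$ with polynomial parameter growth --- is already packaged in Lemma~\ref{lemma:MLP_relu}, whose mechanism is morally $\tfrac1s\mathrm{GeLU}(s\,t)\to\mathrm{ReLU}(t)$ as $s\to\infty$, with the scale $s$ governing the $\mathrm{poly}(1/\epsilon)$ blow-up. Consequently I expect no real obstacle here: the new content reduces to the economical exact realization of $\mW\vx$ within the tight $2d_2$-neuron budget, and the remaining work is purely bookkeeping of row indices and verification that the parameter bound survives the conversion.
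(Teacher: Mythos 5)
Your proof is correct and follows essentially the same route as the proof this paper defers to (the appendix of \citet{feng2023towards}), which is also the idiom used for the new lemmas in this paper's appendix: realize the linear map exactly with ReLU via $t=\mathrm{ReLU}(t)-\mathrm{ReLU}(-t)$, using two hidden units per output coordinate to meet the $2d_2$ budget, then invoke Lemma~\ref{lemma:MLP_relu} to convert to GeLU with uniform error $\epsilon$ and parameters bounded by $O(\mathrm{poly}(M,1/\epsilon))$. No gaps.
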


\begin{lemma}[From \citet{feng2023towards}]
\label{lemma:MLP_select}
    Define the selection function $\vg:\mathbb R^{d}\times \mathbb R^{d}\times \mathbb R\to \mathbb R^{d}$ as follows:
    \begin{equation}
        \vg(\vx,\vy,t)=\left\{\begin{array}{cc}
            \vx & \text{if }t\ge 0, \\
            \vy & \text{if }t< 0.
        \end{array}\right.
    \end{equation}
    Let $\vf:\mathbb R^{d}\times \mathbb R^{d}\times \mathbb R\to \mathbb R^{d}$ be a two-layer MLP with GeLU activation, and the hidden dimension is $2d+2$. Then, for any $\epsilon>0$, $\alpha>0$, and $M>0$, there exist MLP parameters with $\ell_{\infty}$ norm bounded by $O(\mathrm{poly}(M,1/\alpha, 1/\epsilon))$, such that for all $\vx\in[-M,M]^{d}$, $\vy\in [-M,M]^{d}$, and $t\in [-\infty,-\alpha]\cup[\alpha,+\infty]$, we have $\|\vf(\vx,\vy,t)-\vg(\vx,\vy,t)\|_\infty\leq \epsilon$.
\end{lemma}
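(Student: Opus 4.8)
The plan is to first build an \emph{exact} two-layer network with the $\mathrm{ReLU}$ activation that realizes $\vg$ on the valid region $t\in(-\infty,-\alpha]\cup[\alpha,+\infty)$, and then invoke \cref{lemma:MLP_relu} to convert it into a $\mathrm{GeLU}$ network at the cost of an $\ell_\infty$ error of at most $\epsilon$. The crucial observation is that selection can be implemented by a \emph{gating} construction: for each coordinate $i\in[d]$ set
\begin{equation}
\label{eq:select_construction}
    h_i=\mathrm{ReLU}(x_i+\lambda t)-\mathrm{ReLU}(\lambda t)+\mathrm{ReLU}(y_i-\lambda t)-\mathrm{ReLU}(-\lambda t),
\end{equation}
where $\lambda>0$ is a large constant to be chosen. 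The two terms $\mathrm{ReLU}(\lambda t)$ and $\mathrm{ReLU}(-\lambda t)$ do not depend on $i$, so they can be \emph{shared} across all output coordinates; together with the $2d$ coordinate-dependent units $\{\mathrm{ReLU}(x_i+\lambda t),\mathrm{ReLU}(y_i-\lambda t)\}_{i\in[d]}$ this uses exactly $2d+2$ hidden neurons, matching the claimed width. The output layer is linear: it reads off $h_i$ as in \cref{eq:select_construction}, reusing the two shared gating units with coefficient $-1$ in every output coordinate.

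The next step is to verify that \cref{eq:select_construction} is exact once $\lambda$ is large enough. Choosing any $\lambda>M/\alpha$ guarantees $\lambda\alpha>M$. When $t\ge\alpha$ we then have $x_i+\lambda t\ge\lambda\alpha-M>0$ and $y_i-\lambda t\le M-\lambda\alpha<0$, so the first pair of terms collapses to $x_i$ and the second pair vanishes, giving $h_i=x_i$; symmetrically, when $t\le-\alpha$ one obtains $h_i=y_i$. Hence $h_i$ reproduces $g_i(\vx,\vy,t)$ \emph{exactly} on the entire valid region (the boundary $t=0$ never arises, since inputs satisfy $|t|\ge\alpha$), and the parameters of this $\mathrm{ReLU}$ network, whose magnitudes are dominated by $\lambda$, are bounded by $O(\mathrm{poly}(M,1/\alpha))$.

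Finally I would apply \cref{lemma:MLP_relu} to the $\mathrm{ReLU}$ network above: it yields a $\mathrm{GeLU}$ network $\vf$ of the same width $2d+2$ whose parameters are bounded by $O(\mathrm{poly}(M,1/\alpha,1/\epsilon))$ and which satisfies $\|\vf(\vx,\vy,t)-\vg(\vx,\vy,t)\|_\infty\le\epsilon$ for all admissible $\vx,\vy,t$, as required.

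I do not expect a hard obstacle here, but the main points to get right are the \emph{sharing} of the gating units so that the width stays at $2d+2$ rather than $4d$, and the choice $\lambda>M/\alpha$ that makes the $\mathrm{ReLU}$ stage exact on the whole admissible region. This separation is what lets all of the approximation error be introduced solely by the $\mathrm{ReLU}$-to-$\mathrm{GeLU}$ conversion, so that the final $\epsilon$ bound and the polynomial parameter bound both follow directly from \cref{lemma:MLP_relu}.
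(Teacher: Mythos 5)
Your proof is correct. Note that the paper itself does not prove \cref{lemma:MLP_select} at all: it is imported verbatim from \citet{feng2023towards}, with the proof deferred to the appendices of that work. So what you have produced is a self-contained replacement for an external citation, and it is a valid one. The gating construction is sound: with $\lambda>M/\alpha$, the case check ($t\ge\alpha$ forces $x_i+\lambda t>0$ and $y_i-\lambda t<0$; $t\le-\alpha$ is symmetric) shows the ReLU network computes $\vg$ exactly on the admissible region, the sharing of the two units $\mathrm{ReLU}(\lambda t)$, $\mathrm{ReLU}(-\lambda t)$ indeed keeps the width at $2d+2$ rather than $4d$, and the parameter bound $O(\mathrm{poly}(M,1/\alpha))$ for the ReLU stage composes with \cref{lemma:MLP_relu} (which holds for all inputs in $\mathbb R^{2d+1}$, so no domain issue arises) to give the claimed $O(\mathrm{poly}(M,1/\alpha,1/\epsilon))$ bound and the $\epsilon$ error by the triangle inequality. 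This two-step pattern --- exact ReLU construction followed by the ReLU-to-GeLU conversion of \cref{lemma:MLP_relu} --- is exactly the style the paper uses for the lemmas it does prove (\cref{lemma:MLP_onehot1,lemma:MLP_onehot2,lemma:MLP_devision}), so your argument fits the paper's toolkit; whether it coincides with the original construction in \citet{feng2023towards} cannot be checked from this document, but nothing in your proof depends on that.
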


\begin{lemma}
\label{lemma:MLP_onehot1}
    For integer $i$, define $S_i=\left[i-\frac13, i+\frac13\right]$. Let $n\in\sZ$ and $D=\cup_{i=0}^n S_i$. 
    Define the function $\vg:D\to\sR^n$ as follows: $\vf(x)=\vm_i$ if $x\in S_i$, where $\vm_i$ has its first $i$ entries $1$ and the rest entries $0$.
    Let $\vf:\sR\to\sR^n$ be a two-layer MLP with GeLU activation, and the hidden dimension is $2n$. 
    Then, for any $\eps>0$, $\alpha>0$, there exists MLP parameters with $l_\infty$ norm bounded by $O(\mathrm{poly}(n,1/\eps))$, such that for all $x\in D$, we have $\|\vf(x)-\vg(x)\|_\infty\leq\eps$.
\end{lemma}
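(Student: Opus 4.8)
The plan is to construct an exact \emph{ReLU} network for the target and then invoke \cref{lemma:MLP_relu} to transfer it to GeLU. The key structural observation is that the intervals $S_0,\dots,S_n$ are pairwise disjoint: consecutive intervals $[i-\tfrac13,i+\tfrac13]$ and $[i+\tfrac23,i+\tfrac43]$ are separated by a gap of width $\tfrac13$. Hence $x\in D$ determines $i$ uniquely, the target $\vg$ is well-defined, and, crucially, there is always a margin of $\tfrac16$ separating the region where any given output coordinate should be $0$ from the region where it should be $1$. (The parameter $\alpha$ in the statement plays no role here, since this margin is already built into the domain.)

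Next I would handle each output coordinate separately. Writing $\vg(x)=(g_1(x),\dots,g_n(x))$, coordinate $g_j$ equals $1$ exactly when $i\ge j$, i.e.\ when $x\ge j-\tfrac13$, and equals $0$ when $i<j$, i.e.\ when $x\le j-\tfrac23$. Thus $g_j$ is a single step from $0$ to $1$ with threshold $t_j:=j-\tfrac12$, and no point of $D$ lies in the open gap $(j-\tfrac23,\,j-\tfrac13)$ around $t_j$, giving a margin of $\tfrac16$ on each side. Such a step with margin is representable \emph{exactly} on $D$ by a clipped ramp built from two ReLUs,
\begin{equation*}
    \tilde g_j(x)=\mathrm{ReLU}\!\big(c(x-t_j)+\tfrac12\big)-\mathrm{ReLU}\!\big(c(x-t_j)-\tfrac12\big),
\end{equation*}
where choosing the slope $c\ge 3$ forces the pre-activation to reach $\pm\tfrac12$ already at the nearest admissible points $x=t_j\pm\tfrac16$, so the ramp saturates to the correct value $0$ or $1$ everywhere on $D$. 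Stacking these for $j=1,\dots,n$ uses exactly $2n$ hidden ReLU units, and a linear output layer with $\pm1$ weights reads off coordinate $j$ from its dedicated pair of units. This yields a two-layer ReLU MLP of hidden dimension $2n$ computing $\vg$ exactly on $D$, whose only large parameters are the biases $ct_j\pm\tfrac12=\mathrm{O}(n)$, so its $\ell_\infty$ norm is $M=\mathrm{O}(n)$.

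Finally I would apply \cref{lemma:MLP_relu} to this ReLU MLP with the given $\epsilon$, producing a GeLU MLP $\vf$ of the same size ($2n$ hidden units) satisfying $\|\vf(x)-\vg(x)\|_\infty\le\epsilon$ for all $x$, with parameter $\ell_\infty$ norm bounded by $\mathrm{O}(\mathrm{poly}(M,1/\epsilon))=\mathrm{O}(\mathrm{poly}(n,1/\epsilon))$; since the ReLU network already matches $\vg$ exactly on $D$, the desired bound holds on $D$. There is no deep difficulty here: the heavy analytic step (ReLU-to-GeLU conversion with controlled norms) is delegated to \cref{lemma:MLP_relu}, so the only obstacle is the bookkeeping — verifying that the domain gaps supply a uniform margin, that two ReLUs per coordinate fit within the prescribed width $2n$, and that the thresholds $t_j=\mathrm{O}(n)$ keep all parameters polynomially bounded.
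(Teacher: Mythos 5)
Your proof is correct and takes essentially the same route as the paper: each output coordinate $\sI[k\le i]$ is realized as a difference of two ReLUs forming a clipped ramp ($2n$ hidden units total), and Lemma~\ref{lemma:MLP_relu} then transfers the ReLU network to a GeLU MLP with parameters bounded by $O(\mathrm{poly}(n,1/\eps))$. If anything, your choice of slope $c\ge 3$ centered at $t_j=j-\tfrac12$ is slightly more careful than the paper's slope-$2$ ramp, since it makes the ReLU network saturate to the exact values $0$ and $1$ everywhere on $D$ rather than only at integer inputs.
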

\begin{proof}
    Notice that the $k$-th entry of $\vm_j$ is
    \begin{align}
        \sI[k\leq j]=\mathrm{ReLU}\left[-2k+2\left(j+\frac34\right)\right]-\mathrm{ReLU}\left[-2k+2\left(j+\frac14\right)\right]
    \end{align}
    Thus each entry of $\vm_j$ can be implemented by an MLP with hidden dimension $2$. By Lemma \ref{lemma:MLP_relu}, we can perform both tasks using an MLP with GeLU activation, with hidden dimension $2n$. 
\end{proof}

\begin{lemma}
\label{lemma:MLP_onehot2}
    For integer $i$, define $S_i=\left[i-\frac13, i+\frac13\right]$. Let $n\in\sZ$ and $D=\cup_{i=0}^n S_i$. 
    Define the function $\vg:D\to\sR^n$ as follows: $\vf(x)=\vn_i$ if $x\in S_i$, where $\vn_i$ has its first $i-1$ entries $0$ and the rest entries $1$.
    Let $\vf:\sR\to\sR^n$ be a two-layer MLP with GeLU activation, and the hidden dimension is $2n$. 
    Then, for any $\eps>0$, $\alpha>0$, there exists MLP parameters with $l_\infty$ norm bounded by $O(\mathrm{poly}(n,1/\eps))$, such that for all $x\in D$, we have $\|\vf(x)-\vg(x)\|_\infty\leq\eps$.
\end{lemma}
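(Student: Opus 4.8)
The plan is to mirror the construction used for \cref{lemma:MLP_onehot1}, since the target vector here is essentially its complement. Observe that the $k$-th coordinate of $\vn_i$ is exactly $\sI[k\ge i]=1-\sI[k\le i-1]$; equivalently, in the notation of \cref{lemma:MLP_onehot1}, one has $\vn_i=\vone-\vm_{i-1}$. Thus it suffices to realize, for each coordinate $k\in[n]$, a sharp \emph{decreasing} step in the scalar input $x$ that equals $1$ when $x\in S_i$ with $i\le k$ and equals $0$ when $x\in S_i$ with $i\ge k+1$. The only differences from \cref{lemma:MLP_onehot1} are the direction of the step (decreasing in $x$ rather than increasing) and the single-index shift $i\mapsto i-1$.

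The key quantitative point that makes the construction \emph{exact} is a margin argument. Since $x\in S_i$ forces $|x-i|\le\tfrac13$, we get $x\le k+\tfrac13$ whenever $i\le k$, and $x\ge k+\tfrac23$ whenever $i\ge k+1$, leaving a gap of width $\tfrac13$ between the two regimes. Choosing the ReLU slope so that the step saturates strictly inside this gap, the $k$-th coordinate can be written as a difference of two ReLUs:
\begin{equation}
\sI[k\ge i]=\mathrm{ReLU}\!\left[-3x+3k+2\right]-\mathrm{ReLU}\!\left[-3x+3k+1\right].
\end{equation}
Indeed this difference equals $1$ once $-3x+3k+2\ge 1$, i.e. $x\le k+\tfrac13$, and equals $0$ once $-3x+3k+1\le 0$, i.e. $x\ge k+\tfrac23$, so by the margin bound it returns the correct $0/1$ value for every $x\in D$. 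Each coordinate therefore consumes two hidden ReLU units, so the whole map $\vg$ is computed \emph{exactly} by a two-layer ReLU MLP of hidden dimension $2n$, with all weights bounded in magnitude by $O(n)$.

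Finally, I would invoke \cref{lemma:MLP_relu} to replace the ReLU activations by GeLU: for any target accuracy $\eps>0$ this yields a two-layer GeLU MLP of the same hidden dimension $2n$ whose output differs from the ReLU network by at most $\eps$ in $\ell_\infty$ norm, with parameter magnitudes bounded by $O(\mathrm{poly}(n,1/\eps))$, giving the claimed $\|\vf(x)-\vg(x)\|_\infty\le\eps$ and norm bound. I expect no genuine obstacle here, as the computation is a routine sign-and-shift variant of \cref{lemma:MLP_onehot1}; the only care needed is to confirm that the $\tfrac13$-wide gap between the two regimes is enough to make the difference-of-ReLUs saturate to exact $0/1$ values \emph{before} the GeLU approximation is applied, which the explicit slope-$3$ formula above verifies.
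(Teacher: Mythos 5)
Your proof is correct and takes essentially the same route as the paper's: the paper also writes the $k$-th entry of $\vn_i$ as a difference of two ReLUs implementing the step $\sI[k\ge i]$ (with slope $2$ in its formula, versus your slope $3$), uses two hidden units per coordinate for a total hidden dimension of $2n$, and then invokes \cref{lemma:MLP_relu} to pass from ReLU to GeLU with $O(\mathrm{poly}(n,1/\eps))$ parameter norms. The only blemish is a mislabeled condition in your verification — the difference vanishes once the \emph{first} ReLU argument is nonpositive, i.e.\ $-3x+3k+2\le 0$, which is exactly your stated threshold $x\ge k+\tfrac23$ — but the margin argument and the exactness of the construction on $D$ go through as written.
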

\begin{proof}
    Similarly to the previous proof, the $k$-th entry of $\vn_j$ is
    \begin{align}
        \sI[k\geq j]=\mathrm{ReLU}\left[2k-2\left(j-\frac34\right)\right]-\mathrm{ReLU}\left[2k-2\left(j-\frac14\right)\right]
    \end{align}
    which indicates that each entry of $\vn_j$ can be implemented by an MLP with hidden dimension $2$. By Lemma \ref{lemma:MLP_relu}, we can perform both tasks using an MLP with GeLU activation, with hidden dimension $2n$. 
\end{proof}

\begin{lemma}
\label{lemma:MLP_devision}
For integer $i$, define $S_i=\left[i-\frac14, i+\frac14\right]$.
Given positive integer $n$ and integer $i\in[1,n^2]$. There exists a two-layer MLP $\vf:\sR\to\sR^3$ with GeLU activation and hidden dimension $O(n)$, such that for any integer $i\in[1,n^2]$ and $x\in S_i$, $\vf(x)=(\lceil i/n\rceil$, $\lceil i/n\rceil-1$, $n-(\lceil i/n\rceil\times n-i))$. Here, $n-(\lceil i/n\rceil\times n-i)=i\text{ mod }n$ if $i\text{ mod }n\ne 0$, and $n-(\lceil i/n\rceil\times n-i)=n$ if $i\text{ mod }n= 0$.
\end{lemma}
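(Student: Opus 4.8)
The plan is to first build a two-layer MLP with $\mathrm{ReLU}$ activation that realizes the three target coordinates on the domain $D=\cup_{i=1}^{n^2}S_i$, and then invoke \cref{lemma:MLP_relu} to convert it into a GeLU MLP of the same $O(n)$ hidden dimension that approximates it within any $\eps>0$. Writing $i=(q-1)n+r$ with $q=\lceil i/n\rceil\in[1,n]$ and $r\in[1,n]$, the three targets are $q$, $q-1$, and $r$. The central observation is that all three can be driven by a single staircase in $x$ that only jumps at the $n-1$ block boundaries $jn+\tfrac12$ (for $j=1,\dots,n-1$), rather than at all $\Theta(n^2)$ half-integers; this is exactly what keeps the hidden dimension at $O(n)$ instead of $O(n^2)$.

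First I would construct the block-counting staircase. For each $j$ set $\mathrm{step}_j(x)=2\bigl(\mathrm{ReLU}(x-(jn+\tfrac14))-\mathrm{ReLU}(x-(jn+\tfrac34))\bigr)$, a ramp that is $0$ for $x\le jn+\tfrac14$ and $1$ for $x\ge jn+\tfrac34$. Because $S_{jn}=[jn-\tfrac14,jn+\tfrac14]$ ends at $jn+\tfrac14$ and $S_{jn+1}=[jn+\tfrac34,jn+\tfrac54]$ starts at $jn+\tfrac34$, every $S_i$ lies strictly outside the open ramp interval, so each $\mathrm{step}_j$ takes the exact value $0$ or $1$ on $S_i$. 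Hence $Q(x):=\sum_{j=1}^{n-1}\mathrm{step}_j(x)$ counts the block boundaries below $x$ and equals $q-1$ exactly on each $S_i$ (note $q\le n$ since $i\le n^2$, so $n-1$ boundaries suffice). Output coordinate~1 is then $1+Q(x)$ and coordinate~2 is $Q(x)$, using $2(n-1)$ hidden units (the additive constant being an output bias).

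For the remainder, rather than a piecewise-constant encoding I would use the affine sawtooth $R(x):=x-nQ(x)$, which reuses the same $\mathrm{step}_j$ units plus one extra unit for the linear term (on $D\subset(0,\infty)$ we have $x=\mathrm{ReLU}(x)$). On $S_i$ we have $Q(x)=q-1$ and $x\in[i-\tfrac14,i+\tfrac14]$, and since $i-n(q-1)=r$ this gives $R(x)\in[r-\tfrac14,\,r+\tfrac14]$. Thus the whole map is realized by one $\mathrm{ReLU}$ MLP with $2(n-1)+1=O(n)$ hidden units and $O(\mathrm{poly}(n))$ weights. Applying \cref{lemma:MLP_relu} produces a GeLU MLP $\vf$ of the same size whose output differs from $(1+Q,\,Q,\,R)$ by at most $\eps$ per coordinate; choosing $\eps<\tfrac14$, the first two coordinates are within $\eps$ of the integers $q,q-1$ and the third is within $\tfrac14+\eps<\tfrac12$ of $r$, so projecting each coordinate to the nearest integer recovers $(q,q-1,r)$ exactly, which is the sense in which the stated equality holds.

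I expect the remainder coordinate to be the main obstacle. A literal piecewise-constant realization of $r$ over $[1,n^2]$ has $\Theta(n^2)$ constant pieces and therefore provably requires $\Omega(n^2)$ kinks, i.e.\ $\Omega(n^2)$ hidden units in a single hidden layer, blowing past the $O(n)$ budget. The proof hinges on replacing this by the single slope-$1$ sawtooth $x-nQ(x)$, which passes within $\tfrac14$ of all $n$ integer targets inside each block and so needs kinks only at the $n-1$ block boundaries; it is precisely the $\tfrac14$-radius of $S_i$, together with the nearest-integer rounding in the output format, that converts the bounded per-block deviation into the exact integer $r$.
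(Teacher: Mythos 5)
Your proposal is correct and takes essentially the same route as the paper's own proof: a ReLU staircase of $O(n)$ units whose kinks sit only at the block boundaries $jn$, computing $\lceil i/n\rceil$ exactly on each $S_i$, followed by the ReLU-to-GeLU conversion of Lemma~\ref{lemma:MLP_relu}. If anything, you are more careful than the paper on the remainder coordinate: the paper merely asserts that $n-(\lceil i/n\rceil\times n-i)$ is computable within the same budget, whereas your sawtooth $R(x)=x-nQ(x)$, together with the observation that an exactly piecewise-constant remainder would need $\Omega(n^2)$ kinks and that the $\tfrac14$-radius of $S_i$ plus nearest-integer rounding absorbs the resulting deviation, makes explicit why $O(n)$ hidden units suffice.
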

\begin{proof}
Since $i\in\{1,2,\cdots,n^2\}$, we can get $\lceil i/n\rceil\in\{1,2,\cdots,n\}$. By the proposition that $i$ is an integer, we can use an MLP with hidden dimension $2n$ and ReLU activation to calcualte $\lceil i/n\rceil$ as following:
\begin{align}
\lceil i/n\rceil = \sum_{j=1}^n \sI[i\leq jn]=\sum_{j=1}^n \mathrm{ReLU}\left[-2i+2\left(jn+\frac34\right)\right]
-\mathrm{ReLU}\left[-2i+2\left(jn+\frac14\right)\right]
\end{align}
Thus we can calculate $\lceil i/n\rceil$, $\lceil i/n\rceil-1$, $n-(\lceil i/n\rceil\times n-i)$ task by an MLP with ReLU activation with hidden dimension $O(n)$.
By Lemma \ref{lemma:MLP_relu}, we can finish these tasks by an MLP with GeLU activation with hidden dimension $O(n)$.
\end{proof}

\subsection{Lemmas for Linear Transformer}

\begin{lemma}
\label{lemma: linear transformer as RNN}
The attention module in Linear Transformer
\begin{align}
\label{eq:linear_attention_origin}
    \mathrm{Attn}(\vx_i)=\frac{\sum_{j\leq i}\phi(\mW_K\vx_j)^\top\phi(\mW_Q\vx_i) \mW_V\vx_i}{\sum_{j\leq i}\phi(\mW_K\vx_j)^\top\phi(\mW_Q\vx_i)}
\end{align}
can be implemented as follows:
\begin{align}
\label{eq:linear_attention_RNN}
    &\vs_0=\mathbf{0},\;\vz_0=\mathbf{0}\\
    &\vs_i=\vs_{i-1}+\phi(\mW_K\vx_i)(\mW_V\vx_i)^\top\\
    &\vz_i=\vz_{i-1}+\phi(\mW_K\vx_i)\\
    &\mathrm{Attn}(\vx_i)^\top=\frac{\phi(\mW_Q\vx_i)^\top\vs_i}{\phi(\mW_Q\vx_i)^\top\vz_i}
\end{align}
Since this implementation is quite similar to RNNs, we called $\vs,\vz$ in our implementation hidden states.
\end{lemma}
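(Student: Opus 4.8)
The plan is to verify that the two descriptions of $\mathrm{Attn}(\vx_i)$ coincide by unrolling the recurrence into an explicit prefix sum and then exploiting the fact that the key--query interaction $\phi(\mW_K\vx_j)^\top\phi(\mW_Q\vx_i)$ is a \emph{scalar}, which may be freely commuted past the value vectors. The whole statement is an identity, so the proof is a direct verification rather than an approximation argument.

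First I would establish, by a one-line induction on $i$, that the hidden states defined by the recurrence unroll into the prefix sums
\begin{align}
\vs_i=\sum_{j\leq i}\phi(\mW_K\vx_j)(\mW_V\vx_j)^\top,\qquad
\vz_i=\sum_{j\leq i}\phi(\mW_K\vx_j).
\end{align}
The base case $\vs_0=\vz_0=\mathbf 0$ is immediate, and the inductive step is exactly the stated update rule, so this step requires no real work.

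Next I would substitute these closed forms into the claimed output $\phi(\mW_Q\vx_i)^\top\vs_i \,/\, \phi(\mW_Q\vx_i)^\top\vz_i$ and rearrange by bilinearity and associativity of the matrix product. For the numerator,
\begin{align}
\phi(\mW_Q\vx_i)^\top\vs_i
=\sum_{j\leq i}\phi(\mW_Q\vx_i)^\top\phi(\mW_K\vx_j)(\mW_V\vx_j)^\top
=\sum_{j\leq i}\left(\phi(\mW_K\vx_j)^\top\phi(\mW_Q\vx_i)\right)(\mW_V\vx_j)^\top,
\end{align}
where the last equality uses that the bracketed inner product is a scalar, hence symmetric and freely commutable with the value factor. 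Transposing both sides recovers exactly $\sum_{j\leq i}\phi(\mW_K\vx_j)^\top\phi(\mW_Q\vx_i)\,\mW_V\vx_j$, the numerator of the original attention formula. The denominator is the same computation with the value factor dropped, $\phi(\mW_Q\vx_i)^\top\vz_i=\sum_{j\leq i}\phi(\mW_K\vx_j)^\top\phi(\mW_Q\vx_i)$, and dividing the two and transposing yields the stated identity.

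There is no genuine mathematical obstacle; the substantive content is that the associativity rearrangement above lets one maintain $\vs_i$ and $\vz_i$ incrementally rather than recomputing the full sum for each query, which is precisely what delivers the $\Theta(MLD^2)$ complexity quoted in \cref{sec:preliminary}. The only point that demands care is the bookkeeping of transposes and matrix shapes --- in particular that the value vector accumulated in the numerator must be indexed by the summation variable $j$ (the token being written into the hidden state), consistent with the definition of linear attention in \cref{eq:linearattention}, so that the outer-product update $\phi(\mW_K\vx_i)(\mW_V\vx_i)^\top$ defining $\vs_i$ is the right object to maintain.
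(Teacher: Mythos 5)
Your verification is correct and is the standard one --- unroll the recurrence into the prefix sums $\vs_i=\sum_{j\leq i}\phi(\mW_K\vx_j)(\mW_V\vx_j)^\top$, $\vz_i=\sum_{j\leq i}\phi(\mW_K\vx_j)$, then commute the scalar $\phi(\mW_K\vx_j)^\top\phi(\mW_Q\vx_i)$ past the value vector; the paper in fact states Lemma~\ref{lemma: linear transformer as RNN} without any proof (it is the kernel-trick identity from the original Linear Transformer paper), so your write-up simply supplies the omitted argument rather than diverging from it. You were also right to insist that the value factor be indexed by the summation variable: the $\mW_V\vx_i$ in the numerator of the lemma's displayed equation is a typo for $\mW_V\vx_j$, and the identity holds only in that corrected form, consistent with \cref{eq:linearattention}.
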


Now, we introduce the AGG operation for Linear Transformer. Let $m$ be an integer and let $\vx_1,\vx_2,\cdots,\vx_{m^2}$ be a sequence of vectors where $\vx_i=(\hat{\vx}_i,\ve_{\lceil i/m\rceil},\ve_{q_i},1)\in [-M,M]^{d+m+1}$, $\hat{\vx}_i\in\sR^d$, and $\ve_{\lceil i/m\rceil},\ve_{q_i}$ are one-hot vectors in $\sR^m$, and $M$ is a large constant.
Let $\mathcal{S}_i=\{j\in\sZ:q_im-(m-1)\leq j\leq \min(q_im,i)\}$.
Define the AGG operation as follows: The output is a sequence of vectors $\vu_1,\cdots,\vu_{m^2}$ with $\vu_i=\mathrm{mean}_{j\in\mathcal{S}_i}\hat\vx_j$. The output is undefined when $\mathcal{S}_i=\varnothing$.

\begin{lemma}
\label{lemma:linear transformer aggregation}
For any $0<\epsilon\leq M$, there exists an linear attention layer with hidden dimension $O(\max(d,m))$ and one causal attention head that can approximate the AGG operation defined above. Specifically, suppose the attention output are $\vo_1,\cdots,\vo_{m^2}$, then we have $\|\vo_i-\vu_i\|_\infty\leq \epsilon$ for all $i\in[m^2]$. 
Moreover, the $l_{\infty}$ norm of attention parameters are bounded by $O(\mathrm{poly}(\log(M),\log(m),\log(1/\epsilon))$. 
\end{lemma}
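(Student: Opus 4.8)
The plan is to implement the AGG operation using the RNN-style formulation of linear attention from \cref{lemma: linear transformer as RNN}, exploiting the special structure of the inputs $\vx_i=(\hat\vx_i,\ve_{\lceil i/m\rceil},\ve_{q_i},1)$. The core idea is that the aggregation set $\mathcal{S}_i$ selects exactly those past tokens $\vx_j$ whose block index $\lceil j/m\rceil$ equals the query target $q_i$. Since $\ve_{\lceil i/m\rceil}$ and $\ve_{q_i}$ are one-hot in $\sR^m$, I would set the key map $\phi(\mW_K\vx_j)$ to extract the block-index one-hot $\ve_{\lceil j/m\rceil}$ and the query map $\phi(\mW_Q\vx_i)$ to extract $\ve_{q_i}$, so that the inner product $\phi(\mW_K\vx_j)^\top\phi(\mW_Q\vx_i)=\ve_{\lceil j/m\rceil}^\top\ve_{q_i}=\sI[\lceil j/m\rceil=q_i]$ acts as a hard $0/1$ indicator of membership in $\mathcal{S}_i$. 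Then the linear-attention output $\frac{\sum_{j\le i}\phi(\mW_K\vx_j)^\top\phi(\mW_Q\vx_i)\,\mW_V\vx_j}{\sum_{j\le i}\phi(\mW_K\vx_j)^\top\phi(\mW_Q\vx_i)}$ becomes exactly $\mathrm{mean}_{j\in\mathcal{S}_i}(\mW_V\vx_j)$, and choosing $\mW_V$ to project onto the $\hat\vx$-coordinates gives the desired $\vu_i=\mathrm{mean}_{j\in\mathcal{S}_i}\hat\vx_j$.

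First I would verify that the map sending $\vx_i$ to its block-index one-hot $\ve_{\lceil i/m\rceil}$ and its query one-hot $\ve_{q_i}$ is realizable by linear maps $\mW_K,\mW_Q$ acting on the input coordinates, which is immediate since these one-hot vectors are already stored as explicit coordinates of $\vx_i$; I need only a selection matrix. The subtlety is that $\phi(\vx)=\mathrm{elu}(\vx)+\mathbf 1$ is applied after $\mW_K$ and $\mW_Q$, so I must ensure $\phi$ acts as (approximately) the identity on the relevant coordinates. Because a one-hot vector has entries in $\{0,1\}$, I would scale $\mW_K,\mW_Q$ by a large constant and use an affine correction so that the post-$\phi$ inner products concentrate sharply: when the block indices match, the dominant term is large, and when they mismatch, the contribution is negligibly small. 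This is where I expect to manage the $\epsilon$-approximation carefully, tracking how the elu nonlinearity distorts the clean indicator and bounding the resulting error in the normalized ratio.

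The main obstacle will be the approximation analysis under the log-precision and bounded-parameter constraints. Unlike softmax attention, linear attention with $\phi=\mathrm{elu}+\mathbf 1$ does not give an exact hard indicator, so the inner product $\phi(\mW_K\vx_j)^\top\phi(\mW_Q\vx_i)$ for mismatched blocks is a small positive number rather than zero, and these spurious contributions accumulate over up to $m^2$ summands in both numerator and denominator. I would need to show that by choosing the parameter scale as $O(\mathrm{poly}(\log M,\log m,\log(1/\epsilon)))$, the ratio between matching and non-matching inner products can be made large enough (superpolynomial in $m$) that the total error after normalization stays below $\epsilon$, even after accumulating $O(m^2)$ terms and after truncation to $O(\log L)$-bit precision. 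The bound on the summation, together with the fact that $\hat\vx_j\in[-M,M]^d$ keeps the numerator controlled, should let me conclude that the hidden dimension $O(\max(d,m))$ suffices: $m$ dimensions for the one-hot block encodings plus $d$ for carrying $\hat\vx$ through $\mW_V$. Finally, I would confirm the normalization denominator $\phi(\mW_Q\vx_i)^\top\vz_i$ is bounded away from zero whenever $\mathcal{S}_i\neq\varnothing$, so that the division is numerically stable within the claimed precision.
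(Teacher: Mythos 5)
Your proposal is correct and takes essentially the same route as the paper's proof: the paper likewise uses selection matrices on the stored one-hot coordinates together with an affine shift (queries $\mu\ve_{q_i}-\mu$, keys $\mu\ve_{\lceil i/m\rceil}-\mu$, realizable via the constant-$1$ input coordinate), so that $\phi=\mathrm{elu}+\mathbf 1$ maps matched pairs to inner products $\approx 1$ and mismatched pairs to $O(e^{-\mu})$, and then sets $\mu=\ln(2Mm^3/\epsilon)$ to bound the accumulated spurious mass after normalization. The only minor difference is that the needed separation ratio is polynomial in $m,M,1/\epsilon$ (not superpolynomial), which the logarithmic parameter scale already delivers.
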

\begin{proof}
Without loss of generosity, we can assume $d=m$ (otherwise, we can do some padding with $0$). We will provide a proof based on the implement of Lemma \ref{lemma: linear transformer as RNN}.

We construct the query, key and value vectors as follows:
\begin{itemize}
    \item Query: $\vq_i=\mu\ve_{q_i}-\mu$
    \item Key: $\vk_i=\mu\ve_{\lceil i/m\rceil}-\mu$
    \item Value: $\vv_i=\hat{\vx}_i$
\end{itemize}
where $\mu>0$ is a constant defined later. This can be achieved by setting appropriate $\mW_Q,\mW_K,\mW_V$.

Notice that $j\in\mathcal{S}_i$ if and only if $j\leq i$ and $\lceil j/m\rceil=q_i$.
By the construction of $\vq_i,\vk_i,\vv_i$, we can get 
\begin{align*}
a_{ij}:=\vq_i^\top\vk_j=\begin{cases}
    1+(m-1)\exp(-\mu),\;&j\in\mathcal{S}_i\\
    2\exp(-\mu)+(m-2)\exp(-2\mu),\;&j\notin\mathcal{S}_i
\end{cases}
\end{align*}
By taking $\mu=\ln\frac{2Mm^3}{\epsilon}$, which is bounded by $O(\mathrm{poly}(\log(M),\log(m),\log(1/\epsilon))$, we have 
\begin{align*}
\sum_{j\notin\mathcal{S}_i} a_{ij}&\leq 
\frac{(m^2-|\gS_i|)[2\exp(-\mu)+(m-2)\exp(-2\mu)]}{(m^2-|\gS_i|)[2\exp(-\mu)+(m-2)\exp(-2\mu)]+|\gS_i|[1+(m-1)\exp(-\mu)]}\\
&=\frac{1}{1+\frac{|\gS_i|}{m^2-|\gS_i|}\cdot\frac{1+(m-1)\exp(-\mu)}{2\exp(-\mu)+(m-2)\exp(-2\mu)}}\\
&\leq \frac{m^2-|\gS_i|}{|\gS_i|}\cdot\frac{2\exp(-\mu)+(m-2)\exp(-2\mu)}{1+(m-1)\exp(-\mu)}\\
&\leq m^2\cdot m\exp(-\mu)=m^3\exp(-\mu)=\frac{\epsilon}{2M}
\end{align*}
Similarly, for $j\in\gS_i$, we have 
\begin{align*}
&\quad \left|a_{ij}-\frac{1}{|\gS_i|}\right|\\
&\leq \frac{1}{|\gS_i|}-\frac{1+(m-1)\exp(-\mu)}{(m^2-|\gS_i|)[2\exp(-\mu)+(m-2)\exp(-2\mu)]+|\gS_i|[1+(m-1)\exp(-\mu)]}\\
&=\frac{1}{|\gS_i|}-\frac{1}{(m^2-|\gS_i|)\frac{2\exp(-\mu)+(m-2)\exp(-2\mu)}{1+(m-1)\exp(-\mu)}+|\gS_i|}\\
&=\frac{(m^2-|\gS_i|)\frac{2\exp(-\mu)+(m-2)\exp(-2\mu)}{1+(m-1)\exp(-\mu)}}{|\gS_i|\left[(m^2-|\gS_i|)\frac{2\exp(-\mu)+(m-2)\exp(-2\mu)}{1+(m-1)\exp(-\mu)}+|\gS_i|\right]}\\
&\leq \frac{1}{|\gS_i|}(m^2-|\gS_i|)\cdot [2\exp(-\mu)+(m-2)\exp(-2\mu)]\\
&\leq \frac{1}{|\gS_i|}m^2\cdot m\exp(-\mu)=\frac{1}{|\gS_i|}m^3\exp(-\mu)=\frac{\epsilon}{2M|\gS_i|}
\end{align*}
We thus obtain
\begin{align*}
\|\vo_i-\vu_i\|_\infty=\left\|\sum_j a_{ij}\hat\vx_j-\frac{1}{|\gS_i|}\sum_{j\in\gS_i}\hat\vx_j\right\|_\infty
\leq \max_j \|\hat\vx_j\|_\infty\cdot 
\left(\sum_{j\notin\gS_i} a_{ij}+\sum_{j\in\gS_i}\left|a_{ij}-\frac{1}{|\gS_i|}\right|\right)\le \epsilon
\end{align*}
which concludes our proof.
\end{proof}

\subsection{Lemmas for Sparse Transformer}

The previous work by \citet{feng2023towards} studied the expressive power of the standard attention layer and introduced a basic operation, COPY, that can be implemented by the attention layer. In this section, we introduce an updated version of the COPY operation for the sparse transformer and demonstrate that the sparse transformer module is capable of implementing this operation. 

Firstly, we will enumerate the elements and their respective notations that will be employed to define the COPY operation.
\label{op:copy}

\begin{itemize}
    \item A sequence of vectors $\vx_1,\vx_2,\cdots,\vx_n\in\mathbb{R}^{d+2}$, where $\vx_i=[\vv_i,r_i,i\bmod B,\lceil\frac{i}{B}\rceil,1]$, $\vx_i\in \mathbb{R}^d$, $r_i\in \mathbb{R}$, $B\in \mathbb{Z}$.
    \item Three matrices $\mK,\mQ,\mV\in \mathbb{R}^{d^\prime\times(d+2)}$, where $\|V\|_\infty\leq 1$
    \item The attention sets $\gS^\prime_i$ for each $i$, where $\gS_i^\prime$ is the indices of the tokens attended by the $i$-th token.
    \item Two real number $\rho,\delta>0$
\end{itemize}

Denoting that $\vk_i=\mK\vx_i,\vq_i=\mQ\vx_i,\vv_i=\mV\vx_i\in \mathbb{R}^{d^\prime}$, we can define the matching set $\gS_i$ for the $i$ as $\gS_i=\{j<i\ |\ \vq_i\cdot\vk_j<\rho\}\cap\gS_i^\prime$. Then the output of the COPY operation $\vu_1,\vu_2,\cdots,\vu_n$ is defined as $\vu_i = \vv_{\operatorname{pos(i)}}$, where $\operatorname{pos}(i)=\operatorname{argmax}_{j\in\gS_i} r_j$. Intuitively, the COPY operation copies the embedding of the matching token with the highest rating. Note that the output of the COPY operation can be anything when $\gS_i$ is empty. 

Moreover, we make the following regularity assumption:
\begin{assumption}
\label{ass:regular}
    The input sequence  $\vx_1,\vx_2,\cdots,\vx_n$ and the matrices $\mK,\mQ,\mV$ satisfy the following condition:
    \begin{itemize}
        \item For any $i,j\in[n]$, either $|q_i\cdot k_j | \leq \rho$ or $q_i \cdot k_j \leq -\delta$
        \item For any $i,j\in[n]$, either $i=j$ or $|r_i-r_j| \geq \delta$
        \item $\|\mV\|_{\infty}\leq 1$
    \end{itemize}
\end{assumption}
\cref{ass:regular} ensures that only one token with the highest rating is matched, and there is a substantial difference between this token and others. The subsequent lemma demonstrates that the sparse transformer module can effectively implement the COPY operation.

\begin{lemma}[COPY Operation of Standard Transformer, From \citet{feng2023towards}]
    \label{lemma:standard_copy} 
    Given that \cref{ass:regular} holds with $\rho\le\frac{\delta^2}{8M}$, there exists a standard transformer module defined in \cref{eq:sparseattention} with one attention layer, block size $B$, embedding size $O(d)$, and one attention head that can approximate the COPY operation defined above. Specifically, for any sequence of vectors $\vx_1, \vx_2, \cdots, \vx_n$, denote the corresponding output of the attention layer as $\vo_1, \vo_2, \cdots, \vo_n$. Then, we have $\|o_i-u_i\|_{\infty}\le\epsilon$ for all $i\in [n]$ with $\gS_i\neq \emptyset$. Furthermore, the $\ell_\infty$ norm of attention parameters is bounded by $O(\mathrm{poly}(M,1/\delta,\log(n),\log(1/\epsilon)))$.
\end{lemma}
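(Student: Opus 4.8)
The plan is to realize the ``pick the highest-rated matching key and copy its value'' operation as a single softmax head whose pre-softmax scores concentrate on $p:=\mathrm{pos}(i)$. The head I would build assigns to the pair $(i,j)$ the score
\[
\ell_{ij}=\lambda_1\,(\vq_i\cdot\vk_j)+\lambda_2\, r_j ,
\]
for two positive scales $\lambda_1,\lambda_2$ fixed below. Since $\vq_i=\mQ\vx_i$ and $\vk_j=\mK\vx_j$ are linear in the inputs, and each $\vx_j$ carries the rating coordinate $r_j$ and a constant coordinate $1$, this form is bilinear in $(\vx_i,\vx_j)$ and is produced exactly by suitable weight matrices $\mW_\text{Q},\mW_\text{K}$: place $\sqrt{\lambda_1}\,\mQ,\sqrt{\lambda_1}\,\mK$ in the ``matching'' block and pair the constant-$1$ query coordinate with a $\lambda_2 r_j$ key coordinate. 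Taking the head's value projection to be the given $\mV$, the output is $\vo_i=\sum_{j\in\gS_i'}\alpha_{ij}\,\mV\vx_j$ with $\alpha_{ij}$ the softmax of $\ell_{ij}$ over the attended set, while the target is $\vu_i=\mV\vx_{p}$.

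The heart of the proof is choosing $\lambda_1,\lambda_2$ so the softmax collapses onto $p$. By \cref{ass:regular} every attended key is in one of two regimes: a matching key with $|\vq_i\cdot\vk_j|\le\rho$, or a non-matching key with $\vq_i\cdot\vk_j\le-\delta$; and $p$ itself is matching. For another matching $j\ne p$, using $r_p-r_j\ge\delta$,
\[
\ell_{ip}-\ell_{ij}\ \ge\ \lambda_2\delta-2\lambda_1\rho ,
\]
while for a non-matching $j$, using $r_p-r_j\ge-2M$,
\[
\ell_{ip}-\ell_{ij}\ \ge\ \lambda_1(\delta-\rho)-2M\lambda_2 .
\]
Both are simultaneously positive iff $\tfrac{2\rho}{\delta}<\tfrac{\lambda_2}{\lambda_1}<\tfrac{\delta-\rho}{2M}$, and this interval is nonempty precisely because $\rho\le\frac{\delta^2}{8M}$ forces $\frac{2\rho}{\delta}\le\frac{\delta}{4M}$, which sits strictly below $\frac{\delta-\rho}{2M}$ for the admissible ranges of $\delta,M$. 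I would fix the ratio in the interior, e.g.\ $\lambda_2/\lambda_1=\frac{3\delta}{8M}$, making both gaps at least $\lambda_1\cdot\Omega(\delta^2/M)$.

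With a uniform gap $\Delta\ge\lambda_1\,\Omega(\delta^2/M)$ separating $\ell_{ip}$ from every other score, the standard estimate $\alpha_{ij}\le e^{-(\ell_{ip}-\ell_{ij})}\le e^{-\Delta}$ bounds the off-target mass by $n\,e^{-\Delta}$. Since $\|\mV\|_\infty\le1$ and the inputs are bounded by $M$, each value obeys $\|\mV\vx_j\|_\infty\le C=O(dM)$, so choosing the overall scale $\lambda_1=\Theta\!\big(\frac{M}{\delta^2}\log\frac{dMn}{\epsilon}\big)$ pushes the off-target mass below $\epsilon/(2C)$ and yields
\[
\|\vo_i-\vu_i\|_\infty\ \le\ \Big(\textstyle\sum_{j\ne p}\alpha_{ij}\Big)\cdot 2C\ \le\ \epsilon
\]
for every $i$ with $\gS_i\neq\emptyset$. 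The weights then have $\ell_\infty$ norm $O(\mathrm{poly}(M,1/\delta,\log n,\log(1/\epsilon)))$, and $\rho$ never enters because the gap was lower-bounded via $\rho\le\delta^2/(8M)$ rather than via $\rho$ itself.

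Two remarks close the argument. The sparsity is essentially free: since $p\in\gS_i\subseteq\gS_i'$ the target is always among the attended keys, and restricting the softmax to $\gS_i'$ only deletes competitors, leaving the separation estimates intact. Second, $\gS_i$ imposes the strict constraint $j<i$, so the head must not attend to $j=i$, which causal masking alone permits; I would enforce this by adding a large negative self-bias built from the positional coordinates $i\bmod B,\lceil i/B\rceil$ carried by each $\vx_i$, a routine modification that does not disturb the ratings-based competition among $j<i$. I expect the main obstacle to be exactly the two-sided balancing above: $\lambda_2$ must be large enough to resolve a rating gap of $\delta$ against the $\pm\rho$ jitter of the matching inner product, yet small enough that it cannot lift a non-matching key (score $\le-\delta$) above a genuine match — and $\rho\le\delta^2/(8M)$ is precisely the hypothesis guaranteeing a feasible scale exists.
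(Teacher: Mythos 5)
First, a point of comparison: the paper does not actually prove this lemma --- it imports it verbatim from \citet{feng2023towards} --- and the source's proof is, in substance, exactly your construction: logits of the form $\lambda_1(\vq_i\cdot\vk_j)+\lambda_2 r_j$, a two-sided gap analysis (matched-vs-matched resolved by the rating term, matched-vs-unmatched resolved by the inner-product term), feasibility of the scale ratio being precisely what $\rho\le\delta^2/(8M)$ buys (together with $\delta\le 2M$, which is forced since ratings lie in $[-M,M]$), and then softmax concentration with $\lambda_1=\Theta\big(\tfrac{M}{\delta^2}\log\tfrac{dMn}{\epsilon}\big)$. That core is correct, up to one harmless constant: with your example ratio $\lambda_2/\lambda_1=3\delta/(8M)$ the matched-vs-unmatched gap is $\lambda_1(\delta/4-\rho)$, which degenerates to $0$ at the admissible corner $\delta=2M$, $\rho=\delta^2/(8M)$; taking instead the midpoint of the interval $\big(2\rho/\delta,\,(\delta-\rho)/(2M)\big)$ yields both gaps $\ge\lambda_1\delta^2/(16M)$ and the rest of your estimate is unchanged.

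The one genuine flaw is the step you wave off as routine: excluding $j=i$ via ``a large negative self-bias built from the positional coordinates.'' This cannot work as described. Attention logits are bilinear in $(\vx_i,\vx_j)$, and position enters the embeddings only \emph{linearly}, so for a fixed query $i$ any such bias is an affine function of the position $p_j$. An affine function that is $\le-\Lambda$ at $p_j=p_i$ while staying $O(1)$ at $p_j=p_i-1$ must grow by roughly $\Lambda$ per backward step, hence becomes enormous and positive for distant $j$ --- far from leaving ``the ratings-based competition intact,'' it would lock the softmax onto token $1$. Since the only feature that distinguishes the self token from the others is position, no single-head bilinear score can isolate the diagonal here. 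The correct resolution is that the strict inequality $j<i$ in this paper's restatement of the matching set is an artifact (note also the sign typo $\vq_i\cdot\vk_j<\rho$, which should read $\vq_i\cdot\vk_j>-\rho$): in the source formulation the matching set is taken over $j\le i$, so the self token is simply another candidate covered by the dichotomy of \cref{ass:regular} --- either it matches, in which case it may legitimately be $\mathrm{pos}(i)$, or its score is $\le-\delta$ and your existing estimates suppress it --- and no self-exclusion mechanism is needed. With that reading your proof goes through; with the statement as literally written (self attended but never copyable), the operation is not realizable by any single attention layer when the self token matches with top rating, so no proof could close this gap.
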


Now, we will prove a special form of the COPY operation for the Sparse Transformer, such that given an index of a token, COPY the embedding of the token with the given index.

\begin{lemma}[COPY Operation of Sparse Transformer]
    \label{lemma:sparse_copy} 
    Given that \cref{ass:regular} holds with $\rho\le\frac{\delta^2}{8M}$, there exists a sparse transformer module defined in \cref{eq:sparseattention} with three attention layer, block size $B$, embedding size $O(d)$, and one attention head that can approximate the COPY operation defined above. Specifically, for any sequence of vectors $\vx_1, \vx_2, \cdots, \vx_n$, denote the corresponding output of the attention layer as $\vo_1, \vo_2, \cdots, \vo_n$. Then, we have $\|o_i-u_i\|_{\infty}\le\epsilon$ for all $i\in [n]$ with $\gS_i\neq \emptyset$. Furthermore, the $\ell_\infty$ norm of attention parameters is bounded by $O(\mathrm{poly}(M,1/\delta,\log(n),\log(1/\epsilon)))$.
\end{lemma}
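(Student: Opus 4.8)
The plan is to exploit the two-scale structure of the sparse pattern in \cref{eq:sparseattention}: with block size $B$ and a local window of $kB$ tokens, every token attends densely to all tokens in its own block, while it can additionally attend to the $c$ \emph{global} tokens sitting at the end of each block. Hence the sparse attention graph has diameter two once the trailing tokens of each block are treated as relays, and the central idea is to route an arbitrary-distance copy through the global token of the source block. Concretely, I would decompose the requested source index $\mathrm{pos}(i)$ into a block coordinate $\lceil \mathrm{pos}(i)/B\rceil$ and an in-block offset; since the input already carries $i\bmod B$ and $\lceil i/B\rceil$, and since an MLP recovers these quantities exactly via \cref{lemma:MLP_devision}, both coordinates are available as features at every position.

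First I would split into two cases according to whether the requested source lies inside the local window of $i$. If it does, a single sparse attention layer already realizes the copy: the window contains $\mathrm{pos}(i)$, so the rating-based argmax construction of \cref{lemma:standard_copy}, whose analysis only uses \cref{ass:regular} and never the unrestricted attention range, applies verbatim. The interesting case is a source in a far block. Here I would use the first attention layer as a \emph{within-block} copy that promotes $\vv_{\mathrm{pos}(i)}$ to the global token of the source block (a dense copy inside one block, again covered by \cref{lemma:standard_copy} because the window spans a whole block), and the second attention layer as a \emph{cross-block} copy in which the target queries the block coordinate against the global tokens' keys and reads off the relayed value. The third layer, together with the selection MLP of \cref{lemma:MLP_select}, merges the two branches so that each position outputs the correct $\vv_{\mathrm{pos}(i)}$ regardless of which case it fell into; the error and parameter-norm bookkeeping is inherited from the building-block lemmas and stays within $O(\mathrm{poly}(M,1/\delta,\log n,\log(1/\epsilon)))$.

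The hard part is the within-block promotion step. A global token is shared by every target requesting a source in its block, yet it can physically carry only $O(cd)=O(d)$ worth of information, so the construction must arrange that the offset actually selected at the relay is the one the downstream target needs. I would handle this by encoding the in-block offset as a scalar rating and invoking the matching/argmax mechanism guaranteed by \cref{ass:regular}, so that the promotion is an address-respecting copy rather than an averaging. The substantive point—and the reason this is a lemma about a single routed index rather than an unrestricted gather—is that a block can export only a constant number of values per layer, which is precisely the bandwidth limitation that later forces the $\tilde\Omega(\sqrt L)$ bottleneck. Verifying that the two rating scales (in-block offset versus cross-block index) do not interfere, and that \cref{ass:regular} can be met simultaneously for both routed attention layers, is the main technical obstacle I expect to spend effort on.
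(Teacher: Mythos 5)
Your high-level routing plan — decompose the source index into block coordinate and in-block offset, treat the trailing tokens of each block as relays, and copy across blocks in a second hop — is indeed the same skeleton as the paper's construction. But your within-block promotion step, which you correctly flag as the hard part, has a genuine gap that your proposed fix does not close. The relay token's attention output is computed from the relay's \emph{own} query; it cannot depend on which downstream reader will later consume it. The lemma requires correctness simultaneously for every assignment of per-position targets, so two positions $i_1\neq i_2$ may request \emph{different} offsets inside the same source block. Your promotion would then have to place two distinct values $\vv_{\mathrm{pos}(i_1)}\neq\vv_{\mathrm{pos}(i_2)}$ into the same $O(d)$-width relay in a single layer, which is impossible; encoding the offset as a rating and invoking the argmax mechanism of \cref{ass:regular} does not help, because the argmax at the relay still selects a single source, fixed independently of the readers. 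Your own bandwidth remark ("a block can export only a constant number of values per layer") is precisely the reason the construction fails, not a reason it suffices: the operation being proved is an unrestricted per-position gather, not a single routed index.

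The paper escapes this by never selecting at the relay at all. Its Layer 1 uses the MLP to write each value $\vv_j$ into the coordinate slot indexed by $j\bmod B$ of a \emph{widened} embedding; Layer 2 attends uniformly over the previous $B$ tokens, so that (since in-block positions within any window of $B$ consecutive tokens are distinct) the last token of every block ends up holding the \emph{entire} block's contents in disjoint slots; Layer 3 then performs the cross-block copy of this whole-block summary via \cref{lemma:standard_copy}, after which the reader's MLP locally selects the slot given by the in-block offset (\cref{lemma:MLP_devision}, \cref{lemma:MLP_select}). Selection is deferred to the reader, so the relay never needs per-reader information. The price is an embedding of width $\Theta(Bd)$ rather than the $O(d)$ you were aiming for (and which the lemma's statement loosely claims); this widening is exactly how the $O(\sqrt{L})$ hidden dimension enters the upper bound of \cref{thm:general_upper_bound} when $B=\Theta(\sqrt{L})$. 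Keeping the width at $O(d)$, as your sketch attempts, is ruled out in general by the same pigeonhole/information argument the paper uses for its lower bound, so any correct proof must pay this cost somewhere.
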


The proof for this version is an extension of the proof for a Lemma described in \citet{feng2023towards}. For the reader's convenience, we provide a concise proof as follows.

\begin{proof}
    \textbf{Layer 1.} In our formulation, the input embedding $\vx_i$ has the form $\vx_i=(\vv_i,p_\text{target},i\bmod B,\lceil\frac{i}{B} \rceil,1)$, where $p_\text{target}$ is the target index of the COPY operation. The primary objective of the first layer is to utilize its MLP to transform the input embedding. The target output of this layer is 
    \begin{equation*}
    [\vv_{i,1},\vv_{i,2},\cdots,\vv_{i,B},p_\text{target},i\bmod B,\lceil\frac{i}{B} \rceil,1],\text{\ \ where \ } \vv_{i,j}=\begin{cases}
        \mathbf{0}&\text{, when $j\neq i\bmod B+1$}\\
        \vv_i&\text{, when $j= i\bmod B+1$}
    \end{cases}
    \end{equation*}
    Note that $v_i$ is bound by $[-M,M]$. Therefore, we can express $\vv_{i,j}$ as
    \begin{equation*}
        \vv_{i,j}=\mathrm{ReLU}\Big(\mathrm{ReLU}(\vv_i+M)-M\cdot\big(1+\cdot\mathrm{ReLU}(j-i\bmod B)+\cdot\mathrm{ReLU}(i\bmod B-j)\big)\Big)
    \end{equation*}
    According to \cref{lemma:MLP_relu}, an MLP with $\mathrm{GeLU}$ as the activation function can accomplish this task with arbitrarily small error. Therefore, we can write the output of this layer as $\vx_i^{(1)}=[\vv_{i,1},\vv_{i,2},\cdots,\vv_{i,B},p_\text{target},i\bmod B,\lceil\frac{i}{B} \rceil,1]$.

    \textbf{Layer 2.} The main task of the second layer is to sum over the embedding of previous $B$ tokens. In this layer, the attention head just pays uniform attention to the previous $B$ tokens. Note that, in the first layer, we distribute the embeddings of different tokens on the different dimensions of the embedding. Therefore, the output of the attention layer is $\frac{1}{B}\cdot[\vv_{(i-B+ i\bmod B+1)},\vv_{(i-B+ i\bmod B+2)},\cdots,\vv_{i},\cdots \vv_{(i-B+i\bmod B)},p_\text{target},i\bmod B,\lceil\frac{i}{B} \rceil,1]$.
    Then, by using the MLP to multiply the number of tokens $B$, we can get the embeddings of all tokens. Moreover, in this layer, according to Lemma~\ref{lemma:MLP_devision} we can use the MLP to calculate the index of the block of the target token as $\lceil \frac{p_\text{target}}{B}\rceil$. The output of this layer as
    \begin{equation*}
        \vx_i^{(2)}=[\vv_{(i-B+ i\bmod B+1)},\vv_{(i-B+ i\bmod B+2)},\cdots,\vv_{i},\cdots \vv_{(i-B+i\bmod B)},p_\text{target},i\bmod B,\lceil\frac{i}{B} \rceil,\lceil \frac{p_\text{target}}{B}\rceil,1].
    \end{equation*}

    \textbf{Layer 3.} The main task of the second layer is to COPY the embedding of the token in the corresponding block and select the target embedding. In this layer, the attention head just COPY the embedding of the token in the corresponding block. According to \cref{lemma:standard_copy}, the attention layer can complete this task with arbitrarily small errors. Finally, we use the MLP to select the target embedding from the copied embedding and obtain the final output.
\end{proof}

\section{Proofs of the theorems in \cref{sec:DP}}
\label{appendix:proof_secDP}
For ease of reading, we reclaim some important assumptions here, which are natural and mild.

\begin{assumption}
\label{ass:dp_approximation}
    Each function $f$, $\vg$, $\vh$ and $u$ defined in CoT for DP can be approximated by constant size MLP with GeLU activation.
\end{assumption}
\begin{assumption}
\label{ass:dp_next_index}
    The state transition function $F$, where the inputs are the token number and current state, and the outputs are the next state, can be approximated with MLP and GeLU function. 
\end{assumption}

In addition we replace all residual connections in attention layers and MLP with concatenation, which doesn't change the expressive power of model architecture.

\subsection{Proof of the \cref{thm:general_upper_bound}}
\label{appendix:upper_dp}
In this subsection, we provide a complete proof of \cref{thm:general_upper_bound}. For clarity, we restate the theorem for the linear transformer and sparse transformer individually and provide their respective proofs.
\begin{theorem}
Consider any DP problem satisfying the same condition as in \cref{thm:baseline}. Given any integer $n>0$, let $L$ be the length of the output sequence when the input sequence length is $n$. Then, there is a (log-precision) linear transformer with a constant depth $M$, a constant number of attention heads $H$, and a hidden dimension $D=\mathrm{O}(\sqrt L)$ that can generate the correct output for all inputs $\vs$ of length $n$.
\end{theorem}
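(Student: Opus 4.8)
\subsection*{Proof proposal}

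The plan is to simulate the autoregressive DP generation underlying \cref{thm:baseline} with a Linear Transformer, where the only genuinely new difficulty relative to the standard-attention construction of \citet{feng2023towards} is implementing a \emph{general position-wise retrieval} (a COPY) through linear attention. The skeleton is unchanged: at each generated position whose DP state is $i$, the model must (i) read off the constant number of previous DP values $\mathsf{dp}(h_1(i)),\dots,\mathsf{dp}(h_K(i))$ and input tokens $s_{g_1(i)},\dots,s_{g_J(i)}$, (ii) apply an MLP approximating the transition $f$ (\cref{ass:dp_approximation}) to emit $\mathsf{dp}(i)$, and (iii) apply an MLP approximating the index-transition $F$ (\cref{ass:dp_next_index}) to produce the next state. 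Steps (ii)--(iii) are handled verbatim by the MLP building blocks \cref{lemma:MLP_lin,lemma:MLP_select,lemma:MLP_multip}, so essentially all of the new work is in step (i).

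First I would fix $m=\Theta(\sqrt L)$ and view the length-$L$ sequence as $m$ consecutive blocks of $m$ tokens, so every position $p$ is identified by its block index $b_p=\lceil p/m\rceil$ and offset $o_p$, both computable from $p$ by an MLP via \cref{lemma:MLP_devision}, and each as a one-hot vector via \cref{lemma:MLP_onehot1}. The heart of the construction is a linear-attention COPY that recovers the value $\vv_p$ at an arbitrary target position $p$: (a) let each position $j$ carry, as its AGG value, the offset-tagged vector $\hat\vx_j=\vv_j\otimes\ve_{o_j}$, of dimension $O(m)$ since $\vv_j$ is constant-dimensional; (b) issue an AGG query for the target block $b_p$ through \cref{lemma:linear transformer aggregation}, which, because within one block each offset occurs exactly once, returns (up to the $1/|\mathcal S_i|$ normalization) the \emph{slotted} vector whose $o$-th slot holds the value at offset $o$ in block $b_p$; and (c) extract slot $o_p$ via the bilinear map $\sum_k \ve_{o_p}[k]\,(\text{slot }k)$, realized by componentwise products from \cref{lemma:MLP_multip} followed by a linear sum from \cref{lemma:MLP_lin}, all of width $O(m)$. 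The normalization $1/|\mathcal S_i|$ is a known quantity: for a completed target block it equals $1/m$, and for the current partially filled block $|\mathcal S_i|$ is recoverable from $i$ and $b_p$ by an MLP (or by a parallel AGG with value $1$), so we rescale to recover $\vv_p$ exactly.

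Composing a constant number $K+J=O(1)$ of these COPY modules (run in parallel over $O(1)$ attention heads, each maintaining the one-hot block register $\ve_{\lceil j/m\rceil}$ and its own query register $\ve_{q_i}$ required by \cref{lemma:linear transformer aggregation}) retrieves every argument of the transition; the subsequent MLP then computes $\mathsf{dp}(i)$ and the next index, completing one autoregressive step. Since $d=O(1)$ and $m=\Theta(\sqrt L)$, every intermediate embedding has dimension $O(m)=O(\sqrt L)$, the depth and head count are constants fixed by $K+J$ and the (constant) number of layers per building block, and all parameter magnitudes remain $\mathrm{poly}(L)$, consistent with log-precision. I would close the argument by propagating the per-layer $\ell_\infty$ errors: each lemma attains error $\le\epsilon$ with $\mathrm{poly}(1/\epsilon)$-sized parameters, and since all stored quantities are integers, a fixed small $\epsilon$ (e.g.\ $1/4$) lets the round-to-nearest-integer readout recover exact values, exactly as in \citet{feng2023towards}.

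The main obstacle is step (i): linear attention natively produces only block \emph{averages}, never individual entries, so the whole construction hinges on the offset-tagging trick that converts a block average into a recoverable per-slot representation without pushing the dimension past $O(\sqrt L)$. The delicate bookkeeping lies in getting the $1/|\mathcal S_i|$ normalization right when the queried block is the current, partially filled one, and in maintaining the one-hot block and query registers demanded by \cref{lemma:linear transformer aggregation} across layers; once these are in place, everything reduces to the already-established MLP and aggregation lemmas.
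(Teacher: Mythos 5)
Your proposal is correct and follows essentially the same route as the paper's own proof: decompose positions into $\Theta(\sqrt L)$ blocks of size $\Theta(\sqrt L)$ via \cref{lemma:MLP_devision} and the one-hot lemmas, place each token's embedding into the slot indexed by its within-block offset, average over the target block with the linear-attention AGG operation (\cref{lemma:linear transformer aggregation}), undo the $1/|\mathcal S_i|$ normalization with a multiplication plus conditional selection for partially filled blocks, extract the target slot by the one-hot offset vector, and finish with the transition MLPs guaranteed by \cref{ass:dp_approximation,ass:dp_next_index}. The paper's Block 3 construction is precisely this offset-tagged COPY, so the two arguments coincide in all essentials.
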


\begin{proof}
Denote $W=\lceil\sqrt{L}\rceil$. In this formulation, the embeddings are $\vx^{(0)}_k=(\ve^{\text{input}}_k, \ve^{\text{state}}_k, \ve^{\text{dp}}_k, \ve^{\text{answer}}_k,k,1)$, where $\ve^{\text{input}}_k, \ve^{\text{state}}_k, \ve^{\text{dp}}_k, \ve^{\text{answer}}_k$ corresponds to input token, DP state, DP value, final result. The embeddings are $\mathbf{0}$ if it's not defined. We construct the layers as follows.

\textbf{Block 1}. According to Assumption \ref{ass:dp_next_index}, we can use constant layers of MLPs to obtain $\ve^{\text{next\_state}}_k$ with input $\ve^{\text{state}}_k$. This can be done by setting zero weight matrices in attention layers and discard attention output in linear projection of MLP.
The output of this block is $\vx^{(1)}_k=(\ve^{\text{input}}_k, \ve^{\text{state}}_k,\ve^{\text{next\_state}}_k, \ve^{\text{dp}}_k,k,1)$.

\textbf{Block 2}. 
The second layer of the Transformer finishes the following tasks:
\begin{itemize}
    \setlength{\itemsep}{0pt}
    \item Calculate $\vh(\ve^{\text{next\_state}}_k),\vg(\ve^{\text{next\_state}}_k)$. 
    \item Calculate $f^{\text{state}}_k$ as the indicator variable of $\ve^{\text{state}}_k$ is the last state.
    \item Calculate $(\vh(\ve^{\text{next\_state}}_k))^2,(\vg(\ve^{\text{next\_state}}_k))^2,(\ve^{\text{state}}_k)^2$, which are element-wise square operations, and $k^2$.
\end{itemize}
The first task can be implemented by several layers of MLPs by Assumption \ref{ass:dp_approximation}. To perform the second task, we can check whether $\ve^{\text{state}}_k\neq\mathbf 0$ and $\ve^{\text{next\_state}}_k=\mathbf 0$. The last task can be done with an MLP using Lemma \ref{lemma:MLP_multip}. The output of this block is
\begin{align*}
    \vx^{(2)}_k=(&\ve^{\text{input}}_k, \ve^{\text{state}}_k,\ve^{\text{next\_state}}_k, \ve^{\text{dp}}_k,\ve^{\text{sep}}_k,\vh(\ve^{\text{next\_state}}_k),\vg(\ve^{\text{next\_state}}_k),\\
    &(\vh(\ve^{\text{next\_state}}_k))^2,(\vg(\ve^{\text{next\_state}}_k))^2,(\ve^{\text{state}}_k)^2,f^{\text{state}}_k,  k,k^2,1).
\end{align*}

\textbf{Block 3}. The third block of the Transformer uses $K+J$ heads to perform the following tasks where $K$ and $J$ are defined in (\ref{eq:dp_transition}), refers to the transition in DP:
\begin{itemize}
    \setlength{\itemsep}{0pt}
    \item Get inputs of $s_{g_1(i)},\cdots,s_{g_J(i)}$ where $i$ corresponds to $\ve^{\text{next\_state}}_k$. 
    \item Get DP value of $\mathsf{dp}(h_1(i)),\cdots,\mathsf{dp}(h_K(i))$ for $i$ corresponds to $\ve^{\text{next\_state}}_k$. 
    \item Calculate $\ve^{\text{next\_dp}}_k$, i.e., the DP value of the next state, $\ve^{\text{next\_state}}_k$.
\end{itemize}

    To perform the first two tasks, we need to first calculate the absolute position of the embeddings we want.
    This can be implemented by several MLPs by Assumption \ref{ass:dp_approximation} and problem size $n$.
    Suppose the absolute positions are $\hat g_1,\cdots,\hat g_J,\hat h_1,\cdots,\hat h_K$. 
    Then we can do the following tasks for $t=\hat g_1,\cdots,\hat g_J,\hat h_1,\cdots,\hat h_K$:
    \begin{enumerate}
        \item Calculate $\lceil t/W\rceil$, $W-(\lceil t/W\rceil\times W-t)$ by Lemma \ref{lemma:MLP_devision}.
        \item Calculate $\ve_s\in\sR^W$ for $s=\lceil t/W\rceil,W-(\lceil t/W\rceil\times W-t)$ by Lemma \ref{lemma:MLP_onehot1}, \ref{lemma:MLP_onehot2} and the fact that $\ve_i=\vm_i\odot \vn_i$.
    \end{enumerate}
    We also calculate $\ve_s\in\sR^W$ for $s=\lceil k/W\rceil,W-(\lceil k/W\rceil\times W-k)$ by Lemma \ref{lemma:MLP_onehot1}, \ref{lemma:MLP_onehot2}. Then we can get 
    \begin{align*}
        (\sI[W-(\lceil k/W\rceil\times W-k)=1]\cdot \vx^{(2)}_k, \cdots, \sI[W-(\lceil k/W\rceil\times W-k)=1]\cdot  \vx^{(2)}_k)
    \end{align*}
    by the multiplication between $\ve_{W-(\lceil k/W\rceil\times W-k)}$ and $ \vx^{(2)}_k$ which can be implemented by an MLP with hidden dimension $O(W)$.
    Then, we can get $\vx^{(2)}_{t}$ for $t=\hat g_1,\cdots,\hat g_J,\hat h_1,\cdots,\hat h_K$ in following steps:
    \begin{enumerate}
        \item Get 
        \begin{align*}\left\{
            \begin{aligned}
                &\frac{1}{W}(\vx^{(2)}_{\lceil t/W\rceil\times W-(W-1)}, \cdots, \vx^{(2)}_{\lceil t/W\rceil\times W}) , &i\geq \lceil t/W\rceil\times W\\
                &\frac{1}{i-\lceil t/W\rceil\times(W-1)}(\vx^{(2)}_{\lceil t/W\rceil\times W-(W-1)}, \cdots, \vx^{(2)}_{i},\bm{0},\cdots,\bm{0}), &i< \lceil t/W\rceil\times W
            \end{aligned}\right.
        \end{align*}
        for $t=\hat g_1,\cdots,\hat g_J,\hat h_1,\cdots,\hat h_K$ by using Lemma \ref{lemma:linear transformer aggregation}.
        \item Get
        \begin{align*}\left\{
            \begin{aligned}
            &(\vx^{(2)}_{\lceil t/W\rceil\times W-(W-1)}, \cdots, \vx^{(2)}_{\lceil t/W\rceil\times W}), &i\geq \lceil t/W\rceil\times W\\
            &(\vx^{(2)}_{\lceil t/W\rceil\times W-(W-1)}, \cdots, \vx^{(2)}_{i},\bm{0},\cdots,\bm{0}), &i< \lceil t/W\rceil\times W
            \end{aligned}\right.
        \end{align*}
        for $t=\hat g_1,\cdots,\hat g_J,\hat h_1,\cdots,\hat h_K$ by using Lemma \ref{lemma:MLP_multip} for multiplication first, and use Lemma \ref{lemma:MLP_select} to implement a conditional selection.
        \item Get $\vx^{(2)}_{t}$ for $t=\hat g_1,\cdots,\hat g_J,\hat h_1,\cdots,\hat h_K$ by multiplication between the above vector and $\ve_{W-(\lceil t/W\rceil\times W-t)}$, then sum up them. Then we can get $s_{g^{(\vn)}_1(i)},\cdots,s_{g^{(\vn)}_J(i)};\mathsf{dp}(h^{(\vn)}_1(i)),\cdots,\mathsf{dp}(h^{(\vn)}_K(i))$ by linear projection.
    \end{enumerate}
    If the corresponding field is undefined, we can mark them as some special value. This can be done by selection using MLP (Lemma \ref{lemma:MLP_select}). Assumption \ref{ass:dp_approximation} indicates that we can calculate the function $f$ (defined in (\ref{eq:dp_transition})) using an MLP. The output of this layer is 
    \begin{equation*}
        \vx^{(3)}_k=(\ve_k^{\text{next\_state}}, \ve_k^{\text{dp}}, \ve_k^{\text{next\_dp}}, \vn_k,f^{\text{state}}_k
        , k,1).
    \end{equation*}

    \textbf{Block 4}.
    The fourth block of the autoregressive transformer generates the output based on the flag $f^{\text{state}}_k$.
    We calculate $u(\ve_k^{\text{dp}})$ to get the final result. If $f^{\text{state}}_k=1$, then we select $u(\ve_k^{\text{dp}})$ as the output; otherwise, we prepare and output the DP result for the next state, i.e., $\ve_k^{\text{next\_state}}$ and $\ve_k^{\text{next\_dp}}$.
    This is a conditional selection operation and thus can be implemented by an MLP (Lemma \ref{lemma:MLP_select}).
\end{proof}

\begin{theorem}
    Consider any DP problem satisfying the same condition as in \cref{thm:baseline}. Given any integer $n>0$, let $L$ be the length of the output sequence when the input sequence length is $n$. Then, there is a (log-precision) sparse Transformer with block size $B=\Theta(\sqrt L)$ with a constant depth $M$, a constant number of attention heads $H$, and a hidden dimension $D=\mathrm{O}(\sqrt L)$ that can generate the correct output for all inputs $\vs$ of length $n$.
\end{theorem}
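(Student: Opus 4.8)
The plan is to mirror the four-block construction just used for the Linear Transformer, keeping Blocks 1, 2, and 4 essentially intact and replacing only the information-gathering step of Block 3: where the linear case invoked the AGG operation (\cref{lemma:linear transformer aggregation}), here I invoke the Sparse COPY operation (\cref{lemma:sparse_copy}). I set the block size $B=\lceil\sqrt L\rceil=\Theta(\sqrt L)$ and take the initial embeddings $\vx^{(0)}_k=(\ve^{\text{input}}_k,\ve^{\text{state}}_k,\ve^{\text{dp}}_k,\ve^{\text{answer}}_k,k,1)$ as before, now additionally carrying the block coordinates $k\bmod B$ and $\lceil k/B\rceil$ that the COPY operation expects in its input format.

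First, Block 1 uses MLPs (\cref{ass:dp_next_index}) to produce the next state $\ve^{\text{next\_state}}_k$ from $\ve^{\text{state}}_k$ with attention weights set to zero, exactly as in the linear case. Block 2 then applies constant-size MLPs (\cref{ass:dp_approximation}) to compute $\vh(\ve^{\text{next\_state}}_k)$ and $\vg(\ve^{\text{next\_state}}_k)$ together with the flag $f^{\text{state}}_k$ marking whether the current state is the last one; because the sparse module retrieves by exact index rather than by kernel-based averaging, the auxiliary squared quantities needed in the linear construction are not required here, so Block 2 is a strict simplification.

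The heart of the argument is Block 3. Using MLPs together with the problem size $n$ (\cref{ass:dp_approximation}), I compute the absolute target positions $\hat g_1,\dots,\hat g_J,\hat h_1,\dots,\hat h_K$ associated with the state $i$ encoded by $\ve^{\text{next\_state}}_k$. For each target $t\in\{\hat g_1,\dots,\hat h_K\}$, I set up an input of the form $(\,\cdot\,,t,k\bmod B,\lceil k/B\rceil,1)$ and apply the Sparse COPY operation of \cref{lemma:sparse_copy}, which retrieves the embedding stored at index $t$ — namely $s_{g_j(i)}$ or $\mathsf{dp}(h_k(i))$ — using three sparse-attention layers of block size $B$ and working embedding dimension $O(B)=O(\sqrt L)$ (each copied payload being constant-dimensional). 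Since $J+K=O(1)$, these retrievals run in parallel over a constant number of heads and layers, so the depth stays constant. A final MLP then evaluates the transition function $f$ of \cref{eq:dp_transition} on the gathered arguments to produce $\ve^{\text{next\_dp}}_k$, and Block 4 is identical to the linear case: it computes $u(\ve^{\text{dp}}_k)$ and uses a selection MLP (\cref{lemma:MLP_select}) keyed on $f^{\text{state}}_k$ to emit either the final answer or the next $(\text{state},\text{dp})$ pair.

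I expect the main obstacle to be confirming that index-based copying genuinely reaches an \emph{arbitrary} earlier position under the restricted attention pattern of \cref{eq:sparseattention}, rather than only positions within the last $kB$ tokens. This is exactly the two-level routing packaged in \cref{lemma:sparse_copy}: its Layer 1 spreads $\vv_i$ across $B$ coordinate slots, Layer 2 aggregates a full block (which is where the $O(B)$ dimension budget is spent), and Layer 3 performs the within-block selection, so a target in any block $\lceil t/B\rceil$ becomes reachable. The remaining check is that the positional query/key encoding meets the separation conditions of \cref{ass:regular} — i.e. the $\rho,\delta$ margins — so that COPY selects the unique token at index $t$; this follows by encoding positions as bounded integers and choosing the margins as in the hypotheses of \cref{lemma:sparse_copy}.
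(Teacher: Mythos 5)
Your proposal matches the paper's own proof essentially step for step: the paper likewise reuses Blocks 1--2 from the Linear Transformer construction, computes the absolute positions $\hat g_1,\dots,\hat g_J,\hat h_1,\dots,\hat h_K$ with MLPs, retrieves the required inputs and DP values via the Sparse COPY operation of \cref{lemma:sparse_copy} in Block 3, evaluates the transition function $f$ with an MLP, and produces the output through the selection MLP of \cref{lemma:MLP_select} in Block 4. Your explicit accounting of the $O(B)=O(\sqrt{L})$ working dimension spent inside the COPY construction, and your observation that the squared auxiliary quantities of Block 2 are unnecessary in the sparse case, are if anything slightly more careful than the paper's write-up.
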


\begin{proof}
    The construction of the first two blocks is the same as the linear transformer, and we will give the construction of the third block and the fourth block for the sparse transformer.

    \textbf{Block 3}. The third block of the Transformer uses $K+J$ heads to perform the following tasks where $K$ and $J$ are defined in (\ref{eq:dp_transition}), refers to the transition in DP:
\begin{itemize}
    \setlength{\itemsep}{0pt}
    \item Get inputs of $s_{g_1(i)},\cdots,s_{g_J(i)}$ where $i$ corresponds to $\ve^{\text{next\_state}}_k$. 
    \item Get DP value of $\mathsf{dp}(h_1(i)),\cdots,\mathsf{dp}(h_K(i))$ for $i$ corresponds to $\ve^{\text{next\_state}}_k$. 
    \item Calculate $\ve^{\text{next\_dp}}_k$, i.e., the DP value of the next state, $\ve^{\text{next\_state}}_k$.
\end{itemize}

    To perform the first two tasks, we need to first calculate the absolute position of the embeddings we want.
    This can be implemented by several MLPs by Assumption \ref{ass:dp_approximation} and problem size $n$.
    Supposing the absolute positions are $\hat g_1,\cdots,\hat g_J,\hat h_1,\cdots,\hat h_K$, we can perform the COPY operation by Lemma~\ref{lemma:sparse_copy}, and we can obtain the input $s_{g_1(i)},\cdots,s_{g_J(i)}$ and the DP value of $\mathsf{dp}(h_1(i)),\cdots,\mathsf{dp}(h_K(i))$ for $i$ corresponds to $\ve^{\text{next\_state}}_k$. With the inputs and DP value, according to Assumption \ref{ass:dp_approximation}, we can calculate the function $f$ (defined in (\ref{eq:dp_transition})) using an MLP. The output of this layer is 
    \begin{equation*}
        \vx^{(3)}_k=(\ve_k^{\text{next\_state}}, \ve_k^{\text{dp}}, \ve_k^{\text{next\_dp}}, \vn_k,f^{\text{state}}_k
        , k,1).
    \end{equation*}

    \textbf{Block 4}.
    The fourth block of the sparse transformer generates the output based on the flag $f^{\text{state}}_k$. We calculate $u(\ve_k^{\text{dp}})$ to get the final result. If $f^{\text{state}}_k=1$, then we select $u(\ve_k^{\text{dp}})$ as the output; otherwise, we prepare and output the DP result for the next state, i.e., $\ve_k^{\text{next\_state}}$ and $\ve_k^{\text{next\_dp}}$.
    This is a conditional selection operation and thus can be implemented by an MLP (Lemma \ref{lemma:MLP_select}).
\end{proof}

\subsection{The proof of \cref{thm:general_lower_bound}}
\label{appendix:lower_dp}

In this subsection, we provide a complete proof of \cref{thm:general_upper_bound}. For clarity, we restate the theorem for the linear transformer and sparse transformer individually and provide their respective full proofs.

\begin{theorem}
    Consider any regular DP problem satisfying the same condition as in \cref{thm:baseline}. Assume that the output sequence length $L$ is proportional to the input sequence length $n$, i.e., $L=\Theta(n)$. Then, given a sufficiently large $n$, for (log-precision) linear Transformer, a model with a constant depth $M$ and a constant number of attention heads $H$ can generate the correct output for all inputs $\vs$ of length $n$ only if the hidden dimension $D=\tilde\Omega(\sqrt L)$.
\end{theorem}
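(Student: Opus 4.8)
The plan is to exploit the recurrent (RNN) reformulation of linear attention from \cref{lemma: linear transformer as RNN} to isolate an information \emph{bottleneck}, and then run a pigeonhole argument against the regularity assumption. The guiding intuition is that, unlike softmax attention, linear attention compresses the entire prefix into a fixed-size running state, so everything the model generates after the input is a deterministic function of that state.

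First I would fix the boundary position $p$ at the end of the input (the separator ``$|$'') and define the bottleneck as the collection of accumulated hidden states $\mathcal B_p=\{(\vs_p^{(l,h)},\vz_p^{(l,h)}):l\in[M],h\in[H]\}$ that the linear-attention layers carry across positions. The central claim is that $\mathcal B_p$, together with the input-independent separator token, completely determines every generated output at positions $>p$. I would prove this by induction on the generated positions, with an inner induction over layers: given $\mathcal B_p$ one replays the forward pass at position $p$ layer by layer, since computing layer $l$ requires only $\vs_p^{(l,h)},\vz_p^{(l,h)}$ and the already-reconstructed lower-layer representation $\vx_p^{(l-1)}$, thereby recovering the prediction for token $p+1$; once that token is known, each hidden state updates deterministically via $\vs_{p+1}^{(l,h)}=\vs_p^{(l,h)}+\phi(\mW_\text{K}^{(l,h)}\vx_{p+1}^{(l-1)})(\mW_\text{V}^{(l,h)}\vx_{p+1}^{(l-1)})^\top$ (and similarly for $\vz$), so $\mathcal B_{p+1}$ is recoverable and the induction continues. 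Because the FFN and residual connections act pointwise and contribute no cross-position memory, $\mathcal B_p$ is the only channel through which the input can influence future tokens.

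Then comes the counting. Each head stores $\vs^{(l,h)}\in\sR^{\lceil D/H\rceil\times\lceil D/H\rceil}$ and $\vz^{(l,h)}\in\sR^{\lceil D/H\rceil}$, so with $M,H=O(1)$ the bottleneck $\mathcal B_p$ consists of $O(D^2)$ real entries. Under the log-precision assumption each entry takes at most $\mathrm{poly}(L)$ distinct values, so $\mathcal B_p$ ranges over at most $2^{O(D^2\log L)}$ distinct states. On the other hand there are at least $2^{\Omega(n)}$ distinct inputs of length $n$ (the input alphabet has size at least two). By \cref{def:regular}, any two distinct inputs differ in some $\mathsf{dp}(i)$ and hence produce different generated sequences; combined with the bottleneck claim this forces distinct inputs to induce distinct $\mathcal B_p$, i.e. the map from inputs to bottleneck states is injective. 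Pigeonhole then yields $2^{\Omega(n)}\le 2^{O(D^2\log L)}$, and substituting $L=\Theta(n)$ gives $D^2=\Omega(L/\log L)$, that is $D=\tilde\Omega(\sqrt L)$.

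The hard part will be establishing the bottleneck claim rigorously in the multi-layer autoregressive setting: one must verify that the layer-wise recurrence never needs information beyond $\mathcal B_p$ and the tokens generated so far, which is precisely where the linearity of the attention recurrence (rather than softmax) is essential, since it is what makes the per-position state a genuine finite-dimensional sufficient statistic. The counting and pigeonhole steps are then routine, the only delicate bookkeeping being how log-precision caps the number of representable hidden-state values; I would also take care that the accumulated magnitudes of $\vs,\vz$ stay within the $\mathrm{poly}(L)$ range that log-precision floating point can address.
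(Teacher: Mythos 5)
Your proposal is correct and follows essentially the same route as the paper's own proof: both use the RNN reformulation of linear attention (\cref{lemma: linear transformer as RNN}) to identify the accumulated per-layer hidden states at the end of the input as a finite-dimensional bottleneck that determines the entire CoT continuation, invoke regularity to force this input-to-state map to be injective, and then count $O(D^2)$ log-precision entries against the $2^{\Omega(n)}$ distinct inputs to conclude $D = \tilde\Omega(\sqrt{L})$. The paper states this argument in compressed form; your write-up merely fills in the replay/induction and pigeonhole details explicitly.
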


\begin{proof}
By Lemma \ref{lemma: linear transformer as RNN}, we can know the hidden states corresponding to each Linear Transformer layer and the last input token (which is a fixed special token) determines the CoT output. Thus, the size of hidden states should be at least $\Omega(n)=\Omega(L)$. 
By the regularity assumption, we know that different input should corresponds to different hidden states.
This implies that the hidden dimension should be at least $\Omega(\sqrt{L/\log L})=\tilde\Omega(\sqrt{L})$, concluding our proof.
\end{proof}

\begin{theorem}

    Consider any regular DP problem satisfying the same condition as in \cref{thm:baseline}. Assume that the output sequence length $L$ is proportional to the input sequence length $n$, i.e., $L=\Theta(n)$. Then, given a sufficiently large $n$, for (log-precision) sparse Transformer with block size $B=\Theta(\sqrt L)$, a model with a constant depth $M$ and a constant number of attention heads $H$ can generate the correct output for all inputs $\vs$ of length $n$ only if the hidden dimension $D=\tilde\Omega(\sqrt L)$.
\end{theorem}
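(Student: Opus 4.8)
The plan is to mirror the information-bottleneck strategy used for the Linear Transformer just above, but with a bottleneck tailored to the block-sparse attention pattern of \cref{eq:sparseattention}. I would place a ``cut'' at the first generated position $p=n+1$ and exhibit a small set of neuron values $\mathcal B$ whose values completely determine every output at positions $\ge p$, i.e.\ the entire DP sequence. Concretely, $\mathcal B$ collects the all-layer embeddings $\vx_j^{(l)}$ ($0\le l\le M$) of two groups of positions lying before the cut: the \emph{global tokens}, i.e.\ positions $j<p$ with $(j-1)\bmod B\ge B-c$, of which there are at most $c\lceil p/B\rceil=\Theta(\sqrt L)$; and the \emph{local window} $[p-MkB,\,p)$ at the cut, of size $MkB=\Theta(\sqrt L)$. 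Since $M,k,c$ are constants and $B=\Theta(\sqrt L)$, the bottleneck comprises $\Theta(\sqrt L)$ tokens and hence $\Theta(\sqrt L\cdot D)$ real values.

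First I would prove the structural claim that, given $\mathcal B$, all outputs at positions $\ge p$ are determined. This is an induction that unrolls the autoregressive generation together with the $M$-layer forward pass: to compute $\vx_i^{(l)}$ for $i\ge p$ one needs $\vx_{i'}^{(l-1)}$ for $i'\in\gI_i$, where positions $i'\ge p$ are produced internally by the ongoing generation, while any $i'<p$ is either local to $i$ (so $i'\in(i-kB,i]\cap[1,p)\subseteq[p-kB,p)$) or a global token; both kinds are recorded, at every layer, in $\mathcal B$. The step I expect to be the main obstacle is arguing that one need \emph{not} also store the local windows \emph{around} each global token: although such positions lie in the receptive field of $i$, they are only ever accessed indirectly, through the global token's own embedding, which is already stored for all layers in $\mathcal B$. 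Making this ``the stored embedding already summarizes its own receptive field'' reasoning precise—so that the bottleneck stays $\Theta(\sqrt L\cdot D)$ rather than blowing up to cover a window around every global token—is the delicate part of the proof.

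With the structural claim in hand I would close by a counting argument identical in spirit to the Linear Transformer case. Consider the family of inputs $\vs$ of length $n$ with each $s_j$ ranging over at least two admissible values, giving $2^{\Omega(n)}=2^{\Omega(L)}$ distinct inputs. By regularity (\cref{def:regular}), any two distinct inputs differ in some $\mathsf{dp}(i)$, hence produce different outputs at positions $\ge p$; since $\mathcal B$ determines those outputs, distinct inputs must induce distinct values of $\mathcal B$. Under the log-precision assumption each of the $\Theta(\sqrt L\cdot D)$ entries of $\mathcal B$ holds $O(\log L)$ bits, so $\mathcal B$ takes at most $2^{O(\sqrt L\,D\log L)}$ values. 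Requiring $2^{O(\sqrt L\,D\log L)}\ge 2^{\Omega(L)}$ forces $\sqrt L\,D\log L=\Omega(L)$, i.e.\ $D=\Omega(\sqrt L/\log L)=\tilde\Omega(\sqrt L)$, the claimed bound. The only remaining bookkeeping is to verify that for large $n$ the window $[p-MkB,p)$ lies inside the input region, which holds since $MkB=\Theta(\sqrt n)=o(n)$.
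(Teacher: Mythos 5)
Your proposal is correct and follows essentially the same route as the paper's proof: both identify the bottleneck as the all-layer embeddings of the $\Theta(\sqrt L)$ pre-cut tokens reachable by generated positions (the global tokens plus the local window before the cut), bound its information content by $O(\sqrt L\, D\log L)$ bits under log-precision, and conclude via pigeonhole and regularity that $D=\tilde\Omega(\sqrt L)$. Your explicit induction over generation steps and layers (including the observation that stored global-token embeddings already summarize their own receptive fields) is simply a more careful rendering of what the paper asserts in a single sentence.
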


\begin{proof}
     Assuming $D=o(\frac{\sqrt L}{\ln L})$, we present a proof by contradiction. When the model generates the output sequence, the model attends to at most $\Theta(B)$ tokens in the input sequence, which are the last $c$ tokens of every block and the last $B$ tokens of the input sequence. The overall memory cost of all the hidden embeddings for these tokens is $\Theta(BD\log n)$ bits. According to \cref{eq:sparseattention}, given the same hidden embedding of these tokens in every layer, the model will execute the same computation and produce the identical sequence. Thus, the model can produce a maximum of $e^{\Theta(BD\log n)}=e^{o(L)}$ output sequence types given the hidden dimension $D=o(\frac{\sqrt L}{\ln L})$. However, for an input sequence of length $L$, there are corresponding $e^{\Theta(L)}$ input and output sequence types. According to the pigeonhole principle, this implies the existence of two distinct input sequences that the model generates the same output sequence. According to regularity assumption, there is an input sequence that the model generates an incorrect output sequence. Therefore, we have determined that $D=\Omega(\frac{\sqrt L}{\ln L})=\tilde\Omega(\sqrt L)$.
\end{proof}
\section{Proofs of the theorems in \cref{sec:arith}}

In this section, we will prove the theorems and propositions outlined in Section \ref{sec:arith}. Firstly, we will provide a precise and formal definition of the problems and statements for each theorem and proposition. Subsequently, we will offer a thorough proof for each theorem and proposition.

This paper utilizes the identical formulation of the arithmetic evaluation task and CoT solution presented by \citet{feng2023towards}. The arithmetic evaluation task is denoted as $\mathsf{Arithmetic}(n,p)$ and is defined on the finite field modulo $p$, with the input length not exceeding $n$. The CoT solution can be formally defined as follows: for any arithmetic expression, there must exist a handle, which refers to a pair of adjacent numbers connected by an operator that can be evaluated. In the CoT method, we solve the leftmost variable first in each step and connect the equations of each step using the equal sign.

\subsection{Proofs of \cref{thm:linear_arithmetic}}

In this section, we will give a proof of \cref{thm:linear_arithmetic}.

\begin{theorem}
    For any integer $n$, a log-precision linear Transformer with a constant depth $M$ and a constant number of heads $H$ can generate the correct output for the arithmetic evaluation task for all expressions of length no more than $n$ only if the hidden dimension $D=\tilde\Omega(\sqrt[4] L)$.
\end{theorem}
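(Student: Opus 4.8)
The plan is to reuse the information-theoretic \emph{bottleneck} argument that underlies the Linear Transformer case of \cref{thm:general_lower_bound}, but with one decisive change: for the arithmetic task the input length $n$ and the output (CoT) length $L$ are related \emph{quadratically} rather than linearly, and this is exactly what degrades the exponent of the lower bound from $\sqrt L$ to $\sqrt[4] L$. First I would invoke \cref{lemma: linear transformer as RNN} to pin down the bottleneck. Rewriting each layer in its recurrent form, the state of layer $l$ at the separator position (the fixed special token ``$|$'' that ends the input) is a pair $(\vs^{(l)},\vz^{(l)})$ consisting of an $O(D/H)\times O(D/H)$ matrix together with an $O(D/H)$-vector. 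Summed over the constant number of layers $M$ and heads $H$, the separator state carries $O(D^2)$ real numbers, i.e.\ $O(D^2\log L)$ bits under the log-precision assumption. Because generation is autoregressive and deterministic, this collection of separator states, together with the fixed separator embedding, \emph{completely determines} the entire generated CoT; hence two inputs with different correct outputs must induce different separator states.

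Second I would establish the quadratic relation and exhibit many distinct outputs at once. Consider the sub-family of expressions $d_1+d_2+\cdots+d_m$ with single digits $d_i\in\{1,\dots,9\}$ and $m=\Theta(n)$ (so the length is $2m-1\le n$). Evaluating the leftmost handle repeatedly produces $\Theta(m)$ intermediate expressions whose lengths shrink linearly from $\Theta(m)$ down to $\Theta(1)$ (each partial sum lives in the field and contributes only $\tilde O(1)$ characters), so the CoT has length $L=\tilde\Theta(m^2)=\tilde\Theta(n^2)$, giving $n=\tilde\Theta(\sqrt L)$. Moreover, the very first generated block already copies out the entire operand suffix $d_3,\dots,d_m$ verbatim, so any two expressions differing in $(d_3,\dots,d_m)$ must yield distinct CoTs; this furnishes at least $9^{\,m-2}=e^{\Omega(n)}=e^{\Omega(\sqrt L)}$ distinct correct outputs. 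Note that this direct construction sidesteps the regularity assumption, which is why it is not needed here.

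Finally I would close with pigeonhole. A correct model must realize $e^{\Omega(\sqrt L)}$ distinct outputs, hence must occupy at least that many distinct separator states; but the number of representable separator states is at most $e^{O(D^2\log L)}$. Therefore $D^2\log L=\Omega(\sqrt L)$, which rearranges to $D=\Omega\!\big(L^{1/4}/\sqrt{\log L}\big)=\tilde\Omega(\sqrt[4] L)$.

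The main obstacle I anticipate is not the counting, which is routine once the setup is fixed, but making the bottleneck rigorous across \emph{multiple} layers: one must verify that the tuple of layer-wise recurrent states at the separator is a genuine \emph{cut}, i.e.\ that everything produced from position $n{+}1$ onward (including the layer-by-layer forward pass that emits each new token and then folds it back into every layer's state) is a deterministic function of those $M$ separator states alone. The secondary subtlety is confirming $L=\tilde\Theta(n^2)$ for the task as a whole — the chosen additive family attains it and no length-$n$ expression produces more than $\tilde O(n^2)$ output — since the exponent of the final bound hinges entirely on this quadratic relationship.
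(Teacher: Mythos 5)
Your proof takes essentially the same route as the paper's: both use the recurrent (RNN-style) reformulation of linear attention to identify the post-input hidden states as an $O(D^2\log L)$-bit information bottleneck that determines the whole CoT, count $e^{\Omega(n)}=e^{\Omega(\sqrt{L})}$ distinct correct outputs among length-$n$ expressions, and conclude $D^2\log L=\Omega(\sqrt{L})$, i.e.\ $D=\tilde\Omega(\sqrt[4]{L})$, by pigeonhole. Your write-up is somewhat more careful than the paper's terse argument --- the explicit family $d_1+d_2+\cdots+d_m$ simultaneously certifies the quadratic relation $L=\Theta(n^2)$ and the distinctness of the generated CoTs, which the paper only asserts via ``different results of the first step'' --- but the underlying idea is identical.
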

\begin{proof}
Notice that the generated CoT sequence can be determined by the result of the first step. The different possibilities of results for the first step are at least $\Omega(\exp(n))$ since each $a_1\text{ OP}_1\text{ }a_2\cdots $ are legal where $a_i\in\sF_p$ and $\text{OP}_i\in\{+,-,\times,\div\}$.
On the other hand, the hidden state corresponding to the input sequence determines the output CoT sequence. 
Thus, the size of hidden state should be at least $\Omega(n)=\Omega(\sqrt{L})$. This means that the hidden dimension should be at least $\Omega(\sqrt{\sqrt{L}/\log L})=\tilde{\Omega}(\sqrt[4] L)$, which ends our proof.
\end{proof}

\subsection{Proofs of \cref{thm:sparse_locality}}
In this subsection, we will prove \cref{thm:sparse_locality}, which is a corollary of the theorem from \citet{feng2023towards} such that the sparse transformer can generate the correct output for the DP problem. 

\begin{theorem}[From \citet{feng2023towards}]
    For any DP problem, any integer $n\in\mathbb N$, there exists an autoregressive Transformer with constant depth $L$, hidden dimension $d$ and attention heads $H$ (independent of $n$), such that the answer generated by the Transformer is correct for all input sequences $\vs$ of length no more than $n$. Moreover, all parameter values are bounded by $O(\mathrm{poly}(n))$.
\end{theorem}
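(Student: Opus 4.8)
The plan is to give an explicit constant-size construction that mirrors the four-block architecture used in the upper-bound proof of \cref{thm:general_upper_bound}, but now exploiting the fact that a \emph{standard} self-attention layer can attend to all previous positions at once. Because no windowing or dimension-packing is required, the standard \textup{COPY} operation of \cref{lemma:standard_copy} retrieves any previously generated token at an arbitrary absolute index using only $O(d)$ embedding size; this is exactly what allows the depth, hidden dimension, and head count to remain independent of $n$, in contrast to the $\Theta(\sqrt L)$ blow-up forced on the efficient variants. I would adopt the output format of \cref{eq:dp_format}, so that at each generated position the embedding carries the current DP state, its value $\mathsf{dp}(i)$, and the positional index $k$.

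First I would build Block~1: by \cref{ass:dp_next_index}, a constant-size MLP maps the current state embedding to the next state $\ve^{\text{next\_state}}_k$, with the attention sublayer made inert (zero query/key/value weights). Block~2 then computes, via \cref{ass:dp_approximation}, the retrieval indices $h_1(i),\dots,h_K(i)$ and $g_1(i),\dots,g_J(i)$ attached to the next state in \cref{eq:dp_transition}, together with a boolean flag $f^{\text{state}}_k$ marking whether $\ve^{\text{state}}_k$ is the terminal state. The one-hot and division lemmas (\cref{lemma:MLP_onehot1,lemma:MLP_onehot2,lemma:MLP_devision}) together with the multiplication lemma (\cref{lemma:MLP_multip}) put these absolute target positions into the form the attention layer consumes.

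The heart of the argument is Block~3. With the absolute positions $\hat g_1,\dots,\hat g_J,\hat h_1,\dots,\hat h_K$ in hand, I would invoke the standard \textup{COPY} operation (\cref{lemma:standard_copy}) with $K+J$ heads to fetch the input tokens $s_{g_j(i)}$ and the DP values $\mathsf{dp}(h_k(i))$ straight from their positions. A constant-size MLP then evaluates the transition function $f$ of \cref{eq:dp_transition} (\cref{ass:dp_approximation}) to form $\ve^{\text{next\_dp}}_k$. Finally, Block~4 is a conditional selection (\cref{lemma:MLP_select}): if $f^{\text{state}}_k=1$ it emits $u(\ve^{\text{dp}}_k)$ as the final answer, otherwise it emits the next state and its value to continue the autoregressive rollout. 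Every weight produced is polynomial in $n$, and since each neuron stores only $O(\log n)$-bit numbers the log-precision regime is respected.

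The hard part will be realizing the index-based retrieval within the regularity conditions of \cref{ass:regular}. I must design the query/key maps so that, for the query at position $k$ targeting an absolute index $t$, the dot product $\vq_k\cdot\vk_j$ falls below $\rho$ exactly when $j=t$ and drops to at most $-\delta$ for every other $j$, while the ratings $r_j$ are pairwise separated by at least $\delta$. Encoding positions through the positional embedding (whose values range over $O(\mathrm{poly}(n))$) and choosing queries that isolate the target index, with the rating favoring the unique matching position, yields the required $\rho$/$\delta$ separation, so that \cref{lemma:standard_copy} applies with constant embedding size and the whole construction closes.
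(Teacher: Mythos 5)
The statement you are proving is one the paper itself never proves: it is imported verbatim from \citet{feng2023towards} and used as a black box in the derivation of \cref{thm:sparse_locality}. So your proposal can only be judged against the paper's closest analogue, the four-block construction behind \cref{thm:general_upper_bound}, and in outline you follow exactly that pipeline with the sparse/linear retrieval machinery swapped out for the standard COPY operation of \cref{lemma:standard_copy}. That substitution is indeed the correct source of the constant size, and Blocks 1, 3 and 4 of your sketch are sound.

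However, Block 2 as you wrote it breaks the very theorem you are proving. You invoke \cref{lemma:MLP_onehot1}, \cref{lemma:MLP_onehot2} and \cref{lemma:MLP_devision} to put the target positions ``into the form the attention layer consumes.'' Those lemmas are precisely the dimension-packing tools the paper needs for its $\mathrm{O}(\sqrt L)$-dimension efficient-transformer constructions: the one-hot lemmas output vectors in $\sR^n$ with hidden dimension $2n$, and the division lemma needs hidden dimension $\mathrm{O}(n)$. Any construction that routes positions through them has hidden dimension growing with $n$, contradicting the constant-$d$ claim. For standard attention they are unnecessary: the attention layer needs only the scalar target index $t$ and its square. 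Using \cref{lemma:MLP_multip} to form $t^2$ (and $k^2$ on the key side, as in the paper's own Block 2), one sets up query $(-t^2, 2t, -1)$ against key $(1, j, j^2)$ so that $\vq_k\cdot\vk_j = -(t-j)^2$, which equals $0$ exactly at $j=t$ and is at most $-1$ otherwise; this supplies the $\rho$/$\delta$ separation of \cref{ass:regular} with a constant number of coordinates. Your closing paragraph asserts exactly this constant-size behavior, so the writeup is internally inconsistent; the fix is to delete the one-hot/division lemmas from Block 2 and let the quadratic query/key trick carry the retrieval. With that amendment, plus the standard observation that rounding every emitted coordinate to the nearest integer prevents approximation errors from accumulating across autoregressive steps, the construction closes.
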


\begin{theorem}
\label{thm:standard_dp}
    Consider any $m$-locality DP problem satisfying the same condition as in \cref{thm:baseline}. Given any integer $n>0$, let $L$ be the length of the output sequence when the input sequence length is $n$. Then, there exists a (log-precision) sparse Transformer with block size $B=\Theta(m)$, a constant depth $M$, a constant number of attention heads $H$, and a constant hidden dimension $D$ that can generate the correct output for all inputs $\vs$ of length $n$.
\end{theorem}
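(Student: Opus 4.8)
The plan is to derive this as a corollary of the constant-size construction for general DP problems --- either the standard-Transformer result of \citet{feng2023towards} restated above, or equivalently the block construction (Blocks 1--4) used to prove \cref{thm:general_upper_bound}. The guiding idea is that all of the information a DP transition needs is, under $m$-locality, located in a bounded window of recent positions; hence every attention read can be confined to the \emph{local} part of the sparse attention pattern (\cref{eq:sparseattention}) once $B=\Theta(m)$, and no block-wise aggregation (the step that forced $D=\mathrm{O}(\sqrt L)$ in the general case) is required. I would organize the argument around three points: (i) isolating the reads, (ii) showing they are local, and (iii) realizing them with a constant-dimension COPY.

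First I would recall that, in the block construction, the only attention operations occur when solving the transition (\cref{eq:dp_transition}) at a position $i$: after computing the next state and the target positions via constant-size MLPs (Assumptions \ref{ass:dp_approximation} and \ref{ass:dp_next_index}), the model must retrieve the stored values $\mathsf{dp}(h_1(i)),\dots,\mathsf{dp}(h_K(i))$ (and the relevant input tokens), then apply the MLP approximating $f$ and emit the result by conditional selection (\cref{lemma:MLP_select}). Everything except the retrieval is MLP-based and already of constant size, so the proof reduces to showing the retrievals can be done by \emph{local} sparse attention.

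Next I would invoke the locality bound. In the unified indexing of (\cref{eq:dp_format}), the $m$-locality condition gives $h_k(i)\in[i-m,i)$ for every $k$ and every $i$. Choosing $B=\Theta(m)$ with the architectural constant $k$ large enough that $kB\ge m$, the local window $\{j: i-kB< j\le i\}$ contains the entire dependency range $[i-m,i)$ for \emph{every} $i$ simultaneously --- including the earliest DP states, which reach back into the input region precisely because $m\ge n$. Thus every read target lies in the attended set $\gS_i'$ of the sparse pattern. The retrieval is then realized by the COPY operation of \cref{lemma:standard_copy} applied with the sparse pattern: I would encode the desired absolute position $\hat h_k$ in the query and match it against the position encodings in the keys so that the matching set singles out the one target inside the window. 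Because this COPY concentrates almost all attention weight on that single token, the extra global tokens carried by the sparse pattern are filtered out by the matching criterion $\vq_i\cdot\vk_j<\rho$ (\cref{ass:regular}) and do not affect the output; crucially, it only manipulates constant-dimensional value vectors, so its embedding size is $\mathrm{O}(1)$. Substituting these local COPY layers for the block-wise retrieval yields a Sparse Transformer of constant depth $M$, constant heads $H$, and constant hidden dimension $D$.

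The main obstacle is the verification that confining attention to the local window is lossless for all positions at once. I would need to check that the translation from the dependency indices $h_k(i)$ to physical token positions keeps every target inside $[i-kB,i)$ --- handling the early blocks where the window is truncated at position $1$, and relying on $m\ge n$ so the input remains reachable --- and that the near-one-hot COPY is genuinely unaffected both by this truncation and by the presence of the global tokens. The selectivity guaranteed by \cref{ass:regular} (distinct ratings and a clean matching margin) is exactly what lets this step go through with a constant, rather than $\tilde\Omega(\sqrt L)$, hidden dimension.
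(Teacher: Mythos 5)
Your proposal is correct and follows essentially the same route as the paper: the paper likewise reduces to the constant-size standard-Transformer construction of \citet{feng2023towards}, arguing that under $m$-locality every attention read targets a position within distance $m$, so the sparse pattern with $B=\Theta(m)$ (i.e., $kB\ge m$) masks only tokens that receive (near-)zero attention and the constant-dimension construction carries over unchanged. Your write-up is in fact more careful than the paper's brief argument --- spelling out the $kB\ge m$ window containment, the near-one-hot COPY, and the harmlessness of global tokens and early-position truncation --- but it is the same reduction, not a different approach.
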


\begin{proof}
    Under the assumption of locality, we can treat the sparse transformer as a standard transformer to solve the $m$-locality DP problem. When the standard transformer solves the $m$-locality DP problem, the attention head will only attend to the tokens with the distance at most $m$, and the attention to other tokens is $0$. The sparse transformer adds masks to other tokens, and therefore, is equivalent to the standard transformer for $m$-locality DP problem. According to Theorem~\ref{thm:standard_dp}, the sparse transformer can solve $m$-locality DP problem.
\end{proof}

\subsection{Proofs of \cref{thm:linear_locality}}
In this section, we will give a proof of \cref{thm:linear_locality}, which is a corollary of \cref{thm:general_lower_bound}.
\begin{theorem}
    Consider any $m$-locality regular DP problem satisfying the same condition as in \cref{thm:baseline} and assume that $m=\Theta(n)$ where $n$ is the input sequence length. Then, a log-precision linear Transformer with a constant depth $M$ and a constant number of heads $H$ can generate the correct output for all inputs $\vs$ of length $n$ only if the hidden dimension $D=\tilde\Omega(\sqrt m)$.
\end{theorem}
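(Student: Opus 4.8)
The plan is to treat this statement as a direct corollary of the Linear Transformer case of \cref{thm:general_lower_bound}, re-running its counting argument but bookkeeping the window size $m$ in place of the full output length $L$. The central tool is \cref{lemma: linear transformer as RNN}, which rewrites each Linear Transformer layer as a recurrence in the hidden states $\vs_i$ and $\vz_i$. Since this recurrence is causal and generation is autoregressive, the collection of hidden states evaluated at the position of the final (fixed) input token — one pair $(\vs_i,\vz_i)$ per layer and per head — forms an \emph{information bottleneck}: together with the special separator token it deterministically fixes the first generated output, which is fed back as the next input token, which fixes the next output, and so on. Hence these hidden states alone determine the entire CoT sequence.

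First I would bound the capacity of this bottleneck. Each $\vs_i$ is an $\mathrm{O}(D)\times\mathrm{O}(D)$ matrix and each $\vz_i$ an $\mathrm{O}(D)$-vector, so with constant depth $M$ and constant head count $H$ the bottleneck consists of $\mathrm{O}(D^2)$ real entries. Under the log-precision assumption each entry stores $\mathrm{O}(\log L)$ bits, so the bottleneck carries at most $\mathrm{O}(D^2\log L)$ bits and can therefore take at most $\exp(\mathrm{O}(D^2\log L))$ distinct values. Next I would lower bound the number of \emph{required} distinct bottleneck values: since the input alphabet is nontrivial, there are $\exp(\Omega(n))$ input sequences of length $n$, and by the regularity assumption (\cref{def:regular}) any two distinct inputs differ in some $\mathsf{dp}(i)$ and hence produce distinct CoT outputs. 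By the determinism noted above, distinct outputs force distinct bottleneck values, so the model needs at least $\exp(\Omega(n))$ distinct bottleneck states.

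The pigeonhole principle then forces $\exp(\mathrm{O}(D^2\log L))\ge\exp(\Omega(n))$, i.e. $D^2\log L=\Omega(n)$. Invoking the hypothesis $m=\Theta(n)$, together with the fact that $L$ is polynomial in $n$ so that $\log L=\Theta(\log m)$, this rearranges to $D=\Omega(\sqrt{m/\log m})=\tilde\Omega(\sqrt m)$, as claimed. Note that the locality hypothesis does not tighten the bottleneck argument itself; it only resets the relevant scale, so that where \cref{thm:general_lower_bound} assumes $L=\Theta(n)$ and obtains $\tilde\Omega(\sqrt L)$, here $m=\Theta(n)$ yields $\tilde\Omega(\sqrt m)$.

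The one place demanding care — and the only real obstacle — is the accounting of the bottleneck size: one must verify that summing the RNN states across all (constantly many) layers and heads still contributes only $\mathrm{O}(D^2\log L)$ bits, and that feeding generated tokens back in as subsequent inputs cannot expand the reachable state set beyond what the initial bottleneck already encodes (this is exactly the determinism of the recurrence). Once this is pinned down, the remainder is the same pigeonhole counting used in the proof of \cref{thm:general_lower_bound}, specialized to the scale $m$.
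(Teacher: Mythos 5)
Your proposal is correct and follows essentially the same route as the paper, which likewise treats this statement as a corollary of the Linear Transformer lower bound in \cref{thm:general_lower_bound}: the RNN reformulation of linear attention (\cref{lemma: linear transformer as RNN}) gives an $\mathrm{O}(D^2\log L)$-bit bottleneck at the last input position, and regularity plus pigeonhole over the $\exp(\Omega(n))=\exp(\Omega(m))$ inputs forces $D=\Omega(\sqrt{m/\log m})=\tilde\Omega(\sqrt m)$. Your write-up is in fact more explicit than the paper's two-line version about the capacity accounting and the determinism of the fed-back generation, but the underlying argument is identical.
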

\begin{proof}
    Same as the arguments used in the proof of \cref{thm:general_upper_bound}, we can get the size of hidden states should be at least $\Omega(n)=\Omega(m)$. 
    The regularity assumption implies that the hidden dimension should be at least $\Omega(\sqrt{m/\log m})=\tilde\Omega(\sqrt{m})$, concluding our proof.
\end{proof}

\section{{Additional Experiments}}
\label{app:experiment}

{The accuracies of standard and efficient Transformers with 5 layers on the ED task are shown in Figure \ref{fig:results_5layers}. The results are similar to the results we obtained when using 3 layers. This evidence further supports our theoretical analysis.}

\begin{figure*}[!h]
    \centering
    \includegraphics[width=1.0\linewidth]{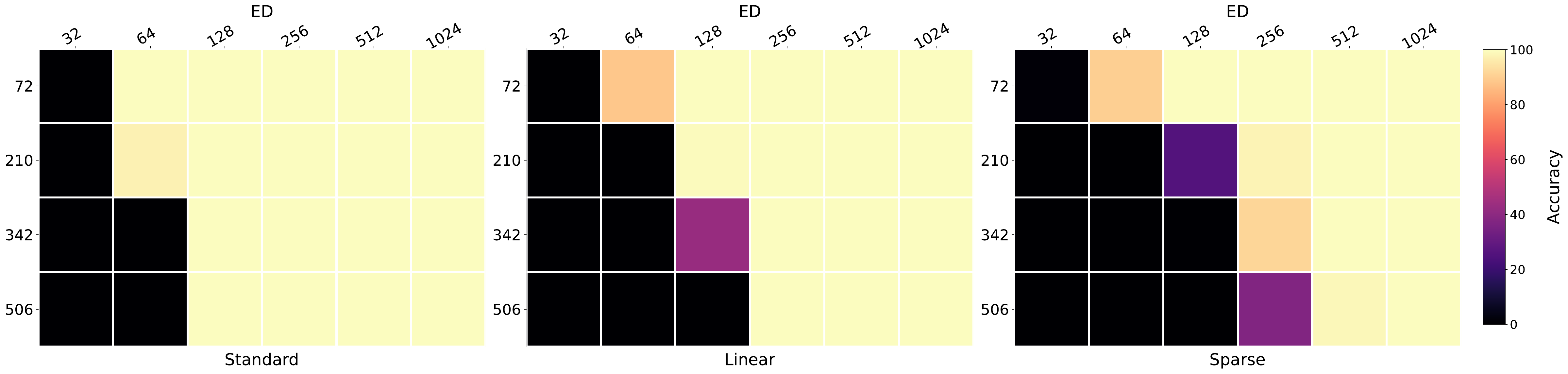}
    \caption{A comparison of accuracies on different model types with ED task. Each subplot corresponds to a model (Standard Transformer, Linear Transformer, Sparse Transformer). Within each subplot, the x-axis represents the embedding dimension, and the y-axis denotes the problem size. The color intensities indicate the accuracy level achieved by the respective models.}
    \label{fig:results_5layers}
    \vspace{-0.4cm}
\end{figure*}


\end{document}